\title{3DP3: 3D Scene Perception via\\Probabilistic Programming}
\author{%
Nishad Gothoskar$^1$
\And Marco Cusumano-Towner$^1$
\And Ben Zinberg$^1$
\AND Matin Ghavamizadeh$^1$
\And Falk Pollok$^2$
\And Austin Garrett$^1$
\AND Joshua B. Tenenbaum$^1$
\And Dan Gutfreund$^2$
\And Vikash K. Mansinghka$^1$
\AND
\vspace{-0.8cm}~\\
{$^1$MIT} \hspace{1cm} {$^2$MIT-IBM Watson AI Lab}\\
\texttt{\{nishad,marcoct,bzinberg,mghavami,jbt,vkm\}@mit.edu}\\
\texttt{\{falk.pollok,austin.garrett\}@ibm.com} \quad \quad \texttt{dgutfre@us.ibm.com}
}
\begin{document}

\maketitle
\vspace{-0.5cm}
\begin{abstract}
We present 3DP3, a framework for inverse graphics that uses inference in a structured generative model of objects, scenes, and images. 3DP3 uses (i) voxel models to represent the 3D shape of objects, (ii) hierarchical scene graphs to decompose scenes into objects and the contacts between them, and (iii) depth image likelihoods based on real-time graphics. Given an observed RGB-D image, 3DP3's inference algorithm infers the underlying latent 3D scene, including the object poses and a parsimonious joint parametrization of these poses, using fast bottom-up pose proposals, novel involutive MCMC updates of the scene graph structure, and, optionally, neural object detectors and pose estimators. We show that 3DP3 enables scene understanding that is aware of 3D shape, occlusion, and contact structure. Our results demonstrate that 3DP3 is more accurate at 6DoF object pose estimation from real images than deep learning baselines and shows better generalization to challenging scenes with novel viewpoints, contact, and partial observability.
\end{abstract}

\maketitle

\vspace{-0.5cm}
\section{Introduction}
\vspace{-1mm} 
A striking feature of human visual intelligence is our ability to
learn representations of novel objects from a limited amount of data
and then robustly percieve 3D scenes containing those objects. We can
immediately generalize across large variations in viewpoint,
occlusion, lighting, and clutter. How might we develop computational vision
systems that can do the same?

This paper presents a generative model for 3D scene perception, called
3DP3.  Object shapes are learned via probabilistic inference in a
voxel occupancy model that coarsely captures 3D shape and uncertainty
due to self-occlusion (Section \ref{sec:learning-shapes}). Scenes are modeled via hierarchical 3D scene
graphs that can explain planar contacts between objects without forcing
scenes to fit rigid structual assumptions (Section \ref{sec:model}). Images are modeled by
real-time graphics and robust likelihoods on point clouds. We cast 3D scene understanding as approximate probabilistic
inference in this generative model.  We develop a novel inference
algorithm that combines data-driven Metropolis-Hastings kernels over
object poses, involutive MCMC kernels over scene graph structure,
pseudo-marginal integration over uncertain object shape, and existing
deep learning object detectors and pose estimators (Section \ref{sec:inference}). This architecture
leverages inference in the generative model to provide common sense
constraints that fix errors made by bottom-up neural detectors. Our
experiments show that 3DP3 is more accurate and robust than deep
learning baselines at 6DoF pose estimation for challenging synthetic
and real-world scenes (Section \ref{sec:experiments}). Our model and inference algorithm are implemented in the
Gen~\citep{cusumano2019gen} probabilistic programming system.
\section{Related Work}

\textbf{Analysis-by-synthesis approaches to computer vision}
A long line of work has interpreted computer vision as the inverse problem to computer graphics \citep{knill1996introduction, yuille2006vision, lee2003hierarchical, kersten2004object}.
This `analysis-by-synthesis' approach has been used for various tasks including character recognition, CAPTCHA-breaking, lane detection, object pose estimation, and human pose estimation
\citep{zhu1996region, tu2002image, mansinghka2013approximate, moreno2016overcoming, george2017generative, romaszko2017vision}.
To our knowledge, our work is the first to use an analysis-by-synthesis approach to infer a hierarchical 3D object-based representation of real multi-object scenes while exploiting inductive biases about the contacts between objects.

\textbf{Hierarchical latent 3D scene representations} We use a scene
graph representation~\cite{zinberg2019sg} that is closely related to
hierarchical scene graph representations in computer
graphics~\citep{clark1976}.  Unlike in graphics, we address the
\emph{inverse problem} of inferring hierarchical scene graphs from
observed image data.  Inferring hierarchical 3D scene graphs from RGB
or depth images in a probabilistic framework is relatively unexplored.
One concurrent\footnote{An early version of our
  work~\citep{zinberg2019sg} is concurrent with an early version of
  GSGN~\citep{deng2019rich} } and independent work, Generative Scene
Graph Networks (GSGN~\citep{deng2020generative}), proposes a
variational autoencoder architecture for decomposing images into
objects and parts using a tree-structured latent scene graph that is
similar to our scene graph representation.  However, GSGN learns RGB
appearance models of objects and their parts, uses an inference
network instead of a hybrid of data-driven and model-based inference,
was not evaluated on real images or scenes, and uses more restricted
scene graphs that cannot represent objects with independent 6DoF pose.
GSGN builds on an earlier deep generative
model~\citep{eslami2016attend} that generates multi-object scenes but
does not model dependencies between object poses and was not
quantitatively evaluated on real 3D scenes.  Incorporating a learned
inference network for jointly proposing scene graphs into our
framework is an interesting area for future work.  The term `scene
graph' has also been used in computer vision to refer to various less
related graph representations of
scenes~\citep{armeni20193d,chen2019holistic++,rosinol20203d}.

\textbf{Probabilistic programming for computer vision} Prior work has
used probabilistic programs to represent generative models of images
and implemented inference in these models using probabilistic
programming systems~\citep{mansinghka2013approximate,
  kulkarni2015picture}.  Unlike these prior works, which relied on
manually specified and/or semi-parametric shape models, 3DP3 learns
object shapes non-parametrically. 3DP3 also models occlusion of one 3D
object by another; uses a novel hierarchical scene graph prior that
allows for dependencies between object poses in the prior; uses a
novel involutive MCMC~\citep{cusumano2020automating} kernel for
inferring scene graph structure; and uses a novel pseudo-marginal
approach for handling uncertainty about object shape during
inference. We also present a proof of concept that our system can
infer the presence and pose of fully occluded objects.

\textbf{6DoF object pose estimation}
We use 6DoF estimation of object pose from RGBD images as an example application.
Registration of point clouds~\citep{besl1992method} can be used to estimate the 6DoF pose of objects
with known 3D geometry from depth images.
Many recent 6DoF object pose estimators use deep learning \citep{xiang2017posecnn, tremblay2018deep}
and many also take depth images~\citep{wang2019densefusion, tian2020robust}.
Some pose estimation methods model scene structure, contact relationships, stability, or other semantic information \citep{huang2018holistic, chen2019holistic++, kulkarni20193d, du2018stability, bao2011semantic}, and
some use probabilistic inference \citep{desingh2019efficient, chen2019grip, glover2012monte,deng2021poserbpf}.
To our knowledge, we present the first 6DoF pose estimator that uses
Bayesian inference about the structure of hierarchical 3D scene graphs.

\textbf{Learning models of novel 3D objects} Classic algorithms for structure-from-motion infer a 3D model of a scene from multiple images \citep{schonberger2016structure,agarwal2010bundle}.
Our approach for learning the shape of novel 3D objects produces coarse-grained probabilistic voxel models of objects
that can represent uncertainty about the occupancy of self-occluded volumes.
Integrating other representations of object shape and object appearance~\citep{mildenhall2020nerf} with our scene graph representation is a promising area of future work.

\section{3DP3 generative modeling framework} \label{sec:model}

\begin{figure}[t]
    \begin{subfigure}[t]{\textwidth}
	\centering
	\includegraphics[width=\textwidth]{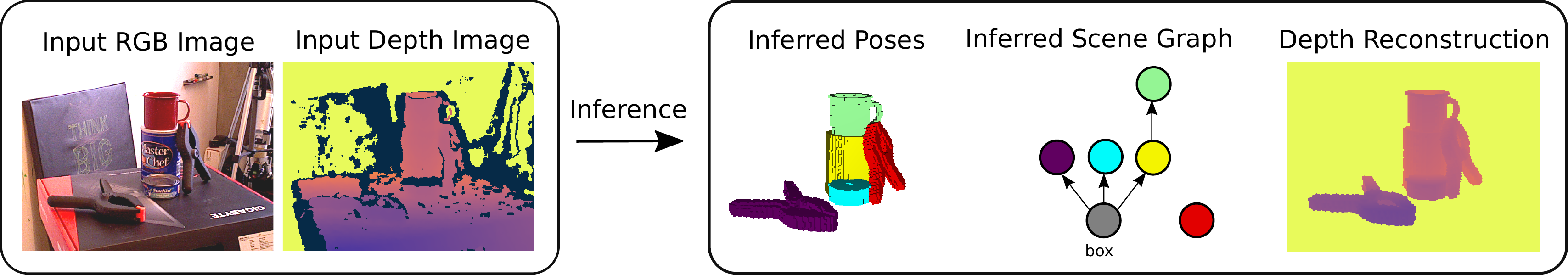}
\caption{%
Inferring a hierarchical 3D scene graph from an RGB-D image with 3DP3.
Our model knows that objects often lay flat on other objects, which allows for the depth pixels of one object to inform the pose of other objects.
Our algorithm also infers when this knowledge is relevant (e.g. the clamp on the left, represented by the purple node, is laying flat on the box), and when it is not (e.g. the clamp on the right, represented by the red node, is not laying flat on any other object).
}
    \end{subfigure}
    \begin{subfigure}[t]{\textwidth}
	\centering
\vspace{2mm}
	\includegraphics[width=\textwidth]{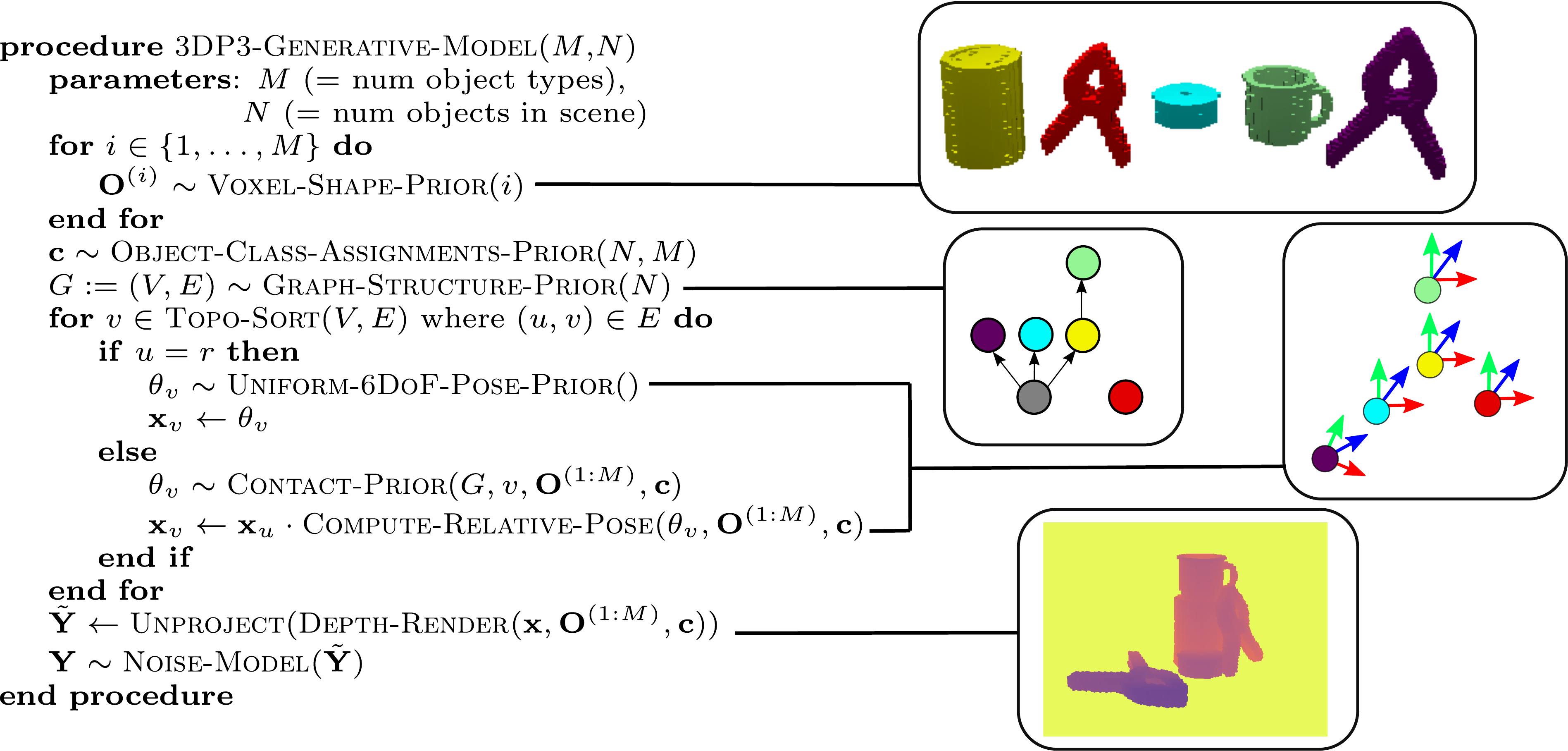}
\caption{%
3DP3 uses a structured generative model of 3D scenes, represented as a probabilistic program.
The model uses a prior over object shapes that can be learned from data,
a prior over scene structure that is a probability distribution on graphs,
a traversal of the scene graph starting at the world node $r$ to compute object poses,
and a robust likelihood model for depth images.
In the graph at right, the world node $r$ (not shown) is the parent of the grey node (box) and the red node (right clamp) because those objects are not layting flat on other objects.
}
    \end{subfigure}
\caption{%
(a) A scene graph inference task and (b) the 3DP3 generative model.
}
	\label{fig:model}
\end{figure}

The core of 3DP3 is a generative modeling framework that represents a scene in terms of discrete objects, the 3D shape of each object, and a hierarchical structure called a scene graph that relates the poses (position and orientation) of the objects.
This section describes 3DP3's object shape and scene graph latent representations,
a family of prior distributions on these latent representations, and 
an observation model for image-based observations of scenes.
Figure~\ref{fig:model} shows the combined generative model written as a probabilistic program.

\subsection{Objects} \label{sec:objects}

The most basic element of our generative modeling framework are rigid objects.
The first stage in our generative model prior encodes uncertainty about the 3D shape of $M$ types of rigid objects that may or may not be encountered in any given scene.

\paragraph{Voxel 3D object shapes}
We model the coarse 3D shape of rigid objects using a voxel grid with dimensions $h,w,l \in \mathbb{N}$ and cells indexed by $(i,j,\ell) \in [h]\times[w]\times[l]$.
Each cell has dimension $s \times s \times s$ for resolution $s \in \mathbb{R}^{+}$, so that the entire voxel grid represents the cuboid $[0, h\cdot s] \times [0, w\cdot s] \times [0, l\cdot s]$.
All objects are assumed to fit within the cuboid.
An object's \textit{shape} is defined by a binary assignment $\mathbf{O} \in \{0, 1\}^{h \times w \times l}$ of occupancy states to each cell in the voxel grid, where $O_{ij\ell} = 1$ indicates that cell $(i, j, \ell)$ is occupied and $O_{ij\ell} = 0$ indicates it is free.
Each object also has a finite set of \emph{contact planes} through which the object may be in flush contact with the contact planes of other objects in physically stable scenes.
For example, in Figure~\ref{fig:scene-graph}, the table has a contact plane for its top surface, the yellow sugar box has six contact planes, one for each of its six faces, and the bottom contact plane of the sugar box is in flush contact with the top contact plane of the table.
The pose of a contact plane relative to its object is a function of the object shape $\mathbf{O}$.
To simplify notation, we denote the set of contact planes for any object by $F$.

\paragraph{Prior distributions on 3D object shape}
We assume there are $M$ distinct object types, and each object type $m \in \{1, \ldots, M\}$ has an a-priori unknown shape, denoted $\mathbf{O}^{(m)}$.
Let $\mathbf{O}^{(1:M)} := (\mathbf{O}^{(1)}, \ldots, \mathbf{O}^{(M)})$.
The prior distribution on the shape of each object type $m$ is denoted $p(\mathbf{O}^{(m)})$.
Although our inference algorithm (Section~\ref{sec:inference}) only requires the ability to sample jointly from $p(\mathbf{O}^{(1:M)})$, we assume shapes of object types are independent in the prior ($p(\mathbf{O}^{(1:M)}) = \prod_{m=1}^M p(\mathbf{O}^{(m)})$).
Section~\ref{sec:learning-shapes} shows how to learn a specific shape prior $p(\mathbf{O}^{(m)})$ for an object type from depth images.

\subsection{Scenes} \label{sec:scenes} 

Given a collection of $M$ known object types and their shapes, our model generates scenes with $N$ objects by randomly selecting an object type for each object and then sampling a 6DoF object pose for each object.
Instead of assuming that object poses are independent, 
our model encodes an inductive bias about the regularities in real-world scenes:
objects are often resting in flush contact with other objects (e.g. see Figure~\ref{fig:scene-graph}).
We jointly sample \emph{dependent} object poses using a flexible hierarchical scene graph, while maintaining uncertainty over the structure of the graph. 

\begin{figure}[t]
	\centering
    \begin{minipage}{0.6\textwidth}
	\includegraphics[width=\textwidth]{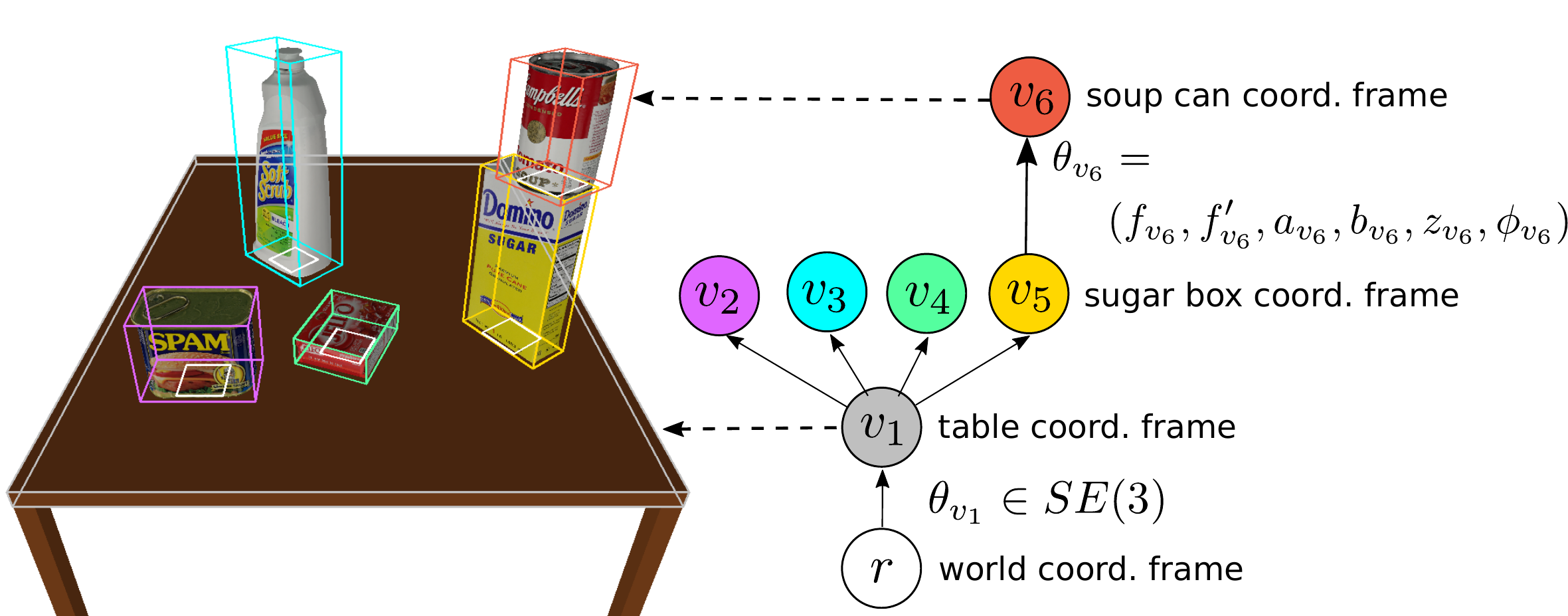}
    \end{minipage}%
    \begin{minipage}{0.4\textwidth}
    	\tiny
    \begin{tabular}{l|l|l}
Notation & Meaning & Section\\
\hline
$\mathbf{O}^{(m)}$ & Object shape & \ref{sec:objects}\\
$M$ & Number of object types & \ref{sec:objects}\\
$N$ & Number of objects & \ref{sec:scenes}\\
$c_i$ & Type of object $i$ & \ref{sec:scenes}\\
$G = (V, E)$ & Scene graph structure & \ref{sec:scenes}\\
$r \in V$ & World coord. frame & \ref{sec:scenes}\\
$v \in V \setminus \{r\}$ & Object coord. frame & \ref{sec:scenes}\\
$\theta_v$ & Parameters of $v$ & \ref{sec:scenes}\\
$f_v, f_v'$ & Two contact planes & \ref{sec:scenes}\\
$(a_v, b_v, z_v, \phi_v)$ & Planar contact parameters & \ref{sec:scenes}\\
$\mathbf{x}_v \in SE(3)$ & 6DoF pose of $v$ w.r.t. $r$ & \ref{sec:scenes}\\
$\Delta \mathbf{x}_v(\theta_v)$ & 6DoF pose of $v$ w.r.t parent & \ref{sec:scenes}\\
$\tilde{\mathbf{I}}$ & Rendered depth image & \ref{sec:images}\\
$\tilde{\mathbf{Y}}$ & Rendered point cloud & \ref{sec:images}\\
$\mathbf{Y}$ & Observed point cloud & \ref{sec:images}
    \end{tabular}
    \end{minipage}%
	\caption{Our hierarchical scene graphs encode a tree of coordinate frames representing entities in a scene and their geometric relationships (e.g. flush contact between faces of two objects).}
	\label{fig:scene-graph}
\end{figure}

\paragraph{Hierarchical scene graphs}
We model the geometric state of a scene as a \emph{scene graph} $\mathcal{G}$ (Figure~\ref{fig:scene-graph}), which is a tuple $\mathcal{G} = (G, \bm{\theta})$ where
$G = (V, E)$ is a directed rooted tree and $\bm\theta$ are parameters.
The vertices $V:= \{r,v_1,\ldots,v_N\}$ represent $N+1$ 3D coordinate frames, with $r$ representing the world coordinate frame. 
An edge $(u,v) \in E$ indicates that coordinate frame $v$ is parametrized relative to frame $u$, with parameters $\theta_v$.
The 6DoF pose of frame $v$ relative to frame $u$ with pose $\mathbf{x}_u \in SE(3)$ is given by a function $\Delta \mathbf{x}_v$, where $\Delta \mathbf{x}_v(\theta_v) \in SE(3)$ and  $\mathbf{x}_v := \mathbf{x}_u \cdot \Delta \mathbf{x}_v(\theta_v)$.
Here, $\cdot$ is the $SE(3)$ group operation, and the world coordinate frame is defined as the identity element ($\mathbf{x}_r := I$).

\paragraph{Modeling flush contact between rigid objects}
While the vertices of scene graphs $\mathcal{G}$ can represent arbitrary coordinate frames in a scene (e.g. the coordinate frames of articulated joints, object poses), in the remainder of this paper we assume that each vertex $v \in V \setminus \{r\}$ corresponds to the pose of a rigid object.
We index objects by $1, \ldots, N$, with corresponding vertices $v_1, \ldots, v_N$.
We assume that each object $i$ has an object type $c_i \in \{1, \ldots, M\}$.
For vertices $v$ that are children of the root vertex $r$, $\theta_v \in SE(3)$ defines the absolute 6DoF pose of the corresponding object ($\Delta \mathbf{x}_v(\theta_v) = \theta_v$). For vertices $v$ that are children of a non-root vertex $u$, the parameters take the form $\theta_v = (f_v, f_v', a_v, b_v, z_v, \phi_v)$ and represent a contact relationship between the two objects:
$f_v$ and $f_v'$ indicate which contact planes of the parent and child objects, respectively, are in contact.
$(a_v, b_v) \in \mathbb{R}^2$ is the in-plane offset of the origin of plane $f_v$ of object $v$ from the origin of plane $f_v'$ of object $u$.
$z_v \in \mathbb{R}$ is the perpendicular distance of the origin of plane $f_v$ of object $v$ from plane $f_v'$ of object $u$. $\phi_v \in S^2 \times S^1$ represents the deviation of the normal vectors of the two contact planes from anti-parallel (in $S^2$) and a relative in-plane rotation of the two contact planes (in $S^1$).
The relative pose $\Delta \mathbf{x}_v(\theta_v)$ of $v$ with respect to $u$ is the composition (in $SE(3)$) of three relative poses:
(i) $v$ with respect to its plane $f_v$, (ii) $v$'s plane $f_v$ with respect to $u$'s plane $f_v'$, and (iii) $u$'s plane $f_v'$ with respect to $u$.
The 6DoF poses of all objects ($\mathbf{x}_v$ for $v \in V \setminus \{r\}$) are computed by
traversing the scene graph while taking products of relative poses along paths from the root $r$.

\paragraph{Prior distributions on scene graphs}
We now describe our prior on scene graphs, given object models $\mathbf{O}^{(1:M)}$.
We assume the number of objects $N$ in the scene is known (see the supplement for a generalization to unknown $N$).
We first sample the types $c_i \in \{1, \ldots, M\}$ of all objects from an exchangeable distribution $p(\mathbf{c})$ where $\mathbf{c} := (c_1, \ldots, c_N)$.
This includes as a special case distributions where all types are represented at most once among the objects ($\sum_{i=1}^N \mathbf{1}[c_i = c] \le 1$), which is the case in our experiments.
Next, we sample the scene graph structure $G$ from $p(G)$.
We experiment with two priors $p(G)$:
(i) a uniform distribution on the set of $(N+1)^{N-1}$ directed trees that are rooted at a vertex $r$,
and (ii) $\delta_{G_0}(G)$, where $G_0$ is a graph on $N+1$ vertices where $(r, v) \in E$ for all $v \in V \setminus \{r\}$ so that each object vertex has an independent 6DoF pose.
For objects whose parent is $r$ (the world coordinate frame), we sample the pose $\theta_v \sim p_{\mathrm{unif}}$, which samples the translation component uniformly from a cuboid scene extent, and the orientation uniformly over $SO(3)$.
For objects whose parent is another object $u$, we sample the choice of contact planes $(f_v, f_v') \in F \times F$ uniformly, $(a_v, b_v) \sim \mathrm{Uniform}([-50{\small\mbox{cm}}, 50{\small\mbox{cm}}]^2)$,
$z_v \sim \mathrm{N}(0, 1{\small\mbox{cm}})$,
the $S^2$ component of $\phi_v$ from a von Mises--Fisher (vMF) distribution concentrated ($\kappa = 250$) on anti-parallel plane normals,
and the $S^1$ component from $\mathrm{Uniform}(S^1)$.
We denote this distribution $p_{\mathrm{cont}}(\theta_v)$.
Note that the parameters of $p_{\mathrm{cont}}$ were not tuned or tailored in any detailed way---they were chosen heuristically based on the rough dimensions of table-top objects.
The resulting prior over all of the latent variables is:
\begin{align}
&p(\mathbf{O}^{(1:M)}, \mathbf{c}, G, \bm{\theta}) =
\displaystyle \left(\prod_{m=1}^M p(\mathbf{O}^{(m)}) \right)
\frac{1}{(N+1)^{N-1}}
\,
p(\mathbf{c})
    \prod_{\substack{v \in V: \\ (r, v) \in E}}
    p_{\mathrm{unif}}(\theta_v)
    \prod_{\substack{(u, v) \in E: \\ u \neq r}}
    p_{\mathrm{cont}}(\theta_v)
\label{eq:prior}
\end{align}

\subsection{Images} \label{sec:images}

Our generative model uses an observation model that generate synthetic image data
given object shapes $\mathbf{O}^{(1:M)}$ and a scene graph $\mathcal{G}$ containing $N$ objects.
We now describe the observation model for depth images that is used in our main experiments (Section~\ref{sec:experiments}).


\paragraph{Likelihood model for depth images}

We first convert an observed depth image into a point cloud $\mathbf{Y}$.
To model a point cloud $\mathbf{Y} \in \mathbb{R}^{K \times 3}$ with $K$ points denoted $\mathbf{y}_i \in \mathbb{R}^3$, we use a likelihood model based on rendering a synthetic depth image of the scene graph.
Specifically, given the object models $\mathbf{O}^{(m)}$ for each $m \in \{1, \ldots, M\}$, the object types $\mathbf{c}$, the scene graph $\mathcal{G}$, and the camera intrinsic and extrinsic parameters relative to the world frame,
we (i) compute meshes from each $\mathbf{O}^{(m)}$, (ii) compute the 6DoF poses ($\mathbf{x}_v$) of objects with respect to the world frame by traversing the scene graph $\mathcal{G}$, 
and (iii) render a depth image $\tilde{\mathbf{I}}$ of $\mathcal{G}$ using an OpenGL depth buffer, and
(iv) unproject the rendered depth image to obtain a point cloud $\tilde{\mathbf{Y}}$
with $\tilde{K}$ points ($\tilde{K}$ is the number of pixels in the depth image).
We then generate an observed point cloud $\mathbf{Y} \in \mathbb{R}^{K \times 3}$ by drawing each point from a mixture:
\begin{alignat}{2}
&p(\mathbf{Y} | \mathbf{O}^{(1:M)}, \mathbf{c}, G, \bm{\theta}) := \prod_{i = 1}^{K} \left( C \cdot \frac{1}{B} + \frac{1-C}{\tilde{K}} \sum_{j=1}^{\tilde{K}}
\frac{ \mathbf{1}[||\mathbf{y}_i - \tilde{\mathbf{y}}_{j}||_2 \leq r]}{\frac{4}{3} \pi r^3}\right)
\label{eq:likelihood}
\end{alignat}
for some $0 < C < 1$ and some $r > 0$.
The components of this mixture are uniform distributions over the balls of radius $r$ centered at each point in $\tilde{\mathbf{Y}}$ (with weights $(1-C)/\tilde{K}$) and
a uniform distribution over the scene bounding volume $B$ (weight $C$).%
\footnote{By using a distribution that is uniform over a small, spherical region rather than a Gaussian distribution, we avoid (via k-d trees) computing pairwise distances between all points in $\mathbf{Y}$ and $\tilde{\mathbf{Y}}$, resulting in  $\approx10$x speedup.}

\section{Learning object shape models} \label{sec:learning-shapes}

\begin{figure}[!t]
	\centering
	\includegraphics[width=\textwidth]{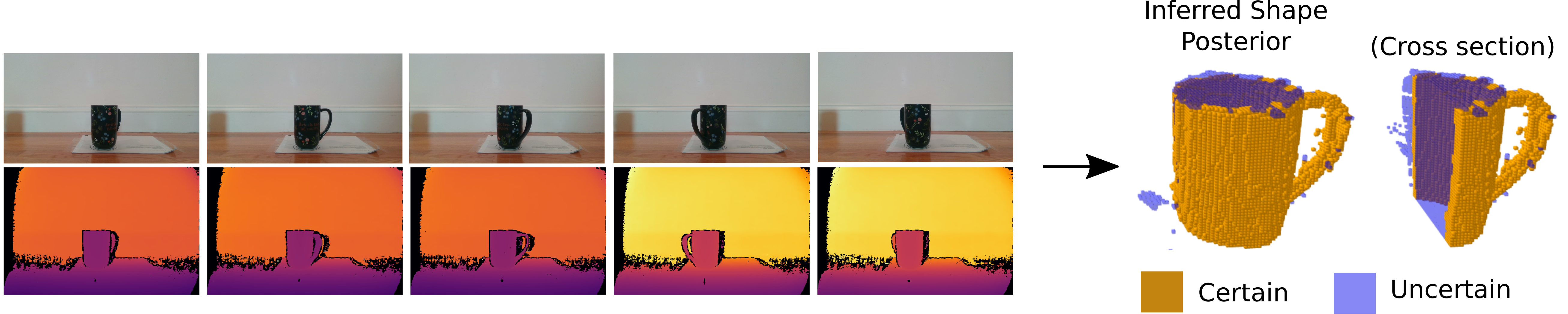}
	\caption{%
		Learning a voxel-based shape models $p(\mathbf{O}^{(m)})$ for a novel object from a set of 5 depth images.
		Our shape priors capture uncertainty about voxel occupancy due to self-occlusion (right).
	}
	\label{fig:shapes}
\end{figure}

3DP3 does not require hard-coded shape models. Instead, it uses
probabilistic inference to learn non-parametric models of 3D object
shape $p(\mathbf{O}^{(m)})$ that account for uncertainty due to
self-occlusion. We focus on the restricted setting of learning from
scenes containing a single isolated object ($N=1)$ of known type
($c_1$). Our approach works best for views that lead to minimal
uncertainty about the exterior shape of the object; more general,
flexible treatments of shape learning and shape uncertainty are beyond
the scope of this paper.

First, we group the depth images by the object type ($c_1$), so that
we have $M$ independent learning problems.  Let $\mathbf{I}_{1:T} :=
(\mathbf{I}_1, \ldots, \mathbf{I}_T)$ denote the depth observations
for one object type, with object shape denoted $\mathbf{O}$.  The
learning algorithm uses Bayesian inference in another generative model
$p'$.  The posterior $p'(\mathbf{O} | \mathbf{I}_{1:T})$ produced by
this algorithm becomes the prior $p(\mathbf{O})$ used in
Section~\ref{sec:objects}.

We start with a uninformed prior distribution 
$p'(\mathbf{O}) := \prod_{i=1}^h \prod_{j=1}^w \prod_{\ell=1}^l p_{\mathrm{occ}}^{O_{ij\ell}} (1 - p_{\mathrm{occ}})^{(1 - O_{ij\ell})}$
on the 3D shape of an object type,
for a per-voxel occupancy probability $p_{\mathrm{occ}}$ (in our experiments, 0.5).
We learn about the object's shape by observing a sequence of depth images $\mathbf{I}_{1:T}$ that contain views of the object, which is assumed to be static relative to other contents of the scene, which we call the `map' $\mathbf{M}$.
(In our experiments the map contains the novel object, a floor, a ceiling, and four walls of a rectangular room).
We posit the following joint distribution over object shape ($\mathbf{O}$) and the observed depth images,
conditioned on the map ($\mathbf{M}$) and the poses of the camera relative to the map over time ($\mathbf{x}_1, \ldots, \mathbf{x}_T \in SE(3)$): $p'(\mathbf{O}, \mathbf{I}_{1:T} | \mathbf{M}, \mathbf{x}_{1:T})
:=
p'(\mathbf{O})
\prod_{t=1}^T p'(\mathbf{I}_t | \mathbf{O}, \mathbf{M}, \mathbf{x}_t)$.

The likelihood $p'$ is a 
depth image likelihood on a latent 3D voxel occupancy grid (see supplement for details).
For this model, we can compute $p'(\mathbf{O} | \mathbf{M}, \mathbf{x}_{1:T}, \mathbf{I}_{1:T})
= \prod_{ij\ell} p'(O_{ij\ell} | \mathbf{M}, \mathbf{x}_{1:T}, \mathbf{I}_{1:T})
$ exactly using ray marching to decide if a voxel cell is occupied, unoccupied, or unobserved (due to being occluded by another occupied cell), and the resulting distribution on $\mathbf{O}$
can be compactly represented as an array of probabilities ($\in [0, 1]^{h \times w \times l}$).
However, in real-world scenarios the map $\mathbf{M}$ and the camera poses $\mathbf{x}_{1:T}$ are not known with certainty.
To handle this, our algorithm takes as input uncertain beliefs
about $\mathbf{M}$ and $\mathbf{x}_{1:T}$
($q_{\mathrm{SLAM}}(\mathbf{M}, \mathbf{x}_{1:T}) \approx p'(\mathbf{M}, \mathbf{x}_{1:T} | \mathbf{I}_{1:T})$)
that are produced by
a separate probabilistic SLAM (simultaneous localization and mapping) module,
and take the form of a weighted collection of $K$ particles $(\mathbf{M}^{(k)}, \mathbf{x}_{1:T}^{(k)})$: $q_{\mathrm{SLAM}}(\mathbf{M}, \mathbf{x}_{1:T}) = \sum_{k=1}^K w_k \delta_{\mathbf{M}^{(k)}}(\mathbf{M}) \delta_{\mathbf{x}_{1:T}^{(k)}}(\mathbf{x}_{1:T})$.
Various approaches to probabilistic SLAM can be used; we implemented it
using sequential Monte Carlo (SMC) in Gen (more detail in supplement).
From the beliefs $q_{\mathrm{SLAM}}(\mathbf{M}, \mathbf{x}_{1:T})$ produced by SLAM, we approximate the object shape posterior via:
\[
\hat{p}'(\mathbf{O} | \mathbf{I}_{1:T})
:= \iint p'(\mathbf{O} | \mathbf{M}, \mathbf{x}_{1:T}, \mathbf{I}_{1:T}) q_{\mathrm{SLAM}}(\mathbf{M}, \mathbf{x}_{1:T}) d\mathbf{M} d\mathbf{x}_{1:T}
= \sum_{k=1}^K w_k p'(\mathbf{O} | \mathbf{M}^{(k)}, \mathbf{x}_{1:T}^{(k)}, \mathbf{I}_{1:T})
\]
Note that while $p'(\mathbf{O} | \mathbf{M}^{(k)}, \mathbf{x}_{1:T}^{(k)}, \mathbf{I}_{1:T})$ for each $k$
can be compactly represented,
the mixture distribution $\hat{p}'(\mathbf{O} | \mathbf{I}_{1:T})$ lacks the conditional independencies that make this possible.
To produce a more compact representation of beliefs about the object's shape,
we fit a variational approximation $q_{\varphi}(\mathbf{O})$
that assumes independence among voxels 
($q_{\varphi}(\mathbf{O}) := \prod_{i\in[h]} \prod_{j\in[w]} \prod_{\ell\in[l]} \varphi_{ij\ell}^{O_{ij\ell}} \cdot (1 - \varphi_{ij\ell})^{(1-O_{ij\ell})}$) to $\hat{p}'(\mathbf{O} | \mathbf{I}_{1:T})$
using $\varphi^* := \argmin_{\varphi} \mathrm{KL}(\hat{p}'(\mathbf{O} | \mathbf{I}_{1:T}) || q_{\varphi}(\mathbf{O}))$
(see supplement for details).
This choice of variational family is sufficient for representing uncertainty about the occupancy of voxels in the \emph{interior} of an object shape.
Note that our shape-learning experiments did not result in significant uncertainty about the \emph{exterior} shape of objects%
\footnote{%
The lack of significant exterior shape uncertainty in shape-learning experiments allowed us to
implement an optimization:
Instead of the relative poses of an object's contact planes depending on $\mathbf{O}$ as described in Section~\ref{sec:model},
we assign each object type a set of six contact planes derived from the faces of the smallest axis-aligned bounding cuboid that completely contains all occupied voxels in one sample $\mathbf{O}$ from the learned prior $p(\mathbf{O}) := q_{\bm{\varphi}}(\mathbf{O})$.
},
and in the presence of such uncertainty, a less severe variational approximation may be needed for robust inference of scene graphs from depth images.
Fig. \ref{fig:shapes} shows input depth images ($\mathbf{I}_{1:T}$) and resulting shape prior learned from $T=5$ observations.
After learning these shape distributions $q_{\bm{\varphi}}(\mathbf{O}) \approx \hat{p}'(\mathbf{O} | \mathbf{I}_{1:T})$ for each distinct object type, we use them as the shape priors $p(\mathbf{O}_i)$ within the generative model of Section~\ref{sec:model}. The supplement includes the results of a quantitative evaluation of the accuracy of shape learning.

\section{Building blocks for approximate inference algorithms} \label{sec:inference}

This section first describes a set of building blocks for approximate inference algorithms
that are based on the generative model of Section~\ref{sec:model}.
We then describe how to combine these components into a scene graph inference algorithm
that we evaluate in Section~\ref{sec:experiments}.

\paragraph{Trained object detectors}
It is possible to infer the types of objects in the scene ($\mathbf{c}$) via Bayesian inference in the generative model
(see supplement for an example that infers $\mathbf{c}$ as well as $N$ in a scene with a fully occluded object, via Bayesian inference).
However, for scenes where objects are not fully or nearly-fully occluded,
and where object types have dissimilar appearance,
it is possible to train fast object detectors
that produce an accurate point estimate of $\mathbf{c}$ given an RGB image.

\paragraph{Trained pose estimators}
In scenes without full or nearly-full occlusion, it is also possible to employ trained pose estimation methods~\citep{wang2019densefusion} to give independent estimates of the 6DoF pose of each object instance in the image.
However, inferring pose is more challenging than inferring $\mathbf{c}$,
and occlusion, self-occlusion, and symmetries can introduce significant pose uncertainty.
Therefore, we only use trained pose estimators (e.g. \citep{wang2019densefusion}) to (optionally) \emph{initialize} the poses of objects before Bayesian inference in the generative model, using the building blocks below.

\paragraph{Data-driven Metropolis-Hastings kernels on object pose}
We employ Metropolis-Hastings (MH) kernels, parametrized by choice of object $i \in \{1, \ldots, N\}$, that
take as input a scene graph $\mathcal{G}$,
propose new values ($\theta_{v_i}'$) for the scene graph parameters of object $i$,
construct a new proposed scene graph $\mathcal{G'}$,
and then accept or reject the move from $\mathcal{G}$ to $\mathcal{G'}$ based on 
the MH rule.
For objects $v$ whose parent is the world frame ($(r, v) \in E$), we use a data-driven proposal distribution centered on an estimate ($\hat{\mathbf{x}}_v$) of the 6DoF object pose obtained with ICP 
(a spherical normal distribution concentrated around the estimated position,
and a vMF distribution concentrated around the estimated orientation).
We also use kernels with random-walk proposals centered on the current pose.
For objects whose parent is another object ($(u, v) \in E$ for $u \ne r$),
we use a random-walk proposal on parameters $(a_{v_i}$, $b_{v_i}$, $z_{v_i})$.
Note that when the pose of an object is changed in the proposed graph $\mathcal{G'}$,
the pose of any descendant objects is also changed.\footnote{%
It is possible to construct an involutive MCMC kernel that does not change the poses of descendant objects.}
Each of these MH kernels is invariant with respect to $p(G, \bm\theta | \mathbf{c}, \mathbf{Y})$.

\begin{figure}[t]
	\centering
	\includegraphics[width=0.8\textwidth]{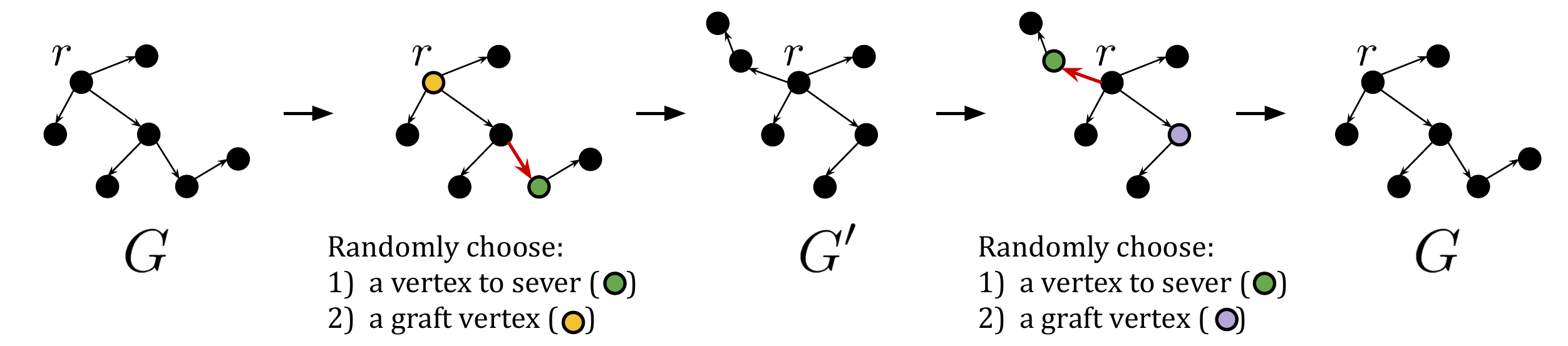}
	\caption{A reversible transition between scene graph structure $G$ and scene graph structure $G'$.}
	\label{fig:scene-graph-structure-change}
\end{figure}

\paragraph{Involutive MCMC kernel on scene graph structure}
To infer the scene graph structure $G$, we employ a family of involutive MCMC kernels~\citep{cusumano2020automating} that
propose a new graph structure $G'$ while keeping the poses ($\mathbf{x}_v$) of all objects fixed.
The kernel takes a graph structure $G$ and proposes a new graph structure $G'$ (Figure~\ref{fig:scene-graph-structure-change}) by:
(i) randomly sampling a node $v \in V \setminus \{r\}$ to `sever' from the tree,
(ii) randomly choosing a node $u \in V \setminus \{v\}$ that is not a descendant of the severed node on which to graft $v$,
(iii) forming a new directed graph $G'$ over vertices $V$ by grafting $v$ to $u$; by Lemma~O.7.1 the resulting graph $G'$ is also a tree.
Note that there is an involution $g$ on the set of all pairs $(G, v, u)$ satisfying the above constraints.
That is, if
$(G', v', u') = g(G, v, u)$
then
$(G, v, u) = g(G', v', u')$.
(This implies, for example, that $u'$ is the parent of $v$ in $G$.)
Note that this set of transitions is capable of changing the parent vertex of an object to a different parent object,
changing the parent vertex of an object from the root (world frame) to any other object,
or changing the parent vertex from another object to the root, depending on the random choice of $v$ and $u$.
We compute new values for parameters ($\theta_v$) for the severed node $v$ and possibly other vertices such that the poses of all vertices are unchanged.
See supplement for the full kernel and a proof that it is invariant w.r.t. $p(G, \bm\theta | \mathbf{c}, \mathbf{Y})$.

\paragraph{Approximately Rao--Blackwellizing object shape via pseudo-marginal MCMC}
The acceptance probability expressions for our involutive MCMC and MH kernels targeting $p(G, \bm\theta | \mathbf{c}, \mathbf{Y})$ include factors of the form $p(\mathbf{Y} | \mathbf{c}, G, \bm{\theta})$, which is an intractable sum over the latent object models:
$p(\mathbf{Y} | \mathbf{c}, G, \bm{\theta}) = \sum_{\mathbf{O}^{(1:M)}} p(\mathbf{O}^{(1:M)}) p(\mathbf{Y} | \mathbf{O}^{(1:M)}, \mathbf{c}, G, \bm{\theta})$.
To overcome this challenge, we employ a pseudo-marginal MCMC approach \citep{andrieu2009pseudo} that uses unbiased estimates of
$p(\mathbf{Y} | \mathbf{c}, G, \bm{\theta})$
obtained via likelihood weighting (that is, sampling several times from $p(\mathbf{O}^{(1:M)})$ and averaging the resulting $p(\mathbf{Y} | \mathbf{O}^{(1:M)}, \mathbf{c}, G, \bm\theta$)).
The resulting MCMC kernels are invariant with respect to an extended target distribution of which $p(G, \bm\theta | \mathbf{c}, \mathbf{Y})$ is a marginal (see supplement for details). 
We implemented an optimization where we sampled 5 values for $\mathbf{O}^{(1:M)}$ and used these samples within every estimate of $p(\mathbf{Y} | \mathbf{c}, G, \bm\theta)$ instead of sampling new values for each estimate.
Because our learned shape priors did not have significant exterior shape uncertainty,
this optimization did not negatively impact the results.

\paragraph{Scene graph inference and implementation}
The end-to-end scene graph inference algorithm has three stages.
First, we obtain $\mathbf{c}$ from either an object detector or because it is given as part of the task (this is the case in our experiments; see Section~\ref{sec:experiments} for details).
Second, we obtain initial estimates $\hat{\mathbf{x}}_v$ of 6DoF object poses $\mathbf{x}_v$ for all object vertices $v$ via maximum-a-posteriori (MAP) inference in a restricted variant of the generative model with graph structure $G$ fixed to $G_0$ (so there are no edges between object vertices).
This MAP inference stage uses the data-driven Metropolis-Hastings kernels on poses, and (optionally) trained pose estimators (see Section~\ref{sec:experiments} for the details, which differ between experiments).
Third, we use the estimated poses to initialize an MCMC algorithm targeting $p(G, \bm{\theta} | \mathbf{c}, \mathbf{Y})$ with state $G \gets G_0$ and $\theta_v \gets \hat{\mathbf{x}}_v$ for each $v \in V \setminus \{r\}$.
The Markov chain is a cycle of the involutive MCMC kernel described above with a mixture of the Metropolis-Hastings kernels described above, uniformly mixed over objects.
We wrote the probabilistic program of Figure~\ref{fig:model} in Gen's built-in modeling language.
We implemented the data-driven and involutive MCMC kernels, and pseudo-marginal likelihood,
and integrated all components together, using Gen's programmable inference support.
Our code is available at \url{https://github.com/probcomp/ThreeDP3}.

\section{Experiments} \label{sec:experiments}

\begin{figure}[t]
	\centering
	\includegraphics[width=\textwidth]{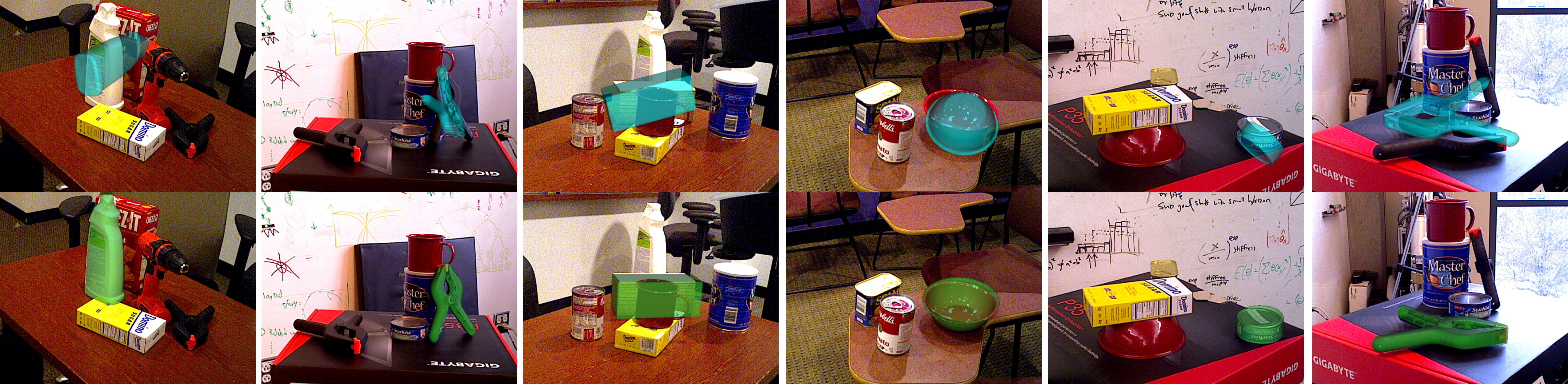}
	\caption{%
Qualitative comparison between DenseFusion's pose estimates (top row)
and estimates from 3DP3-based algorithm that is initialized with DenseFusion (bottom row)
for YCB-Video frames where DenseFusion gives incorrect results.
3DP3's depth-rendering likelihood and scene graph prior can correct large errors made by DenseFusion.
}
	\label{fig:real-data-improvements}
\end{figure}

\vspace{-2mm}We evaluate our scene graph inference algorithm on the YCB-Video \cite{calli2015benchmarking} dataset consisting of real RGB-D images
and YCB-Challenging, our own synthetic dataset of scenes containing novel viewpoints,
occlusions, and contact structure.
We use the evaluation protocol of the Benchmark for 6DoF Object Pose Estimation (BOP) Challenge \cite{hodan2018bop}, in which an RGB-D image and the number of objects in the scene and their types are given,
and the task is to estimate the 6DoF pose of each object.

\subsection{Pose estimation from real RGB-D images}

YCB-Video is a standard robotics dataset for training and evaluating 3D perception systems \cite{calli2015benchmarking}.
We first learn shape priors (Section \ref{sec:learning-shapes}) from just 5 synthetic images for each object type.
We use DenseFusion~\cite{wang2019densefusion}, a neural 6DoF pose estimator, for pose initialization in the MAP phase of our inference algorithm.
To measure pose estimation accuracy, we use the average closest point distance (ADD-S~\citep{xiang2017posecnn, wang2019densefusion}) which estimates the average closest point distance between points on the object model
placed at the predicted pose and points on the model placed at the ground-truth pose.
Table \ref{tab:full_ycb} shows the quantitative results.
For almost all objects, our algorithm (3DP3) is more accurate than an ablation (3DP3*) that fixes the structure so that there are no contact relationships,
and the ablation is more accurate than DenseFusion.
This suggests that both the rendering-based likelihood and inference of structure contribute to 3DP3's more accurate 6DoF pose estimation.
Figure \ref{fig:real-data-improvements} shows examples of corrections that 3DP3 makes to DenseFusion's estimates.

\begin{table}[]
	\fontsize{6.5pt}{6.5pt}\selectfont
	\setlength{\tabcolsep}{3pt}
	\centering
	\centering
	\begin{tabular}{@{}lc|ccc|ccc|ccc@{}}
		\toprule
		\multicolumn{1}{l}{} &
		&
		\multicolumn{3}{c}{\textbf{0.5cm Threshold}} &
		\multicolumn{3}{c}{\textbf{1.0cm Threshold}} &
		\multicolumn{3}{c}{\textbf{2.0cm Threshold}} \\ \midrule
		\multirow{2}{*}{\textbf{Object Type}} &
		\multirow{2}{*}{\textbf{\# of Scenes}} &
		\multicolumn{3}{c}{Accuracy} &
		\multicolumn{3}{c}{Accuracy} &
		\multicolumn{3}{c}{Accuracy}
		\\
		&
		&
		\multicolumn{1}{c}{3DP3} &
		\multicolumn{1}{c}{3DP3*} &
		\multicolumn{1}{c}{DF} &
		\multicolumn{1}{c}{3DP3} &
		\multicolumn{1}{c}{3DP3*} &
		\multicolumn{1}{c}{DF} &
		\multicolumn{1}{c}{3DP3} &
		\multicolumn{1}{c}{3DP3*} &
		\multicolumn{1}{c}{DF} 
		\\ \midrule
		002\_master\_chef\_can & 1006 &0.74  & 0.79  & \textbf{0.84}  & 0.99  & 1.00  & \textbf{1.00}  & \textbf{1.00}  & \textbf{1.00}  & \textbf{1.00} \\  
		003\_cracker\_box & 868 &\textbf{0.90}  & 0.83  & 0.79  & \textbf{0.99}  & 0.98  & 0.97  & 0.99  & 0.99  & \textbf{0.99} \\  
		004\_sugar\_box & 1182 &\textbf{1.00}  & 0.99  & 0.98  & \textbf{1.00}  & \textbf{1.00}  & \textbf{1.00}  & \textbf{1.00}  & \textbf{1.00}  & \textbf{1.00} \\  
		005\_tomato\_soup\_can & 1440 &\textbf{0.95}  & 0.93  & 0.93  & \textbf{0.97}  & 0.97  & 0.97  & \textbf{0.97}  & \textbf{0.97}  & \textbf{0.97} \\  
		006\_mustard\_bottle & 357 &\textbf{0.99}  & 0.98  & 0.94  & \textbf{0.99}  & \textbf{0.99}  & 0.98  & \textbf{1.00}  & \textbf{1.00}  & \textbf{1.00} \\  
		007\_tuna\_fish\_can & 1148 &0.81  & 0.80  & \textbf{0.91}  & \textbf{1.00}  & \textbf{1.00}  & 0.99  & 1.00  & 1.00  & \textbf{1.00} \\  
		008\_pudding\_box & 214 &\textbf{1.00}  & 0.97  & 0.70  & \textbf{1.00}  & 1.00  & 1.00  & \textbf{1.00}  & \textbf{1.00}  & \textbf{1.00} \\  
		009\_gelatin\_box & 214 &\textbf{1.00}  & \textbf{1.00}  & \textbf{1.00}  & \textbf{1.00}  & \textbf{1.00}  & \textbf{1.00}  & \textbf{1.00}  & \textbf{1.00}  & \textbf{1.00} \\  
		010\_potted\_meat\_can & 766 &\textbf{0.80}  & 0.78  & 0.79  & \textbf{0.89}  & 0.88  & 0.87  & \textbf{0.93}  & 0.93  & 0.92 \\  
		011\_banana & 379 &\textbf{0.98}  & 0.96  & 0.82  & \textbf{1.00}  & \textbf{1.00}  & 0.97  & \textbf{1.00}  & \textbf{1.00}  & 1.00 \\  
		019\_pitcher\_base & 570 &\textbf{1.00}  & 0.99  & 0.99  & \textbf{1.00}  & 1.00  & 1.00  & \textbf{1.00}  & \textbf{1.00}  & \textbf{1.00} \\  
		021\_bleach\_cleanser & 1029 &\textbf{0.94}  & 0.88  & 0.80  & \textbf{1.00}  & \textbf{1.00}  & 0.99  & \textbf{1.00}  & \textbf{1.00}  & 1.00 \\  
		024\_bowl & 406 &\textbf{0.93}  & 0.87  & 0.50  & \textbf{0.96}  & \textbf{0.96}  & 0.56  & \textbf{0.96}  & 0.96  & 0.94 \\  
		025\_mug & 636 &0.89  & 0.89  & \textbf{0.92}  & 0.98  & 0.98  & \textbf{0.99}  & \textbf{1.00}  & \textbf{1.00}  & \textbf{1.00} \\  
		035\_power\_drill & 1057 &\textbf{0.98}  & 0.96  & 0.88  & \textbf{0.99}  & 0.99  & 0.98  & \textbf{0.99}  & \textbf{0.99}  & 0.99 \\  
		036\_wood\_block & 242 &\textbf{0.36}  & 0.33  & 0.07  & \textbf{0.96}  & 0.93  & 0.88  & \textbf{1.00}  & \textbf{1.00}  & \textbf{1.00} \\  
		037\_scissors & 181 &\textbf{0.75}  & 0.69  & 0.20  & \textbf{0.87}  & 0.84  & 0.70  & 0.99  & \textbf{0.99}  & 0.98 \\  
		040\_large\_marker & 648 &\textbf{1.00}  & \textbf{1.00}  & 0.99  & \textbf{1.00}  & \textbf{1.00}  & \textbf{1.00}  & \textbf{1.00}  & \textbf{1.00}  & \textbf{1.00} \\  
		051\_large\_clamp & 712 &\textbf{0.68}  & 0.64  & 0.25  & \textbf{0.71}  & 0.70  & 0.33  & \textbf{0.79}  & 0.79  & 0.79 \\  
		052\_extra\_large\_clamp & 682 &\textbf{0.33}  & 0.27  & 0.12  & \textbf{0.38}  & 0.34  & 0.17  & 0.69  & 0.70  & \textbf{0.74} \\  
		061\_foam\_brick & 288 &\textbf{0.26}  & 0.24  & 0.01  & \textbf{1.00}  & \textbf{1.00}  & 0.99  & \textbf{1.00}  & \textbf{1.00}  & \textbf{1.00} \\   
		\bottomrule
		
	\end{tabular}
	\caption{%
Accuracy results on the real YCB-Video test set, for accuracy thresholds 0.5cm, 1.0cm, and 2.0cm, and per object type.
3DP3 is our full scene graph inference algorithm and 3DP3* is an ablation that does not infer contact relationships.
`\# of Scenes' = The number of test images in which that object appears, out of the total 2,949 images.
`DF' = DenseFusion~\cite{wang2019densefusion}, a deep learning baseline.
}
	\label{tab:full_ycb}
\end{table}

\subsection{Generalization to challenging scenes}
\vspace{-2mm}Next, we evaluated our algorithm's performance on challenging scenes containing novel viewpoints, occlusions, and contact structure.
Our synthetic YCB-Challenging dataset consists of 2000 RGB-D images containing objects from the YCB object set \cite{calli2015benchmarking} in the following 4 categories of challenging scenes: (i) \textit{Single object}: Single object in contact with table, (ii) \textit{Stacked}: Stack of two objects on a table, (iii) \textit{Partial view}: Single object not fully in field-of-view, (iv) \textit{Partially Occluded}: One object partially occluded by another.
For this experiment, the MAP stage of our algorithm uses an alternative initialization (see supplement) that does not use DenseFusion.
We evaluate 3DP3 and the 3DP3* ablation alongside DenseFusion~\cite{wang2019densefusion} and another state-of-the-art baseline, Robust6D~\cite{tian2020robust}.
For most scenes and
objects, our approach significantly outperforms the baselines
(Table \ref{tab:full_synthetic}).
In Table \ref{tab:robustness}, we assess 3DP3's robustness by inspecting the error distribution at the 1st, 2nd,
and 3rd quartile for each scene type and object type. At Q3, 3DP3 consistently outperforms the baselines and we find that the drop in performance from Q1 and Q3 is less for 3DP3 than the baselines.

\begin{table}[]
	\fontsize{6.5pt}{6.5pt}\selectfont
	\setlength{\tabcolsep}{3pt}
	\centering
	\centering
	\begin{tabular}{@{}clc|cccc|cccc|cccc@{}}
		\toprule
		\multicolumn{1}{l}{} &
		&
		&
		\multicolumn{4}{c}{\textbf{0.5cm Threshold}} &
		\multicolumn{4}{c}{\textbf{1.0cm Threshold}} &
		\multicolumn{4}{c}{\textbf{2.0cm Threshold}} 
		\\
		\midrule
		\multirow{2}{*}{\textbf{Scene Type}} &
		\multirow{2}{*}{\textbf{Object Type}} &
		\multirow{2}{*}{\textbf{\# of Scenes}} &
		\multicolumn{4}{c}{Accuracy} &
		\multicolumn{4}{c}{Accuracy} &
		\multicolumn{4}{c}{Accuracy}
		\\
		&
		&
		&
		\multicolumn{1}{c}{3DP3} &
		\multicolumn{1}{c}{3DP3*} &
		\multicolumn{1}{c}{DF} &
		\multicolumn{1}{c}{R6D} &
		
		\multicolumn{1}{c}{3DP3} &
		\multicolumn{1}{c}{3DP3*} &
		\multicolumn{1}{c}{DF} &
		\multicolumn{1}{c}{R6D} &
		
		\multicolumn{1}{c}{3DP3} &
		\multicolumn{1}{c}{3DP3*} &
		\multicolumn{1}{c}{DF} &
		\multicolumn{1}{c}{R6D}
		\\
		\midrule \midrule
		
		\multirow{5}{*}{\begin{tabular}[c]{@{}c@{}}\textbf{Single}\\\textbf{Object}\end{tabular}}  
		& 002\_master\_chef\_can & 94 &\textbf{0.99}  & 0.95  & 0.45  & 0.03  & \textbf{1.00}  & \textbf{1.00}  & 0.69  & 0.46  & \textbf{1.00}  & \textbf{1.00}  & \textbf{1.00}  & 0.98 \\  
		& 003\_cracker\_box & 92 &\textbf{0.55}  & 0.39  & 0.16  & 0.00  & \textbf{0.98}  & \textbf{0.98}  & 0.39  & 0.02  & \textbf{1.00}  & \textbf{1.00}  & 0.78  & 0.42 \\  
		& 004\_sugar\_box & 109 &\textbf{0.90}  & 0.87  & 0.17  & 0.00  & \textbf{1.00}  & \textbf{1.00}  & 0.72  & 0.32  & \textbf{1.00}  & \textbf{1.00}  & \textbf{1.00}  & \textbf{1.00} \\  
		& 005\_tomato\_soup\_can & 108 &\textbf{0.88}  & 0.81  & 0.18  & 0.00  & \textbf{1.00}  & \textbf{1.00}  & 0.36  & 0.07  & \textbf{1.00}  & \textbf{1.00}  & 0.86  & 0.74 \\  
		& 006\_mustard\_bottle & 97 &\textbf{0.86}  & 0.79  & 0.48  & 0.01  & \textbf{1.00}  & \textbf{1.00}  & 0.57  & 0.36  & \textbf{1.00}  & \textbf{1.00}  & 0.81  & 0.89 \\  
		\midrule
		
		\multirow{5}{*}{\begin{tabular}[c]{@{}c@{}}\textbf{Stacked}\end{tabular}}   
		& 002\_master\_chef\_can & 190 &\textbf{0.86}  & 0.79  & 0.28  & 0.02  & \textbf{0.94}  & 0.93  & 0.56  & 0.39  & 0.95  & 0.95  & \textbf{1.00}  & 0.98 \\  
		& 003\_cracker\_box & 204 &\textbf{0.41}  & 0.24  & 0.16  & 0.00  & \textbf{0.85}  & 0.81  & 0.41  & 0.04  & \textbf{0.97}  & 0.96  & 0.76  & 0.40 \\  
		& 004\_sugar\_box & 214 &\textbf{0.63}  & 0.61  & 0.14  & 0.01  & \textbf{0.92}  & 0.91  & 0.61  & 0.33  & 0.94  & 0.94  & 0.99  & \textbf{0.99} \\  
		& 005\_tomato\_soup\_can & 193 &\textbf{0.67}  & 0.52  & 0.13  & 0.00  & \textbf{0.89}  & 0.86  & 0.28  & 0.06  & \textbf{0.90}  & 0.88  & 0.75  & 0.66 \\  
		& 006\_mustard\_bottle & 199 &\textbf{0.73}  & 0.60  & 0.44  & 0.03  & \textbf{0.94}  & 0.90  & 0.54  & 0.30  & \textbf{0.94}  & \textbf{0.94}  & 0.85  & 0.88 \\  
		\midrule

		\multirow{5}{*}{\begin{tabular}[c]{@{}c@{}}\textbf{Partial}\\\textbf{View}\end{tabular}}    
		& 002\_master\_chef\_can & 106 &\textbf{0.81}  & 0.80  & 0.11  & 0.00  & \textbf{1.00}  & \textbf{1.00}  & 0.30  & 0.04  & \textbf{1.00}  & \textbf{1.00}  & 0.67  & 0.42 \\  
		& 003\_cracker\_box & 99 &\textbf{0.18}  & 0.16  & 0.00  & 0.00  & \textbf{0.60}  & 0.57  & 0.01  & 0.00  & \textbf{0.82}  & 0.80  & 0.14  & 0.04 \\  
		& 004\_sugar\_box & 111 &\textbf{0.63}  & 0.59  & 0.00  & 0.00  & \textbf{0.89}  & \textbf{0.89}  & 0.08  & 0.04  & \textbf{1.00}  & \textbf{1.00}  & 0.73  & 0.68 \\  
		& 005\_tomato\_soup\_can & 87 &\textbf{0.34}  & 0.33  & 0.00  & 0.00  & \textbf{0.72}  & 0.71  & 0.13  & 0.00  & \textbf{0.83}  & 0.82  & 0.40  & 0.13 \\  
		& 006\_mustard\_bottle & 97 &0.55  & \textbf{0.62}  & 0.08  & 0.00  & \textbf{0.87}  & 0.86  & 0.23  & 0.00  & \textbf{0.96}  & 0.95  & 0.37  & 0.26 \\   
		\midrule

		\multirow{5}{*}{\begin{tabular}[c]{@{}c@{}}\textbf{Partially}\\\textbf{Occluded}\end{tabular}}  
		& 002\_master\_chef\_can & 130 &\textbf{0.71}  & 0.52  & 0.04  & 0.00  & \textbf{0.93}  & 0.90  & 0.13  & 0.02  & \textbf{0.99}  & \textbf{0.99}  & 0.22  & 0.12 \\  
		& 003\_cracker\_box & 500 &0.37  & 0.35  & \textbf{0.59}  & 0.00  & \textbf{1.00}  & \textbf{1.00}  & \textbf{1.00}  & 0.02  & \textbf{1.00}  & \textbf{1.00}  & \textbf{1.00}  & 1.00 \\  
		& 004\_sugar\_box & 117 &0.02  & 0.01  & \textbf{0.06}  & 0.00  & 0.30  & 0.27  & \textbf{0.40}  & 0.12  & \textbf{0.94}  & 0.93  & 0.84  & 0.77 \\  
		& 005\_tomato\_soup\_can & 124 &\textbf{0.04}  & 0.00  & 0.01  & 0.00  & \textbf{0.31}  & 0.23  & 0.14  & 0.06  & \textbf{0.81}  & 0.75  & 0.50  & 0.49 \\  
		& 006\_mustard\_bottle & 129 &\textbf{0.70}  & 0.43  & 0.55  & 0.03  & 0.84  & 0.74  & \textbf{0.95}  & 0.29  & 0.94  & 0.90  & \textbf{1.00}  & 0.99 \\  
		\bottomrule
		
	\end{tabular}
	\caption{Accuracy results on our synthetic YCB-Challenging data set. We report the number of scenes over which this accuracy is computed for each object and scene type. Accuracy is shown for 3DP3 and 3DP3*, which are our full method and an ablation that does not model contact relationships, respectively, and two deep learning baselines (DenseFusion (DF) ~\cite{wang2019densefusion} and Robust6D (R6D) ~\cite{tian2020robust}).}
	\label{tab:full_synthetic}
\end{table}

\begin{table}[t]
\fontsize{7.5pt}{7.5pt}\selectfont
\setlength{\tabcolsep}{3pt}
\centering
\centering
\begin{tabular}{@{}clrrrrrrrrrrrrrrr@{}}
\toprule
\multicolumn{1}{l}{} &
&
  \multicolumn{3}{c}{\textbf{Tomato Soup}} &
  \multicolumn{3}{c}{\textbf{Cracker Box}} &
  \multicolumn{3}{c}{\textbf{Potted Meat}} &
  \multicolumn{3}{c}{\textbf{Sugar Box}} &
  \multicolumn{3}{c}{\textbf{Master Chef}} \\ \midrule
\multirow{2}{*}{\textbf{Scene Type}} &
\multirow{2}{*}{\textbf{Method}} &
\multicolumn{3}{c}{ADD-S} &
\multicolumn{3}{c}{ADD-S} &
\multicolumn{3}{c}{ADD-S} &
\multicolumn{3}{c}{ADD-S} &
\multicolumn{3}{c}{ADD-S}
\\
&
&
  \multicolumn{1}{c}{Q1} &
  \multicolumn{1}{c}{Q2} &
  \multicolumn{1}{c}{Q3} &
  \multicolumn{1}{c}{Q1} &
  \multicolumn{1}{c}{Q2} &
  \multicolumn{1}{c}{Q3} &
  \multicolumn{1}{c}{Q1} &
  \multicolumn{1}{c}{Q2} &
  \multicolumn{1}{c}{Q3} &
  \multicolumn{1}{c}{Q1} &
  \multicolumn{1}{c}{Q2} &
  \multicolumn{1}{c}{Q3} &
  \multicolumn{1}{c}{Q1} &
  \multicolumn{1}{c}{Q2} &
  \multicolumn{1}{c}{Q3} 
  \\
\midrule \midrule
\multirow{3}{*}{\begin{tabular}[c]{@{}c@{}} \textbf{Single object} \end{tabular}} & 3DP3 (ours) & \textbf{0.35}  & \textbf{0.39}  & \textbf{0.41}  & \textbf{0.43}  & \textbf{0.49}  & \textbf{0.54}  & \textbf{0.36}  & 0.40  & 0.45  & \textbf{0.38}  & \textbf{0.43}  & \textbf{0.48}  & 0.37  & 0.43  & \textbf{0.48} \\ 
& 3DP3* (ours) & 0.35  & 0.40  & 0.43  & 0.47  & 0.52  & 0.62  & 0.36  & \textbf{0.39}  & \textbf{0.44}  & 0.40  & 0.45  & 0.49  & 0.35  & \textbf{0.41}  & 0.49 \\ 
& DenseFusion & 0.35  & 0.55  & 1.11  & 0.65  & 1.35  & 1.72  & 0.58  & 0.88  & 1.02  & 0.67  & 1.25  & 1.72  & \textbf{0.32}  & 0.61  & 1.85 \\ 
& Robust6D & 0.84  & 1.05  & 1.29  & 1.65  & 2.22  & 2.90  & 0.97  & 1.09  & 1.21  & 1.25  & 1.61  & 2.02  & 0.83  & 1.48  & 1.89 \\  \midrule

\multirow{3}{*}{\begin{tabular}[c]{@{}c@{}}\textbf{Stacked}\end{tabular}} & 3DP3 (ours) & \textbf{0.37}  & \textbf{0.42}  & \textbf{0.46}  & \textbf{0.46}  & \textbf{0.52}  & \textbf{0.60}  & \textbf{0.39}  & \textbf{0.45}  & \textbf{0.60}  & \textbf{0.40}  & \textbf{0.46}  & \textbf{0.52}  & 0.40  & \textbf{0.45}  & \textbf{0.51} \\ 
& 3DP3* (ours) & 0.38  & 0.42  & 0.48  & 0.50  & 0.60  & 0.79  & 0.40  & 0.46  & 0.64  & 0.43  & 0.49  & 0.61  & 0.41  & 0.47  & 0.56 \\ 
& DenseFusion & 0.49  & 0.87  & 1.21  & 0.66  & 1.33  & 1.97  & 0.63  & 0.92  & 1.16  & 0.93  & 1.42  & 1.96  & \textbf{0.37}  & 0.66  & 1.83 \\ 
& Robust6D & 0.84  & 1.15  & 1.36  & 1.68  & 2.21  & 2.86  & 0.92  & 1.11  & 1.26  & 1.37  & 1.72  & 2.21  & 0.94  & 1.38  & 1.82 \\

  \midrule

\multirow{3}{*}{\begin{tabular}[c]{@{}c@{}}\textbf{Partial view}\end{tabular}} & 3DP3 (ours) & 0.34  & \textbf{0.40}  & \textbf{0.47}  & \textbf{0.54}  & \textbf{0.76}  & 1.56  & \textbf{0.36}  & \textbf{0.45}  & \textbf{0.59}  & 0.47  & \textbf{0.55}  & \textbf{1.80}  & \textbf{0.36}  & 0.47  & \textbf{0.57} \\ 
& 3DP3* (ours) & \textbf{0.33}  & 0.40  & 0.47  & 0.56  & 0.90  & \textbf{1.54}  & 0.37  & 0.45  & 0.63  & \textbf{0.46}  & 0.59  & 1.81  & 0.36  & \textbf{0.46}  & 0.58 \\ 
& DenseFusion & 0.79  & 1.52  & 2.10  & 2.33  & 2.93  & 3.81  & 1.26  & 1.65  & 2.07  & 1.52  & 2.14  & 2.78  & 1.05  & 2.22  & 2.71 \\ 
& Robust6D & 1.43  & 2.25  & 2.93  & 3.40  & 4.03  & 4.77  & 1.51  & 1.83  & 2.13  & 2.24  & 2.99  & 4.30  & 1.97  & 2.50  & 3.27
 \\

 \midrule

\multirow{3}{*}{\begin{tabular}[c]{@{}c@{}}\textbf{Partially Occluded}\end{tabular}} & 3DP3 (ours) & \textbf{0.36}  & \textbf{0.42}  & \textbf{0.52}  & 0.48  & 0.52  & \textbf{0.55}  & 0.91  & 1.25  & 1.57  & \textbf{0.89}  & \textbf{1.69}  & \textbf{1.97}  & \textbf{0.36}  & \textbf{0.42}  & \textbf{0.55} \\ 
& 3DP3* (ours) & 0.39  & 0.49  & 0.64  & 0.48  & 0.53  & 0.58  & 0.97  & 1.29  & 1.66  & 1.03  & 1.72  & 1.99  & 0.43  & 0.58  & 1.01 \\ 
& DenseFusion & --  & --  & --  & \textbf{0.41}  & \textbf{0.48}  & 0.58  & \textbf{0.81}  & \textbf{1.11}  & \textbf{1.53}  & 1.47  & 2.01  & 3.18  & 0.38  & 0.48  & 0.64 \\ 
& Robust6D & --  & --  & --  & 1.30  & 1.48  & 1.61  & 1.15  & 1.46  & 1.87  & 1.48  & 2.02  & 3.24  & 0.94  & 1.10  & 1.33 \\    \bottomrule

\end{tabular}
\caption{Robustness of inference. We quantify the ADD-S error at 1st, 2nd, and 3rd quartiles for each scene type and object type in the synthetic dataset of hard scenes. A value of -- indicates the method made no prediction for the object's pose. 3DP3* denotes an ablated version of our method without inference of the scene graph structure and thus object-object contact.}
\label{tab:robustness}
\end{table}

\section{Discussion}
\vspace{-2mm}This paper presented 3DP3, a framework for generative modeling, learning, and inference with structured scenes and image data;
and showed that it improves the accuracy of 6DoF object pose estimation in cluttered scenes.
We used probabilistic programs to conceive of our generative model and represent it concisely; and
we used a probabilistic programming system~\citep{cusumano2019gen} with programmable inference~\citep{mansinghka2018probabilistic} to manage the complexity of our inference and learning algorithm implementations.
The current work has several limitations:
Our algorithm runs $\approx20$x slower than the DenseFusion baseline.
Our shape-learning algorithm requires that the training scenes contain only the single novel object,
whose identity is known across training frames.
Adding the ability to segment and learn models of novel objects in cluttered scenes
and automatically train object detectors and pose estimators for these objects from short RGB-D video sequences,
is an ongoing direction of work.
The model also does not yet incorporate some important prior knowledge about scenes---interpenetration of objects is permitted,
and constraints on physical stability are not incorporated.
More experiments are also needed to understand the implications of a Bayesian treatment of 3D scene perception.

\section{Acknowledgements}

The authors acknowledge Javier Felip Leon (Intel) for helpful
discussions and a prototype depth renderer, and Omesh Tickoo (Intel)
for helpful discussions. This work was funded in part by the DARPA
Machine Common Sense program (Award ID: 030523-00001); by the
Singapore DSTA / MIT SCC collaboration; by Intel's Probabilistic
Computing Center; and by philanthropic gifts from the Aphorism
Foundation and the Siegel Family Foundation. We thank Alex Lew, Tan Zhi-Xuan, Feras Saad, Cameron Freer, McCoy Becker, Sam Witty, and George Matheos for helpful feedback.

\newpage

\bibliographystyle{plain}
\bibliography{bibliography}

\appendix
\newpage
\section{Broader Impact}

While the goal of robust scene parsing and pose estimation is challenging, and
the present work is an early step with much more work lying ahead, it is
important to consider potential societal impacts of this work, both positive
and negative.  Robust pose estimation will be instrumental in improving the
reliability of a wide variety of applications---including assistive
technologies for people with limited mobility, improved fault detection in
manufacturing plants, and safer autopilot for autonomous vehicles.  On the
other hand, these same technologies, if used toward the wrong ends, could have
negative societal impacts as well, such as unjust or inequitable surveillance,
or weapon guidance systems that fall into the wrong hands.  Even applications
that are largely beneficial must be implemented thoughtfully to avoid negative
side effects.  For example, in the present work, the choice of prior
distribution on contact structures implies an inductive bias that, if chosen
incorrectly, could lead to technologies that are less reliable when the scene
being parsed contains a person in a wheelchair.  As a scientific community, it
is important that we place continued emphasis on developing technical
safeguards against both overt misuse and unintended consequences like the
above.  Furthermore, we must remember that technical safeguards on their own
are not sufficient: we must communicate to broader society not just the
benefits, but also the risks of this technology, so that users can be informed
participants and apply this technology towards a better world.

\section{Pose estimation from synthetic RGB images}

In the previous two sections, 3DP3 was used with a depth-rendering-based likelihood on depth images since an RGB-D image was given as input. In this section, we show that 3DP3 can be used to do pose estimation without depth data i.e. from just an RGB image. Instead of a depth likelihood, we substitute an RGB renderer and simple color likelihood. We qualitatively compare with Attend, Infer, Repeat (AIR) \cite{eslami2016attend}, an amortized inference approach based on recurrent neural networks which can be applied to infer poses of 3D objects. We generated scenes that resemble the tabletop scenes on which AIR qualitatively assessed pose inference accuracy. Figure \ref{fig:air-comparison} shows pairs of input RGB images and corresponding reconstructions from pose inferences made by 3DP3. Qualitatively, our system produces pose inferences of better or equal accuracy to AIR. Importantly, our system does not require training. In contrast, AIR takes approximately 3 days for training to converge. Also at these lower resolutions, our inference can run in 0.5s per frame.
\begin{figure}[h!]
	\centering
	\includegraphics[width=\textwidth]{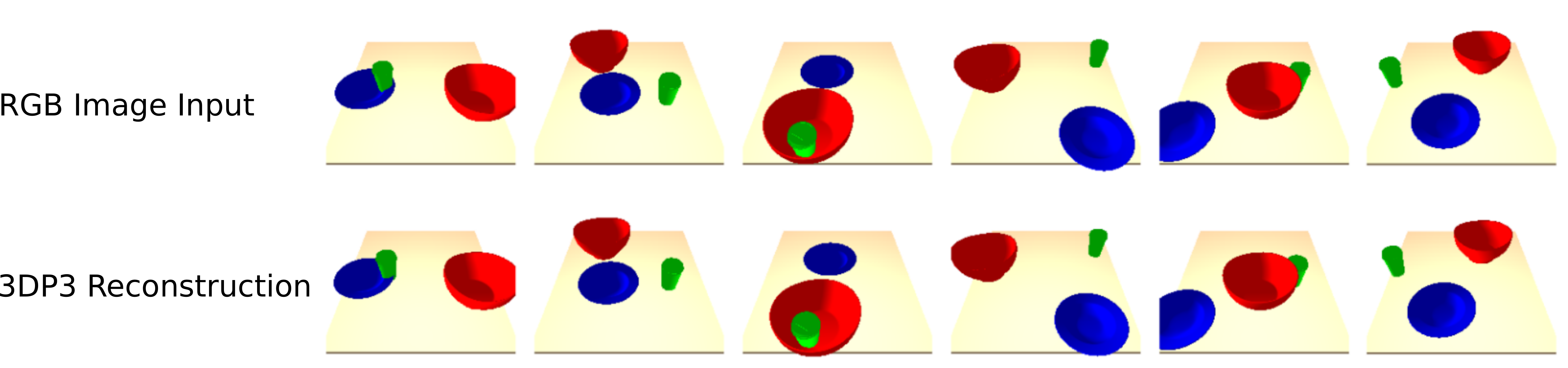}
	\caption{%
		A variant of our scene graph inference algorithm that uses a RGB-based likelihood applied to synthetic RGB scenes designed to resemble those used in the evaluation of AIR~\citep{eslami2016attend}.
		Our algorithm gives accurate reconstructions with 0.5 seconds of inference time on these scenes and no training.
	}
	\label{fig:air-comparison}
\end{figure}

\section{Shape Learning Accuracy Quantitative Evaluation}
\label{sec:shape_learning_quantitative_evaluation}

\begin{table}[H]
	\fontsize{7.5pt}{7.5pt}\selectfont
	\setlength{\tabcolsep}{3pt}
	\centering
	\centering
	\begin{tabular}{|l|l|}
		\toprule
Object Type&IoU\\
\midrule
002\_master\_chef\_can & 0.9544  \\
003\_cracker\_box & 0.9716  \\
004\_sugar\_box & 0.9484  \\
005\_tomato\_soup\_can & 0.9433 \\
006\_mustard\_bottle & 0.9671 \\
007\_tuna\_fish\_can & 0.9696 \\
008\_pudding\_box & 0.9617 \\
009\_gelatin\_box & 0.9451 \\
010\_potted\_meat\_can & 0.9654  \\
011\_banana & 0.9599 \\
019\_pitcher\_base & 0.9808 \\
021\_bleach\_cleanser & 0.9582 \\
024\_bowl & 0.9694 \\
025\_mug & 0.9621 \\
035\_power\_drill & 0.966 \\
036\_wood\_block & 0.9679 \\
037\_scissors & 0.9505 \\
040\_large\_marker & 0.9767 \\
051\_large\_clamp & 0.9218 \\
052\_extra\_large\_clamp & 0.9228 \\
061\_foam\_brick & 0.9405 \\
\bottomrule
	\end{tabular}
	\caption{We include a quantitative evaluation comparing the learned shape models to the ground truth shape models. To get a shape model from the learned shape prior, we take all voxels which the prior says are more likely to be occupied than unoccupied and compute the IoU between that volume and the ground truth object volume.}
\end{table}

\section{YCB-Challenging Dataset}

YCB-Challenging is a synthetic test dataset of 2000 RGB-D images, 500 in each of the following 4 categories:

\begin{minipage}[b]{0.48\linewidth}
	\centering
	\setlength{\fboxsep}{0pt}%
	\setlength{\fboxrule}{1pt}%
	{\small \textbf{Single object}: Single object in contact with table}\\
	\fbox{\includegraphics[width=0.33\textwidth]{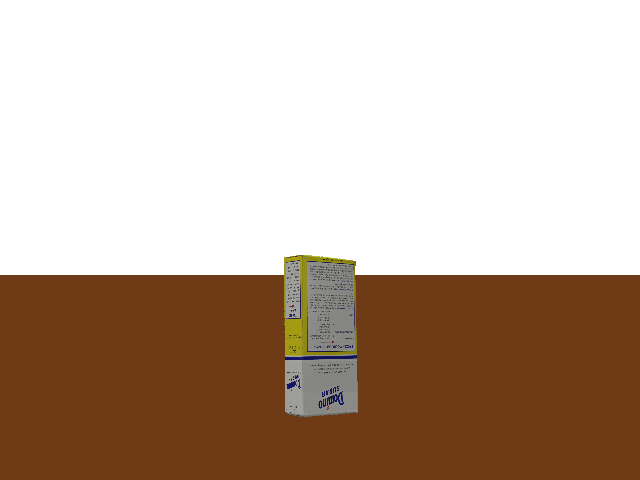}}%
	\fbox{\includegraphics[width=0.33\textwidth]{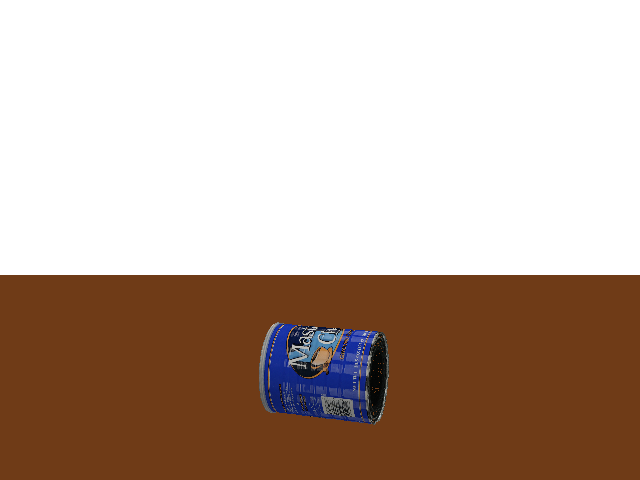}}%
	\fbox{\includegraphics[width=0.33\textwidth]{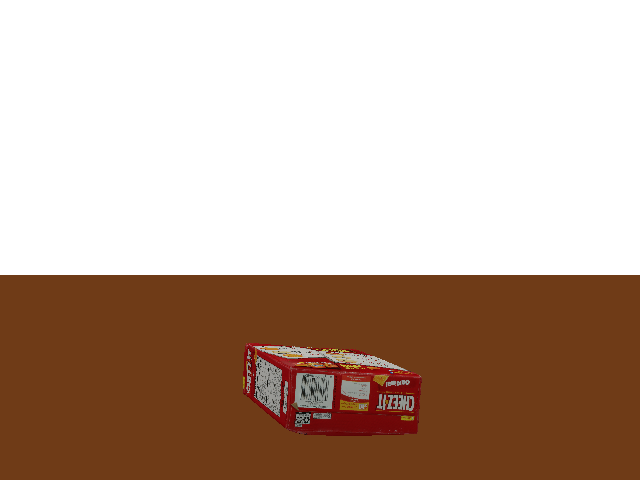}}%
\end{minipage}\hfill%
\begin{minipage}[b]{0.48\linewidth}
	\centering
	\setlength{\fboxsep}{0pt}%
	\setlength{\fboxrule}{1pt}%
	{\small \textbf{Stacked}: Stack of two objects on a table}\\
	\fbox{\includegraphics[width=0.33\textwidth]{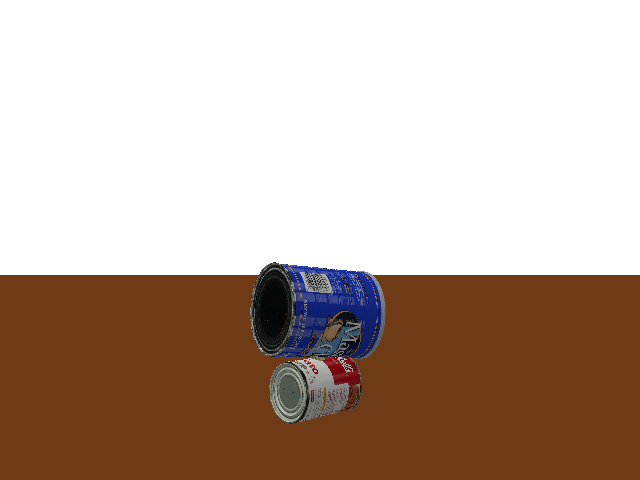}}%
	\fbox{\includegraphics[width=0.33\textwidth]{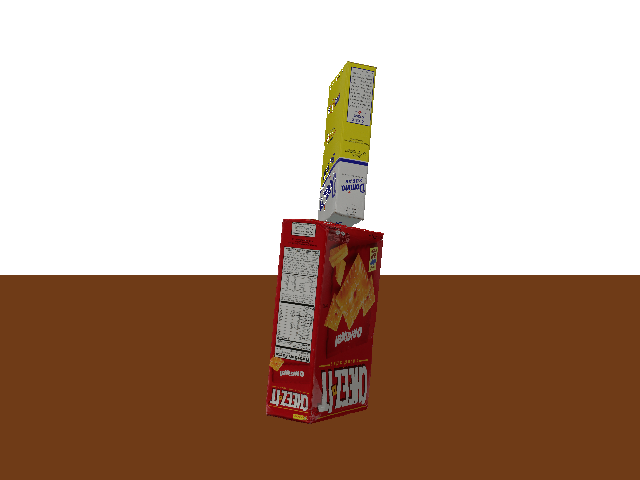}}%
	\fbox{\includegraphics[width=0.33\textwidth]{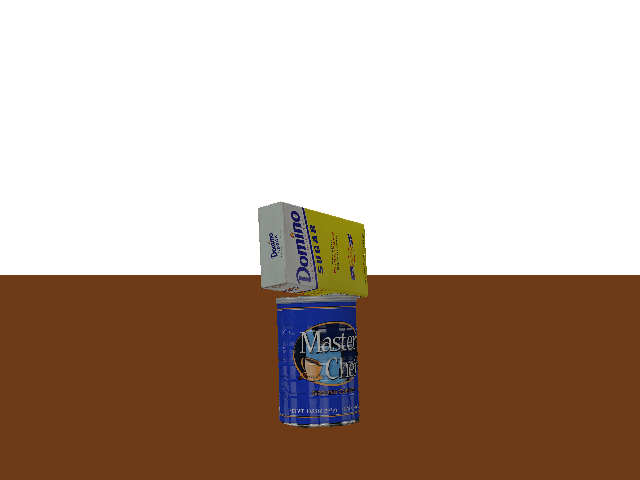}}%
\end{minipage}\\
~\\
\begin{minipage}[b]{0.48\linewidth}
	\centering
	\setlength{\fboxsep}{0pt}%
	\setlength{\fboxrule}{1pt}%
	{\small \textbf{Partial view}: Single object not fully in field-of-view}\\
	\fbox{\includegraphics[width=0.33\textwidth]{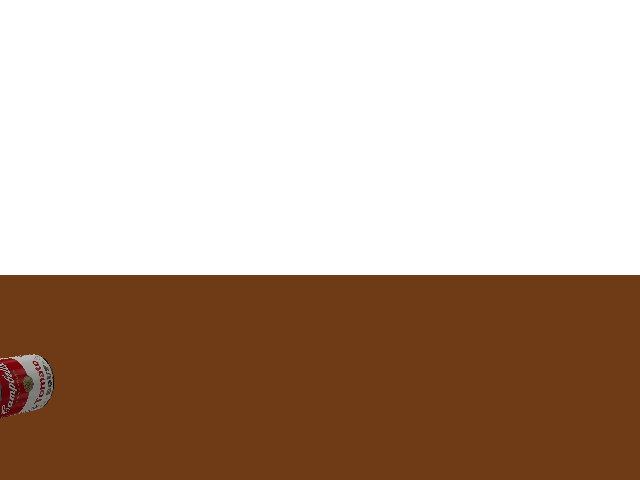}}%
	\fbox{\includegraphics[width=0.33\textwidth]{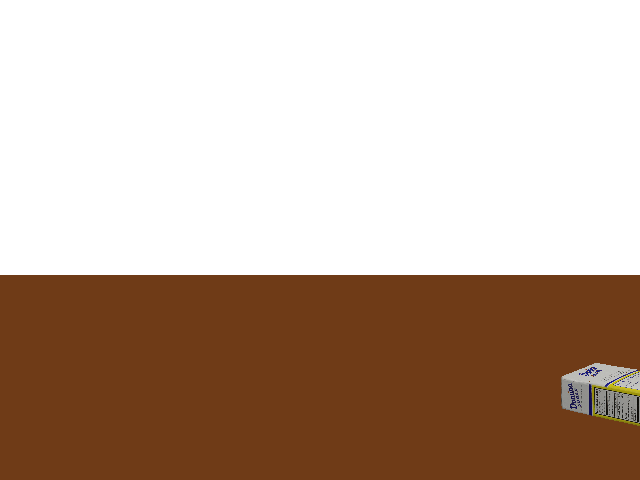}}%
	\fbox{\includegraphics[width=0.33\textwidth]{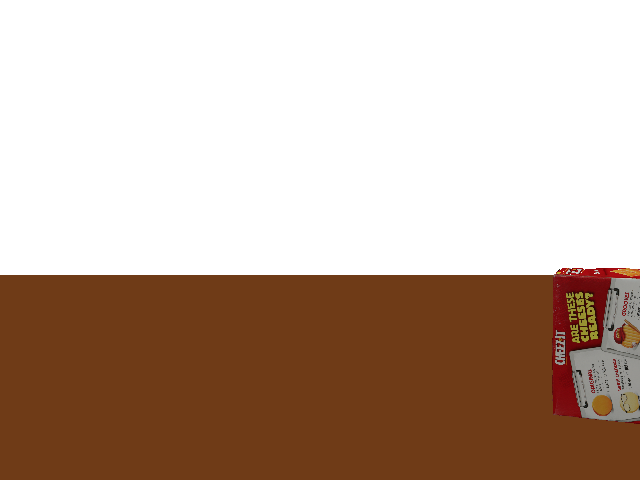}}%
\end{minipage}\hfill%
\begin{minipage}[b]{0.48\linewidth}
	\centering
	\setlength{\fboxsep}{0pt}%
	\setlength{\fboxrule}{1pt}%
	{\small \textbf{Partially Occluded}: One object occluded by another}\\
	\fbox{\includegraphics[width=0.33\textwidth]{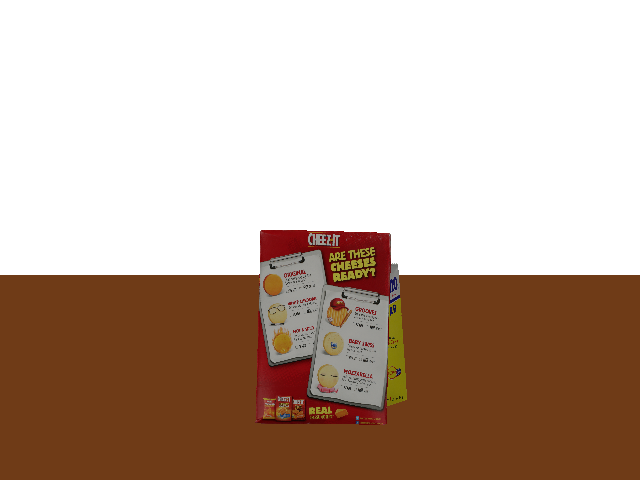}}%
	\fbox{\includegraphics[width=0.33\textwidth]{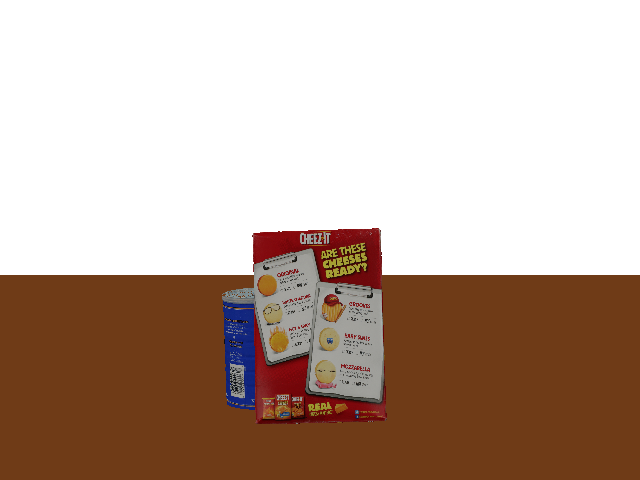}}%
	\fbox{\includegraphics[width=0.33\textwidth]{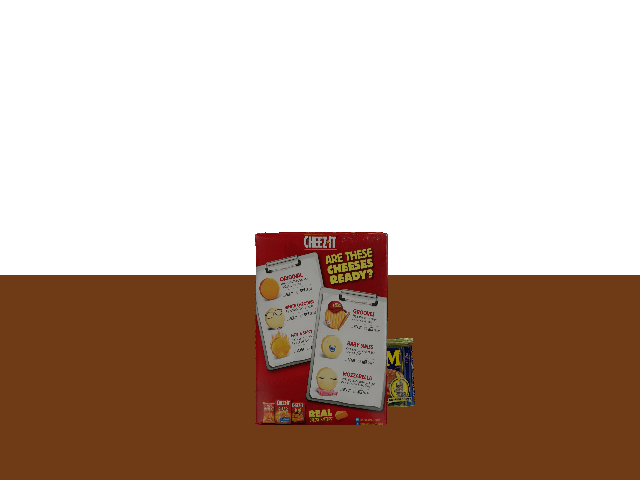}}%
\end{minipage}\\

\newpage

\section{YCB-Challenging Extended Experimental Results}

\begin{figure}[h!]
	\centering
	\includegraphics[width=0.8\textwidth]{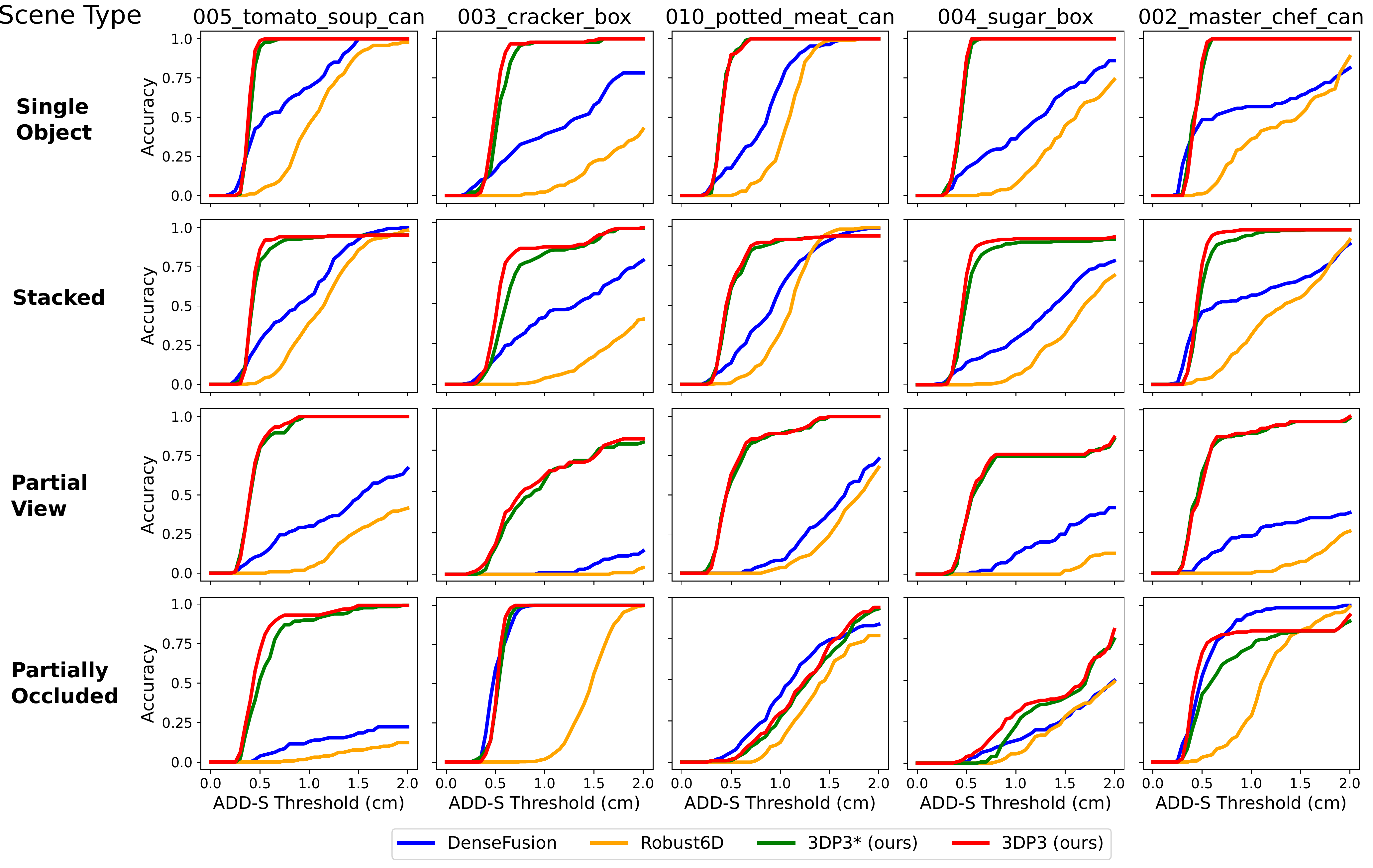}
	\caption{Accuracy of our method and two deep learning baselines (DenseFusion~\cite{wang2019densefusion} and Robust6D~\cite{tian2020robust}) on the task of 6DoF pose estimation in our synthetic `YCB-Challenging' dataset.
		For each of the 4 scene types (rows) and 5 object types (columns), we measure accuracy for a range of ADD-S thresholds.
		3DP3* denotes an ablated version of our method without inference of the scene graph structure and thus object-object contact (i.e. we fix the scene graph to $G_0$)}
	
	\label{fig:synthetic_accuracy}
\end{figure}

\section{YCB-Video Dataset}

\begin{figure}[h!]
	\centering
	\includegraphics[width=0.75\textwidth]{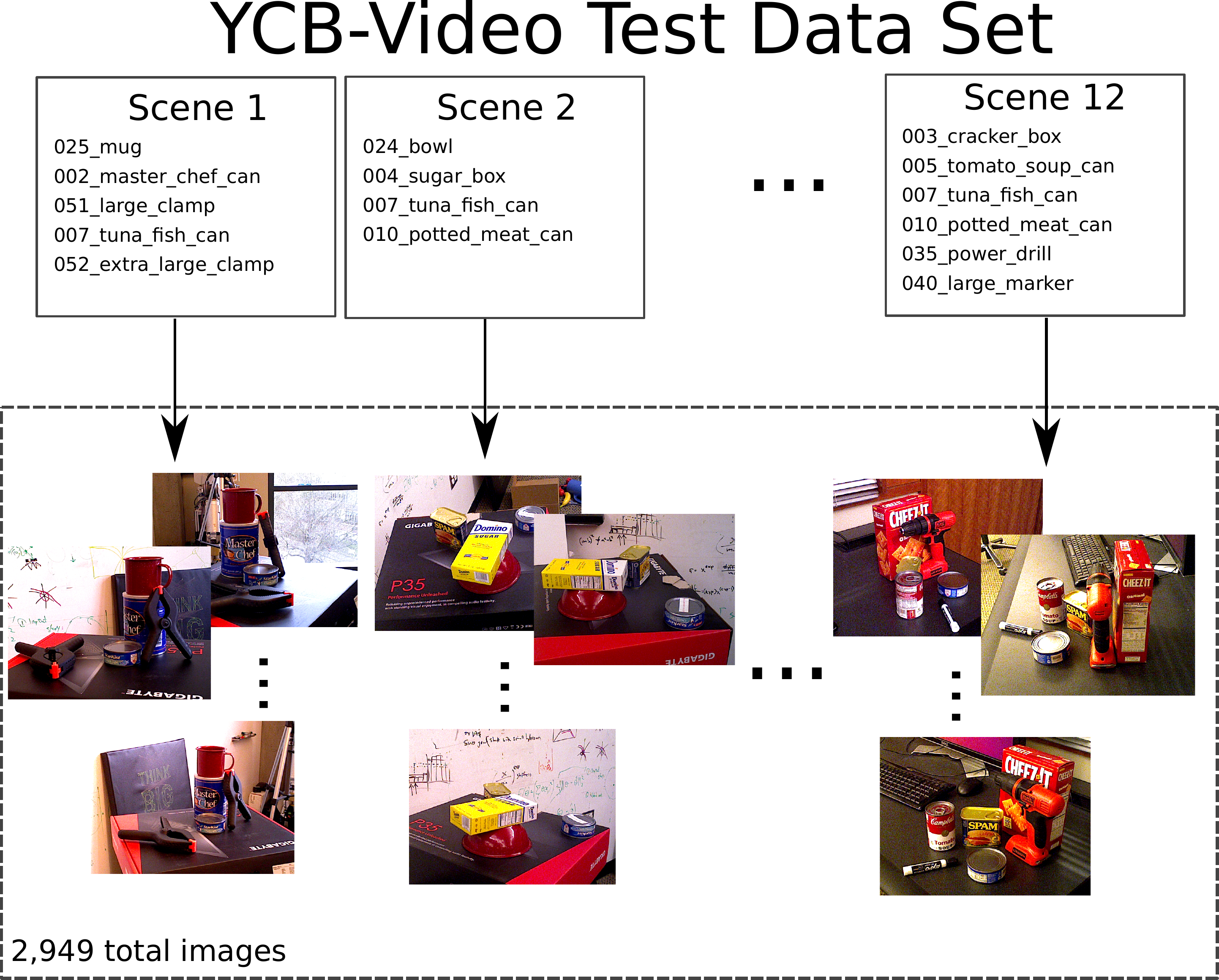}
	\caption{The YCB-Video \cite{calli2015benchmarking} test data set consists of 2,949 real RGB-D images featuring the 21 YCB objects. These 2,949 images are collected from videos of 12 different scenes where the camera pans around the scene to view it from different perspectives.  The 12 scenes contain different subsets of the 21 YCB objects, and some objects appear in multiple scenes (e.g 007\_tuna\_fish\_can appears in Scenes 1, 2, and 12).}
	\label{fig:ycb_schematic}
\end{figure}

\section{Ablation Qualitative Results on YCB-Video}

\begin{figure}[hp]
	\centering
	\includegraphics[width=\textwidth]{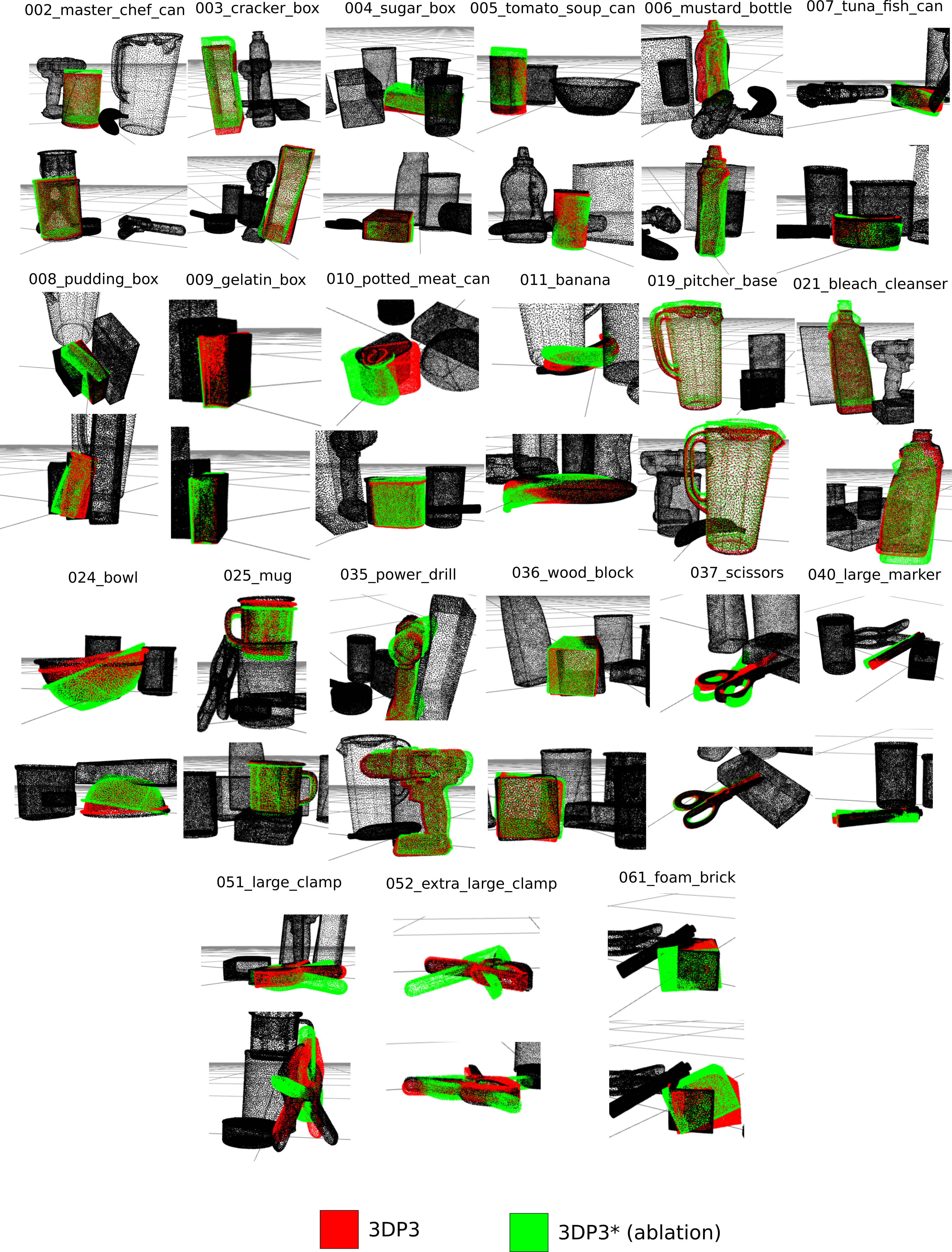}
	\caption{Comparison with ablated model on YCB-V real scenes. For each of the 21 YCB objects, we show 2 images of scenes containing that object and the poses estimated by our full method (3DP3) and an ablated version of our method (3DP3*) that does not model contact relationships.}
	\label{fig:ycb_ablation_qualitative}
\end{figure}

\section{Qualitative Results on YCB-Video}

\begin{figure}[hp!]
	\centering
	\includegraphics[width=0.75\textwidth]{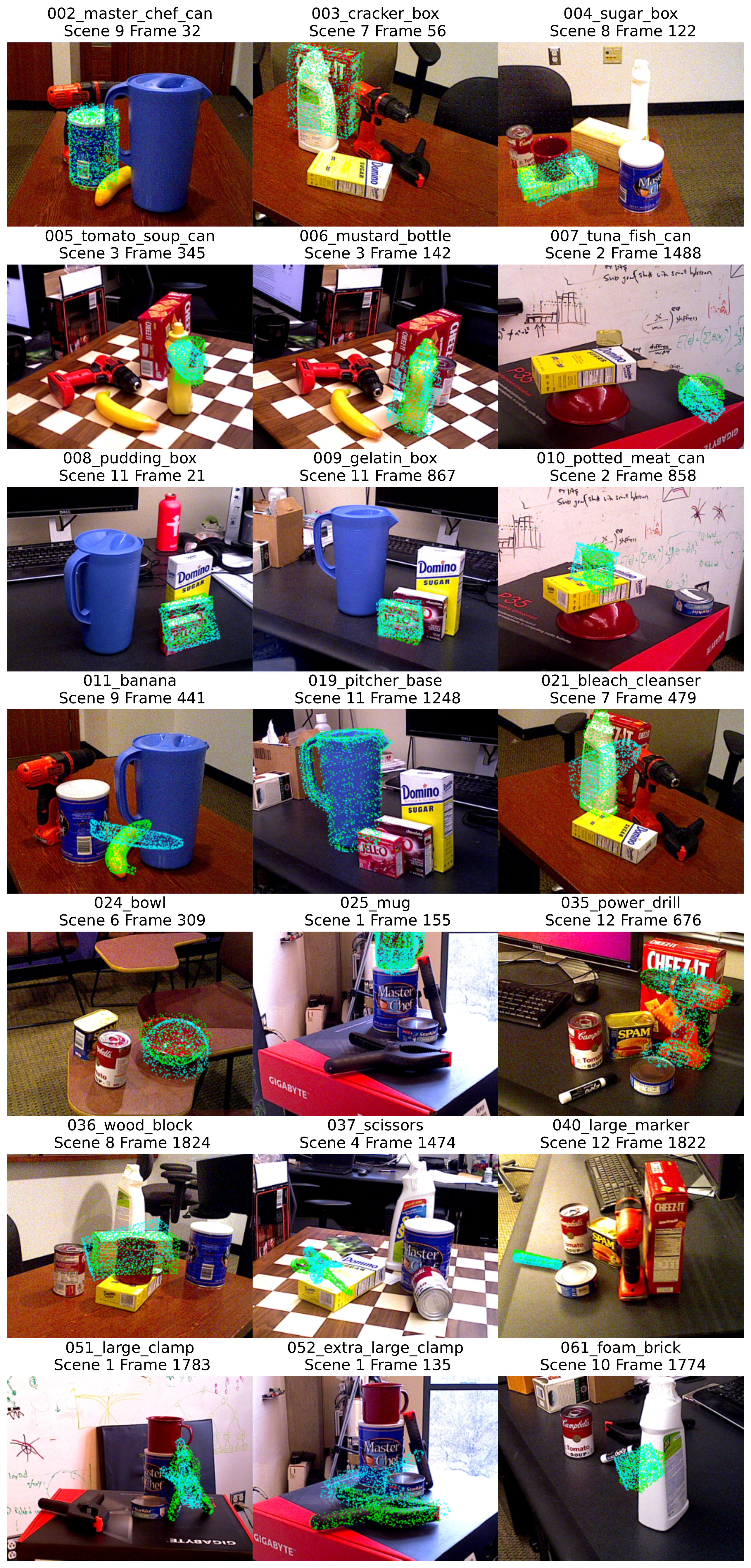}
	\caption{YCB frames for each object, overlayed with pose estimates of DenseFusion (cyan) and 3DP3 (green), where there is a large performance difference between the two methods.}
	\label{fig:ycb_full_page_1}
\end{figure}

\begin{figure}[hp!]
	\centering
	\includegraphics[width=0.75\textwidth]{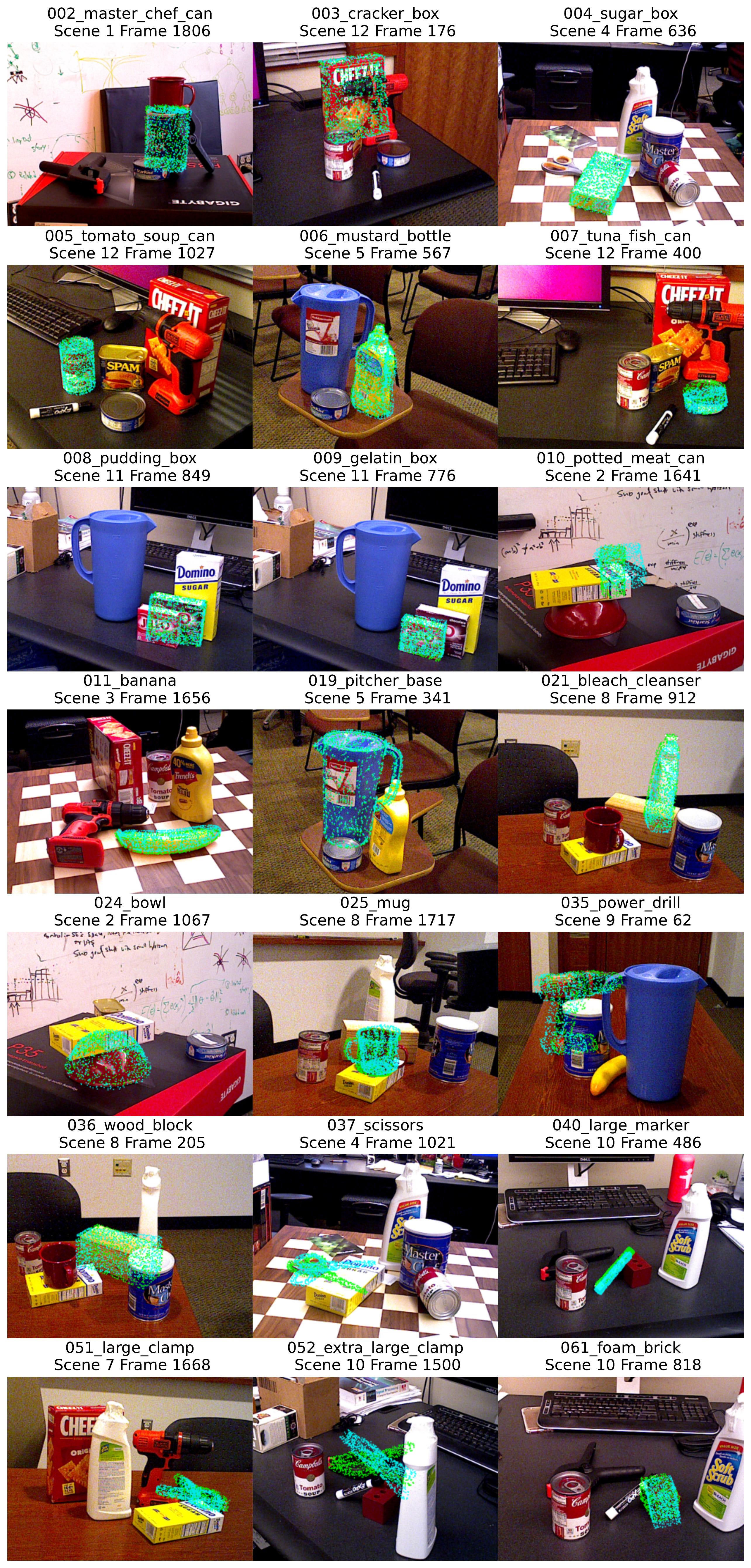}
	\caption{YCB frames for each object, overlayed with pose estimates of DenseFusion (cyan) and 3DP3 (green), where there is almost no performance difference between the two methods.}
	\label{fig:ycb_full_page_3}
\end{figure}

\newpage

\section{Distilling shape distributions via variational inference}

Recall that:
\begin{equation}
p'(\mathbf{O} | \mathbf{I}_{1:T})
= \sum_{k=1}^K w_k p'(\mathbf{O} | \mathbf{M}^{(k)}, \mathbf{x}_{1:T}^{(k)}, \mathbf{I}_{1:T})
\end{equation}
Consider the following variational family:
\begin{equation}
q_{\varphi}(\mathbf{O}) := \prod_{i\in[h]} \prod_{j\in[w]} \prod_{\ell\in[l]} \varphi_{ij\ell}^{O_{ij\ell}} \cdot (1 - \varphi_{ij\ell})^{(1-O_{ij\ell})}
\end{equation}
where each $0 \le \varphi_{ij\ell} \le 1$ can be interpreted as a per-voxel occupancy probability.
Then
\begin{align*}
\mathrm{KL}(p'(\mathbf{O} | \mathbf{I}_{1:T}) || q_{\varphi}(\mathbf{O}))
&= \mathbb{E}_{\mathbf{O} \sim p'(\cdot | \mathbf{I}_{1:T})} \left[
    \log \frac{
        p'(\mathbf{O} | \mathbf{I}_{1:T})
    }{
        q_{\varphi}(\mathbf{O})
    }
    \right]
\end{align*}
Note that minimizing this KL divergence with respect to $\varphi$ is equivalent to maximizing the following quantity:
\begin{align*}
& \mathbb{E}_{\mathbf{O} \sim p'(\cdot | \mathbf{I}_{1:T})} \left[
        \log q_{\varphi}(\mathbf{O})
    \right]\\
&= \sum_{k=1}^K w_k 
    \mathbb{E}_{\mathbf{O} \sim p'(\cdot | \mathbf{M}^{(k)}, \mathbf{x}_{1:T}^{(k)}, \mathbf{I}_{1:T})} \left[
            \log q_{\varphi}(\mathbf{O})
\right]\\
&= \sum_{k=1}^K w_k 
    \mathbb{E}_{\mathbf{O} \sim p'(\cdot | \mathbf{M}^{(k)}, \mathbf{x}_{1:T}^{(k)}, \mathbf{I}_{1:T})} \left[
    \sum_{i \in [h]} \sum_{j \in [w]} \sum_{\ell \in [l]} \log q_{\varphi}(O_{ij\ell})
\right]\\
&= \sum_{k=1}^K w_k 
    \mathbb{E}_{\mathbf{O} \sim p'(\cdot | \mathbf{M}^{(k)}, \mathbf{x}_{1:T}^{(k)}, \mathbf{I}_{1:T})} \left[
    \sum_{i \in [h]} \sum_{j \in [w]} \sum_{\ell \in [l]} O_{ij\ell}\log \varphi_{ij\ell} + (1 - O_{ij\ell}) \log (1 - \varphi_{ij\ell})
\right]\\
&= 
\sum_{i \in [h]} \sum_{j \in [w]} \sum_{\ell \in [l]} 
\sum_{k=1}^K w_k 
    p'(O_{ij\ell} = 1 | \mathbf{M}^{(k)}, \mathbf{x}_{1:T}^{(k)}, \mathbf{I}_{1:T})
    \log \varphi_{ij\ell}
    + p'(O_{ij\ell} = 0 | \mathbf{M}^{(k)}, \mathbf{x}_{1:T}^{(k)}, \mathbf{I}_{1:T})
    \log (1 - \varphi_{ij\ell})
\end{align*}
The optimization decomposes into separate problems for each $\varphi_{ij\ell}$, with global optimum:
\begin{align*}
\varphi_{ij\ell}^* = \sum_{k=1}^K w_k p'(O_{ij\ell} = 1 | \mathbf{M}^{(k)}, \mathbf{x}_{1:T}^{(k)}, \mathbf{I}_{1:T})
\end{align*}

\section{Probabilistic SLAM using Sequential Monte Carlo}

To infer the camera poses $\mathbf{x}_{1:T}$ corresponding to the sequence of $T$ depth images, we implemented a probabilistic SLAM using Sequential Monte Carlo. We assume the depth images contain background, which can be mapped and used for localization between frames. (In our data, the object is placed in the center of a rectangular room with a floor, ceiling, and four walls.) We also assume that the camera is the same distance away from the object in all $T$ images. Finally, in order to ensure a reference frame match between the learned object model and ground truth model (such that at test time, object pose estimates of our system can be compared to the ground truth object poses), we provide the initial pose of the object in the camera frame.

To perform SLAM, we initialize a set of $K$ particles with the observation, camera pose, and implied map at $t=1$. Then, we enumerate over the position and viewing angle at $t=2$, given the map at $t=1$, observation at $t=2$, and a prior on the camera pose conditioned on the pose at $t=1$, and compute scores for each pose. We then construct a Gaussian mixture proposal where each component is centered on a different pose and has weight corresponding to the normalized score computed by the enumeration. We step the particles forward to $t=2$ with this proposal distribution. Then, for each of the particles, we update the map given the observations and inferred poses. We repeat this for all $T$ timesteps, at which point we have $K$ particles with inferred camera poses for each of the $T$ timesteps.

The depth image likelihood $p'(\mathbf{I}_t | \mathbf{O}, \mathbf{M}, \mathbf{x}_t) = \delta_{d(\mathbf{O}, \mathbf{M}, \mathbf{x}_t)}(\mathbf{I}_t)$ where $d$ is a depth rendering function.
\section{Pose initialization for Scene Graph Inference}

In the first stage of our scene graph inference algorithm, we obtain initial estimates of the 6DoF object poses via maximum-a-posteriori (MAP) inference in a restricted variant of our model that assumes no edges between objects in the scene graph. 
We maintain a set of particles with each particle assigned to a different object, and we have at least one particle assigned to each object. Then, for each particle we apply Metropolis-Hastings (MH) kernels to pose of the object that the particle is assigned to. After applying these MH kernels, we resample the set of particles using their normalized weights. We repeat this process of applying the MH kernels and resampling for a fixed number of iterations (proportional to the number of objects). We construct the MH kernels for each object by using spatial clustering and iterative closest point (ICP) to compute a set of  ``high-quality'' poses for each object type given the observed scene. We first apply DBSCAN to the set of points that are unexplained by the current hypothesized scene. Then we create a set of initial object pose hypotheses with translation selected from the $C$ cluster centers output by DBSCAN and orientation selected from the set of 24 nominal orientations, for a total of $24\cdot C$ poses. (The 24 orientations are the rotational symmetries of a cube.). Next, we refine these initial pose estimates using ICP. The ICP does not use the full object model, but rather renders the object at the hypothesized pose and computes the corresponding point cloud. We score the resulting pose estimates under the generative model and use the normalized weights to construct a mixture proposal that serves as the MH kernel.

In addition to the above MH kernel, we also experimented with kernels based on
Boltzmann proposals where the Hamiltonian is determined by performing a 3D
convolution of a mask with the observed point cloud. Such
proposals can potentially be used as ``compiled detectors" of the object models
$\mathbf{O}_{1:M}$, enabling us to perform online object learning and scene
parsing. This class of proposals takes the following general form:
\begin{enumerate}
\item Discretize the observation into a 3D grid $\Gamma$.
\item Given the object model $\mathbf{O}$, create $k$ convolutional masks to be
convolved with the grid. Each mask is meant to detect $\mathbf{O}$ at a certain
orientation. The candidate orientations are obtained from an appropriately fine
geodesic grid on a sphere.
\item Slide each mask over $\Gamma$ and calculate the convolution of the mask
and $\Gamma$. 
\item Fix $\beta > 0$, and propose a pose from a Boltzmann distribution
with temperature $\beta$, where the Hamiltonian of each pose is given by the
convolution of its associated mask with $\Gamma$.
\end{enumerate}
We tried multiple approaches for deriving convolutional masks from objects
models. Maximally informative and maximally correlated masks require us to
solve ill-posed optimization problems. Small windows sampled from the object
model are not informative. These masks can give good proposals when combined
with expensive ensembling and outlier detection, but they are unsuitable for
online settings. Our best results come from globally-sparse, locally-dense
\cite{schnabel2007efficient}, randomly selected masks. These masks are
computationally efficient to apply and give results that are qualitatively
comparable to the ICP-based kernel, but the sampling distribution of the masks
have high-variance. In future work, we plan to further investigate this class
of proposals.

\section{Parsing scenes with fully occluded objects and number uncertainty} \label{sec:existential-doubt}

Consider the setting when the number of objects in the scene ($N$) is unknown a-priori.
Possibilities for prior probability distributions on $N$ ($p(N)$) include
(i) an a-priori known number of objects $N_0$ (used in the experiments in Section 6),
(ii) $\mathrm{Binomial}(N_0, p_{\mathrm{present}})$, which is induced by a prior belief that each of $N_0$ objects is present with probability $p_{\mathrm{present}}$ (used in experiments described in Supplement~\ref{sec:existential-doubt}),
and
(iii) $\mathrm{Poisson}(\lambda)$, which places no a-priori upper bound on the number of objects.

In this section, we apply our framework to do probabilistic inference about the 6DoF pose and presence or absence of a fully occluded object, and investigate the dynamics of these inferences as we vary the fraction of the volume in the scene that is occluded.

Suppose a robot is tasked with assembling a piece of furniture, performing maintenance on a vehicle, or retrieving something from the kitchen.
In each of these cases, the robot has a strong prior expectation that some object (e.g. a tool, component, or kitchen item) is present in the environment.
However, in complex cluttered real-world environments the target object is likely to be fully occluded from the robot's view.
That target object may even even be absent, especially in human-robot interactive task situations (e.g. the component or item is missing or misplaced).
To perform rationally in such situations, the robot will need to generate possible poses of the object that are concordant with its \emph{absence} from its visual field.
Also, the robot must consider the possibility that the object is indeed not present, by weighing the lack of observed presence of the object against the prior expectations.

\begin{figure}[p]
\begin{subfigure}[t]{1\textwidth}
    \centering
    \includegraphics[width=0.7\textwidth]{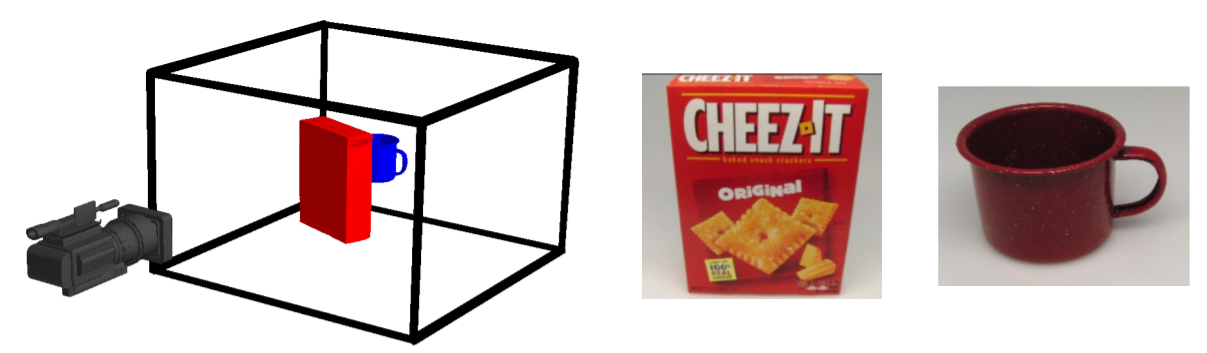}
    \caption{The scenario (left). A depth camera is viewing a scene that may or may not contain a cracker box (middle) and a mug (right). We perform Bayesian inference on the existence and contingently, their 6DoF poses within the scene, of both objects. Only the cracker box is visible in the observed depth images (see below).}
    \label{fig:scenario}
\end{subfigure}
\begin{subfigure}[t]{\textwidth}
    \centering
    \includegraphics[width=\textwidth]{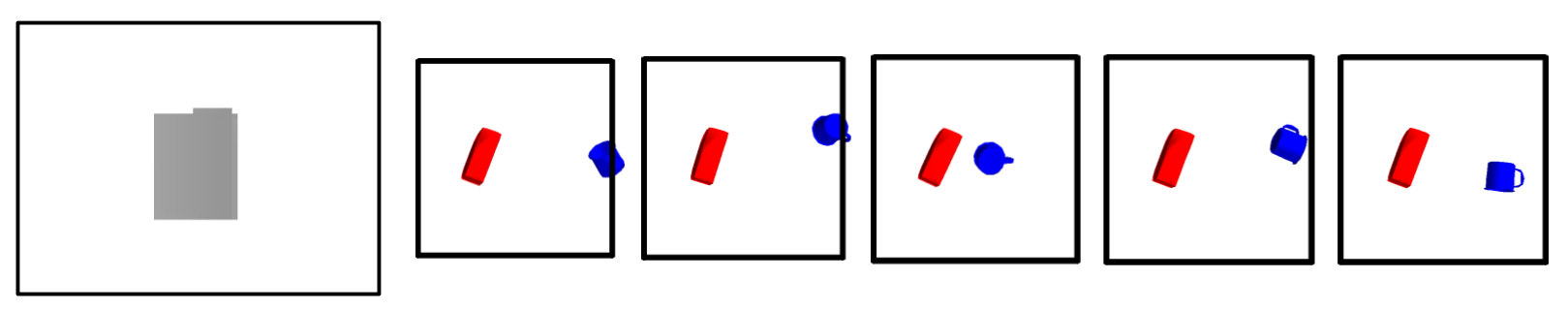}
    \caption{Observed depth image and posterior samples where the existence of both the box and mug are assumed.}
    \label{fig:results-1}
\end{subfigure}
\begin{subfigure}[t]{\textwidth}
    \centering
    \includegraphics[width=\textwidth]{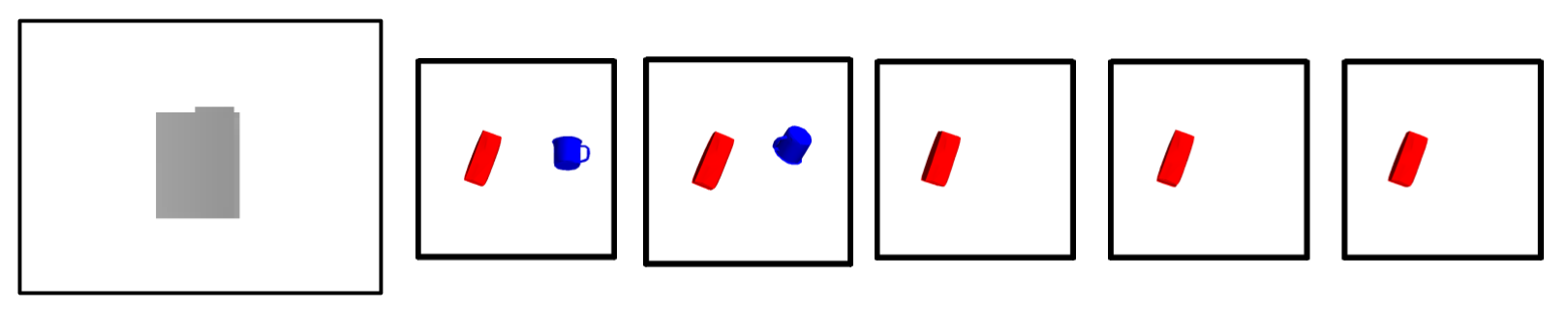}
    \caption{Observed depth image and posterior samples where the existences of each object have prior probability 0.90.
The posterior probabilities of existence for the mug and box are 0.37 and 1.0, respectively.}
    \label{fig:results-2}
\end{subfigure}
\begin{subfigure}[t]{\textwidth}
    \centering
    \includegraphics[width=\textwidth]{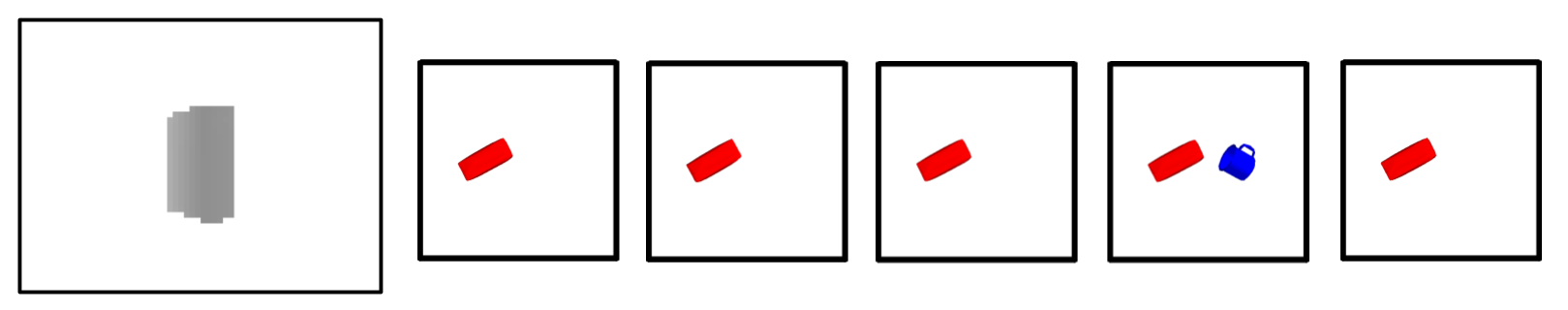}
    \caption{Observed depth image and posterior samples where the existences of each object have prior probability 0.90.
Note that the observed image has the box angled so that it occupies less of the field of view than in (c).
The posterior probabilities of existence for the mug and box are 0.33 and 1.0, respectively.}
    \label{fig:results-3}
\end{subfigure}
\begin{subfigure}[t]{\textwidth}
    \centering
    \includegraphics[width=\textwidth]{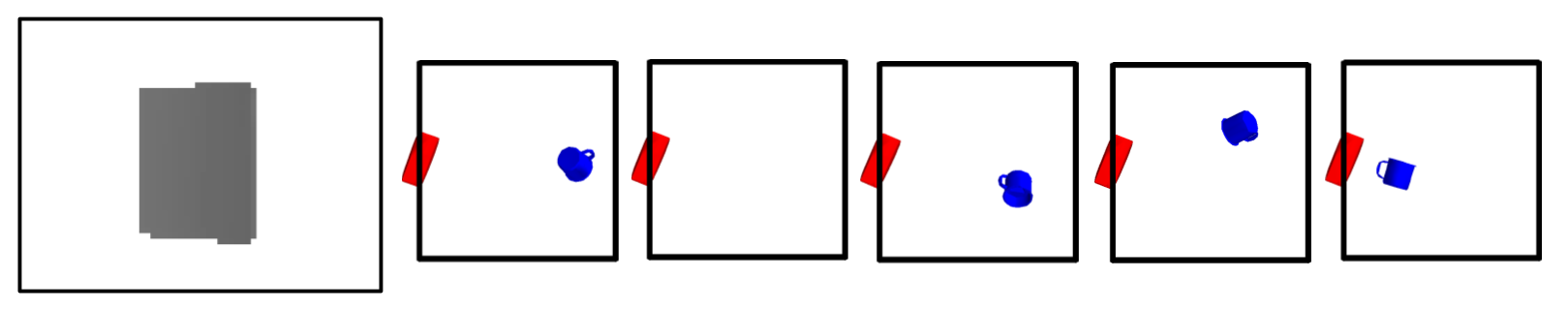}
    \caption{Observed depth image and posterior samples where the existences of each object have prior probability 0.90.
Note that the observed image has the box closer to the camera, so that it occupies more of the field of view than in (c). 
The posterior probabilities of existence for the mug and box are 0.63 and 1.0, respectively.}
    \label{fig:results-4}
\end{subfigure}%
    \caption{
Inferring the 6DoF pose and existence of multiple objects from depth images using MCMC in a generative model.
Several scenarios are shown, with five approximate posterior samples from each.
To estimate posterior probabilities of object existence, 20 posterior samples were used.
The \emph{lack of percept} of the mug in the visual field (i) reduces the posterior probability of its presence, but also (ii) informs the distribution on its 6DoF pose, if it is present.
Note that as the fraction of volume in the scene that is occluded by the box decreases, the posterior probability that mug is present decreases.
}
    \label{fig:results}
\end{figure}

\paragraph{Prior}
Consider a scenario where there are $N_0 = M$ unique objects that may or may not be present in a scene
($N_0$ denotes the total number of object instances, and $M$ denote the number of object types).
Suppose that the prior probability that the object of type $m$ is present with probability $p_{\mathrm{pres}}^{(m)}$ for $m \in \{1, \ldots, M\}$.
Then, the prior $p(N, \mathbf{c})$ is:
\begin{equation}
p(N, \mathbf{c}) =
\left\{
\begin{array}{ll}
\frac{1}{N!} \prod_{m=1}^M {p_{\mathrm{pres}}^{(m)}}^{\mathbf{1}[m \in \mathbf{c}]} {(1 - p_{\mathrm{pres}}^{(m)})}^{\mathbf{1}[m \not \in \mathbf{c}]} &
\mbox{if } |\mathbf{c}| = N \mbox{ and } \sum_{i=1}^N \mathbf{1}[c_i = m] \le 1\, \forall m\\
0 & \mbox{otherwise}
\end{array}
\right.
\end{equation}
In the special case when $p_{\mathrm{pres}}^{(m)}$ is the same for all $m$ (this is the case in our experiments below), we can write the marginal distribution $p(N)$ and conditional distribution $p(\mathbf{c} | N)$ as:
\begin{equation}
N \sim \mathrm{Binomial}(M = N_0, p_{\mathrm{pres}})
\end{equation}
and
\begin{equation}
p(\mathbf{c} | N) =
\left\{
\begin{array}{ll}
\frac{(M - N)!}{M!} &
\mbox{if } |\mathbf{c}| = N \mbox{ and } \sum_{i=1}^N \mathbf{1}[c_i = m] \le 1\, \forall m\\
0 &
\mbox{otherwise}
\end{array}
\right.
\end{equation}

For each possible $(N, \mathbf{c})$, we fix the scene graph $G$ to be the graph $G_0(N)$ on $N$ vertices that has with no object-object edges:
\begin{equation}
p(G | N) = \left\{
\begin{array}{ll}
1 & \mbox{if } G = G_0(N)\\
0 & \mbox{otherwise}
\end{array}
\right.
\end{equation}
That is, each object $v$ has an independent 6DoF pose $\mathbf{x}_v$.
The prior distribution on the orientation component used the uniform distribution on an Euler angle parametrization,
and the prior on the translation component (i.e. the location of the object) the uniform distribution on a cuboid volume representing the extent of the scene.

\paragraph{Likelihood}
Instead of the likelihood on point clouds used in Section~3.3, here we use an alternative likelihood based on
(i) rendering a depth image $\tilde{\mathbf{I}}(\mathbf{O}_{1:M}, \mathbf{c}, G, \bm\theta)$ and then (ii) adding noise to generate an observed depth image $\mathbf{I}$.
The likelihood is a per-pixel mixture between a uniform distribution on the range of possible depth values, and a normal distribution with fixed variance $\sigma^2$:
\[
    p(\mathbf{I} | x) = \prod_i \left( 0.1 \cdot \frac{1}{D} + 0.9 \cdot \mathcal{N}(I_{i}; \tilde{I}_i, \sigma)) \right)
\]
where $i$ indexes pixels of the depth image.
Pixels whose ray does not intersect an object are assigned the maximum depth value $D$.
A similar likelihood function on depth images was used in~\citep{moreno2016overcoming}.

\paragraph{MCMC inference algorithm}
We use a Markov chain Monte Carlo (MCMC) inference algorithm that cycles through each object type $m \in \{1, \ldots, M\}$, and applies several types of MCMC moves for each type, based on the following proposals:
(i) involutive MCMC kernels that switch an object type from being absent to being present and vice versa (proposing its pose from the prior) ,
(ii) Metropolis--Hastings kernels that propose the translational components of the pose $\mathbf{x}_v$ for each object from the prior,
(iii) Metropolis--Hastings kernels that propose the rotational component of the pose for each object from the prior, and
(iv) coordinate-wise random-walk proposals to each of the 6 dimensions of the pose of each object (the 3 coordinates of its location and its Euler angles).
We initialize the Markov chain with a sample from the prior distribution.

\paragraph{Inferring the 6DoF pose of a fully occluded object}
We first investigated inference about the 6DoF pose of an object (the mug from the YCB object set~\citep{calli2015benchmarking}) that is assumed to be in a volume in front of the camera, but that is not visible.
This scenario arises when searching for a component or tool that is expected to be in the environment.
Narrowing down where the object could be, based on observing where it is not, is important for efficiently planning and acting to obtain more information and retrieve the object.
In order for inference to be coherent, the lack of the object's visible presence must be explained away by the occluding presence of another object.
Therefore, we also assume the presence of another object (the cracker box).
The results (Figure~\ref{fig:results}(b)) show that the algorithm
successfully infer a variety of 6DoF poses of the mug in which it lies behind the cracker box.

\paragraph{Jointly inferring the existence and poses of multiple objects}
If an object is not visible, we may conclude that it is not present in the scene.
The degree of belief in the presence of an object that is not visible depends on the degree of prior belief in its presence and the volume of possible states in which the object is fully occluded.
If there are no other objects in the scene, then intuitively, there is nowhere the object could be hiding in the scene, so it must be absent.
If there are other objects in the scene, then the pose of these other objects interacts with the object's existence, in our beliefs.
To investigate the interaction between the beliefs about the poses and existence of multiple objects, we generated synthetic depth data for several scenarios (Figure~\ref{fig:results}(c-e)).
In all scenarios, the prior probabilities that the box and mug are present are both 0.9.
In the first scenario (Figure~\ref{fig:results}(c)) the box is oriented so a wide face is facing the camera.
We correctly infer the existence of the box with high confidence (1.0 posterior probability) and we assign 0.37 posterior probability to the existence of the mug.
In the second scenario (Figure~\ref{fig:results}(d)) the box is rotated so that it occupies less of the camera's field of view.
As expected, this causes the posterior probability of the mug's existence to decrease to 0.31.
In the third scenario (Figure~\ref{fig:results}(e)), we move the box closer to the camera so that it occupies more of the field of view.
The posterior probability of the mug's presence then increases to 0.63.
These experiments illustrate the dependence between one object's pose and another object's existence, which is a consequence of occlusion.

\section{Experiment Details}

The deep learning baseline experiments were run on a 3.70GHz Intel i7 processor with 64GB RAM and a Nvidia GeForce GTX 1080Ti GPU. All other experiments were run a 2.40GHz Intel i9 processor with 32GB RAM and a Nvidia GeForce GTX 1650 Mobile GPU. Our model is implemented using Julia in Gen, a probabilistic programming system \cite{cusumano2019gen}.

\section{Pseudomarginal shape inference}
\label{sec:pseudomarginal}

We use MH and involutive MCMC kernels that are stationary with respect to a target distribution on an extended state space that includes auxiliary variables $\mathbf{O}^{(i)}$ for $i = 1, \ldots, R$ ($R$ copies of all object shape models) by replacing the likelihood $p(\mathbf{Y} | N, \mathbf{c}, G', \bm{\theta}')$ for each proposed state in the acceptance probability with the following unbiased estimate obtained by sampling object models from the prior:
\begin{equation}
\frac{1}{R} \sum_{i=1}^R p(\mathbf{Y} | \mathbf{O}_1^{(i)}, \ldots, \mathbf{O}_M^{(i)}, N, \mathbf{c}, G, \bm\theta) \mbox{ where } \mathbf{O}_c^{(i)} \stackrel{\mathrm{i.i.d.}}{\sim} p(\cdot)
\end{equation}
and replacing the likelihood in the denominator of the acceptance ratios with the unbiased estimate computed by the last accepted proposal.
This is an instance of the pseudomarginal MCMC~\citep{andrieu2009pseudo} framework.
For example, for the scene graph involutive MCMC kernel described in Section~\ref{sec:involutive_move}, each factor of the form:
\begin{equation}
\frac{p(\mathbf{Y} | N, \mathbf{c}, G', \bm\theta')}{p(\mathbf{Y} | N, \mathbf{c}, G, \bm\theta)}
\end{equation}
is replaced with a factor:
\begin{equation}
\frac{
\frac{1}{R} \sum_{i=1}^R p(\mathbf{Y} | {\mathbf{O}_1^{(i)}}', \ldots, {\mathbf{O}_M^{(i)}}', N, \mathbf{c}, \mathcal{G})
}{
\frac{1}{R} \sum_{i=1}^R p(\mathbf{Y} | \mathbf{O}_1^{(i)}, \ldots, \mathbf{O}_M^{(i)}, N, \mathbf{c}, \mathcal{G})
}
\end{equation}
where the $\mathbf{O}_j^{(i)}$ are the shape models that were sampled when proposing the last state that was accepted, and where the ${\mathbf{O}_j^{(i)}}'$ are the shape models that are sampled during the current proposal.
Note that the old sampled shape models $\mathbf{O}_j^{(i)}$ themselves do not need to be persisted across steps of the Markov chain---only the denominator in the expression above needs to be stored.
The resulting moves can be interpreted as MH (or involutive MCMC) moves on an extended state space
that includes additional auxiliary random variables $\mathbf{O}_{1:M}^{(1)}, \ldots, \mathbf{O}_{1:M}^{(R)}$.
The moves are stationary with respect to the following target distribution on the extended state space:
\begin{equation}
p(G, \bm\theta | N, \mathbf{c}, \mathbf{Y}) \frac{1}{R} \sum_{i=1}^R p(\mathbf{O}_{1:M}^{(i)} | N, \mathbf{c}, G, \bm\theta, \mathbf{Y}) \prod_{j \ne i} p(\mathbf{O}_{1:M}^{(j)})
\end{equation}
of which the marginal on $(G, \bm\theta)$ is the original target distribution $p(G, \bm\theta | N, \mathbf{c}, \mathbf{Y})$.

\section{Involutive MCMC kernel on scene graph structure and parameters} \label{sec:involutive_move}

This section gives details of the involutive MCMC kernel on scene graphs introduced in Section 5.

\subsection{Notation for coordinate projections}\label{sec:proj-notation}
In several places below, we define sets of tuples using set-builder notation such as
\[ X := \{(x, y, z)\ |\ \text{some condition on $x, y, z$}\}. \]
In such a case, we may also define coordinate projections that get their names from the formal variables (``$x$,'' ``$y$,'' ``$z$'') used in the set-builder expression.
Our convention is to denote these coordinate projections by the name $\proj_\bullet$, where $\bullet$ is either a variable name, e.g.,
\[ \proj_x(x, y, z) := x \]
or a comma-separated list of variable names, e.g.,
\[ \proj_{y,z}(x, y, z) := (y, z). \]
This definition depends not just on the set $X$, but on the notation used to define it; thus, in the exposition below, we explicitly declare each time we define a function $\proj_\bullet$ using the above convention.
Note that the name $\proj_x$ is to be taken as a single unit, i.e., $x$ does not have independent meaning; in particular, if there is also a variable $x$ in the scope of discourse, the name $\proj_x$ does not have anything to do with that variable's value.

\subsection{The number of scene graph structures on a fixed set of objects}

\begin{proposition}
For a given set of $N$ objects, the number of possible scene graph structures is $(N+1)^{N-1}$.  (Here the root node $r$ is not considered an object.)
\end{proposition}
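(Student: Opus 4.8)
The plan is to recognize that, by the definitions in Section~\ref{sec:scenes}, a scene graph structure on $N$ objects is precisely a directed rooted tree (an out-arborescence) on the $N+1$ labeled vertices $\{r, v_1, \ldots, v_N\}$ whose root is the fixed world node $r$, with each edge $(u,v)$ oriented from parent $u$ to child $v$ (away from $r$). So the quantity to count is the number of such arborescences, and I would reduce this to counting \emph{undirected} labeled trees on the same vertex set. The key observation is that fixing the root $r$ makes the orientation redundant: given an undirected tree $T$ on $\{r, v_1, \ldots, v_N\}$ there is a unique path from $r$ to every other vertex, so orienting each edge away from $r$ yields a unique valid rooted structure, and conversely forgetting orientations recovers $T$. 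This establishes a bijection, so the number of scene graph structures equals the number of labeled trees on $N+1$ vertices.

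Second, I would invoke Cayley's formula, which states that the number of labeled trees on $n$ vertices is $n^{n-2}$. Substituting $n = N+1$ gives $(N+1)^{(N+1)-2} = (N+1)^{N-1}$, which is the claimed count. The arithmetic simplification is immediate, and no special treatment of the root $r$ is needed beyond the orientation bijection above.

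If a self-contained argument is preferred over citing Cayley's formula, I would instead prove the needed count directly via the Pr\"ufer correspondence: construct the standard bijection between labeled trees on $\{r, v_1, \ldots, v_N\}$ (a set of size $N+1$) and sequences of length $(N+1)-2 = N-1$ with entries in that set, of which there are exactly $(N+1)^{N-1}$. The encoding repeatedly deletes the leaf of smallest label and records its unique neighbor; the decoding inverts this using the multiplicities of each label in the sequence. The main obstacle, and essentially the only nontrivial content of the statement, is verifying that this encode/decode pair is a genuine bijection---that a leaf exists at every deletion step, that the recorded data determine the removed edges, and that decoding reconstructs exactly the original tree. Once this is confirmed, the bijection-to-arborescences step from the first paragraph completes the proof; either route reduces the proposition to a clean count and the elementary identity $(N+1)^{(N+1)-2} = (N+1)^{N-1}$.
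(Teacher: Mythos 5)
Your proposal is correct and follows essentially the same route as the paper's proof: both establish the bijection between directed trees rooted at $r$ and undirected labeled trees on the $N+1$ vertices (orientation away from the root being forced), and then invoke Cayley's formula to obtain $(N+1)^{N-1}$. Your optional Pr\"ufer-sequence argument is simply a standard self-contained proof of Cayley's formula, so it does not constitute a genuinely different approach.
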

\begin{proof}
Let $V'$ be a set of $N$ objects, and let $V := V' \cup \{r\}$.  For a given {\em undirected} tree $\widetilde{G}$ on vertices $V$, there is a unique way to assign edge directions to $\widetilde{G}$ to turn it into a directed tree rooted at $r$.
This gives a one-to-one correspondence
\[
    \text{directed trees on $V$ rooted at $r$} \quad\longleftrightarrow\quad \text{undirected trees on $V$}.
\]
By Cayley~\citep{cayley_2009}, the number of undirected trees on $V$ is $(N+1)^{N-1}$.
\end{proof}

\subsection{An involutive MCMC kernel on scene graph structure only}

For some set $V$ of vertices and a root vertex $r \in V$, let $\mathbf{G}(V)$ denote the set of directed trees over vertices $V$ rooted at $r$.
For each $G = (V, E) \in \mathbf{G}(V)$ and each $v \in V \setminus \{r\}$, let $S(G, v) \subset V$ denote the vertices of the subtree rooted at $v$, i.e., the set containing $v$ and its descendants.
Let
\begin{equation}
T(G) := \left\{
    (v, u) \subset V \times V : v \neq r, u \notin S(G, v)
\right\}.
\end{equation}
That is, $T(G)$ contains a every pair of vertices $(v, u)$ such that $v$ is not the root note, and $u$ is not a descendant of $v$.
(Intuitively, we can think of $T(G)$ as the set of pairs of vertices $(v, u)$ such that it is possible to sever the subtree rooted at $v$ and re-attach that subtree as a child of $u$.)
Next, let
\begin{equation}
U(V) := \left\{
    (G, v, u) : G \in \mathbf{G}(V), (v, u) \in T(G)
\right\}
\end{equation}
and equip $U(V)$ with coordinate projections $\proj_G$, $\proj_v$, etc.{} as in Section~\ref{sec:proj-notation}.
(Intuitively, we can think of the triples $(G, v, u) \in U(V)$ as denoting a graph $G$, a choice of vertex $v$ at which to sever, and a choice of vertex $u$ at which to graft.)
Finally, define the function
\[ g: U(V) \to U(V) \]
by $g(G, v, u) = (G', v, u')$, where
(i) $u'$ is the parent of $v$ in $G$, and
(ii) $G'$ is the graph obtained from $G$ by removing the edge $(u', v)$ and adding the edge $(u, v)$.
(By Lemma~\ref{lem:tree_add_remove}, $G'$ is a tree, so $(G', v, u') \in U(V)$.)

\begin{proposition}
The function $g: U(V) \to U(V)$ is an involution.
\end{proposition}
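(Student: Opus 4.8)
The plan is to verify directly that $g \circ g = \mathrm{id}_{U(V)}$, tracking how each of the three coordinates transforms under two applications of $g$. Fix an arbitrary triple $(G, v, u) \in U(V)$ and write $g(G, v, u) = (G', v, u')$, where $u'$ is the parent of $v$ in $G$ and $G'$ is obtained from $G$ by deleting the edge $(u', v)$ and inserting the edge $(u, v)$. The first thing I would record is that $g$ fixes the middle coordinate $v$, so throughout both applications the severed vertex never changes; in particular the constraint $v \ne r$ is automatically preserved.

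Before iterating, I would confirm that $g$ really maps into $U(V)$, i.e.\ that $(G', v, u') \in U(V)$. Tree-ness of $G'$ is exactly Lemma~\ref{lem:tree_add_remove}. The remaining requirement is $(v, u') \in T(G')$, i.e.\ $u' \notin S(G', v)$. The key observation here is that severing $v$ from $u'$ and re-grafting it onto $u$ (which is legal precisely because $u \notin S(G, v)$) leaves the subtree hanging below $v$ untouched, so $S(G', v) = S(G, v)$. Since $u'$ is the parent of $v$ in $G$ it is an ancestor and hence $u' \notin S(G, v) = S(G', v)$, giving the desired membership.

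Now I would apply $g$ a second time to $(G', v, u')$ and read off each output coordinate. In $G'$ the parent of $v$ is $u$ (the edge $(u, v)$ was just inserted), so the third coordinate of $g(G', v, u')$ is $u$. The graph produced by the second application is $G'$ with the current parent edge $(u, v)$ deleted and the edge $(u', v)$ inserted; since $G'$ was $G$ with $(u', v)$ deleted and $(u, v)$ inserted, this operation exactly reverses the first one and returns $G$. Hence $g(G', v, u') = (G, v, u)$, which is the triple we started with, and since $(G, v, u)$ was arbitrary this establishes $g \circ g = \mathrm{id}_{U(V)}$.

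I expect the only genuinely delicate point to be the well-definedness check, namely the claim $S(G', v) = S(G, v)$, because this is what simultaneously guarantees that $u'$ remains a legal graft target in $G'$ and that the graft/sever operation is reversible; the subsequent verification of $g \circ g = \mathrm{id}$ is then a short symbolic bookkeeping of the three coordinates. I would present it by displaying the composed map once and observing that each coordinate returns to its original value, noting in passing that an involution is in particular a bijection that is its own inverse.
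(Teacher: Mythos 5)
Your proof is correct and follows essentially the same route as the paper's: a direct coordinate-by-coordinate verification that $g \circ g = \mathrm{id}$, using that $v$ is fixed, that the parent of $v$ in $G'$ is $u$, and that the edge swap is undone by the second application (both arguments ultimately rest on $(u',v) \in E$). Your explicit check that $g$ lands in $U(V)$ via $S(G',v) = S(G,v)$ is a small but welcome addition, since the paper only cites the tree lemma and leaves the condition $u' \notin S(G',v)$ implicit.
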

\begin{proof}
Let $(G', v', u') := g(G, v, u)$; let $(G'', v'', u'') := g(G', v', u')$; and let $E$, $E'$ and $E''$ be the edge sets of $G$, $G'$ and $G''$ respectively.
Then, by the definition of $g$, we have $v'' = v' = v$.
Also, because $G'$ is a tree that contains the edge $(u, v)$, it follows that $u$ is the parent of $v$ in $G'$.
Thus $u'' = u$, since $u''$ is also (by definition) the parent of $v$ in $G'$.
Next, by the definition of $g$, we have $E' = (E \setminus \{(u', v)\}) \cup \{(u, v)\}$ and
\[
    E'' = (E' \setminus \{(u, v)\}) \cup \{(u, v)\} = (E \setminus \{(u', v)\}) \cup \{(u', v)\} = E
\]
(here we are using the fact that $(u', v) \in E$, which holds by the definition of $g$).
Thus $G'' = G$, so $g(g(G, v, u)) = (G, v, u)$.
\end{proof}

\begin{proposition}
Let $G,G' \in \mathbf{G}(V)$.
Then there exists a sequence of triples
\[ (G_0, v_0, u_0), \ldots, (G_k, v_k, u_k) \]
satisfying all of the following:
\begin{enumerate}[nosep,label=(\roman*)]
    \item\label{itm:g-iter-first-condition} $(G_i, v_i, u_i) \in U(V)$ for all\footnote{
            This proposition doesn't say anything about $u_k$ and $v_k$; we leave them in just to simplify the exposition and notation.
        } $i = 0,\ldots,k-1$
    \item $G_0 = G$
    \item $G_k = G'$
    \item\label{itm:g-iter-recursion} for each $i < k$ we have $G_{i+1} = \proj_G(g(G_i, v_i, u_i))$.
\end{enumerate}
\label{prop:g-iter-is-irreducible}
\end{proposition}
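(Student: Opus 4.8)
The plan is to prove this reachability statement (which is what underlies irreducibility of the induced Markov chain) by routing every pair $G, G'$ through a single canonical ``hub'' tree. First I would unpack the meaning of one step: applying $\proj_G \circ g$ to a triple $(G_i, v_i, u_i) \in U(V)$ detaches the subtree rooted at $v_i$ from its current parent and regrafts it onto $u_i$, so the net effect on the graph is simply to change the parent of $v_i$ to $u_i$, and this is a legal step precisely when $(v_i, u_i) \in T(G_i)$, i.e.\ when $u_i \notin S(G_i, v_i)$.

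Second, I would record a reversibility observation that lets me glue paths together. Since $g$ is an involution (by the preceding proposition), if $G_{i+1} = \proj_G(g(G_i, v_i, u_i))$ then also $G_i = \proj_G(g(G_{i+1}, v_i, u_i'))$ where $u_i' = \proj_u(g(G_i, v_i, u_i))$. Hence the ``reachable in one legal step'' relation on $\mathbf{G}(V)$ is symmetric, and it suffices to exhibit, for an arbitrary tree, a sequence of legal steps connecting it to a fixed canonical tree; the desired path from $G$ to $G'$ is then the concatenation of the path from $G$ to the hub with the reverse of the path from $G'$ to the hub.

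Third, I would take the hub to be the star $S_0$ in which every non-root vertex is a child of $r$, and show any $G$ reduces to $S_0$. Processing the non-root vertices in any fixed order, I set the parent of each $v$ to $r$ in turn. Each such step is always legal because $r$ is the root and therefore $r \notin S(G_i, v)$ for every $v$ and every intermediate tree $G_i$. Moreover, once a vertex $w$ has been made a child of $r$, no subsequent step can disturb it: for a later severed vertex $v \neq w$, having $w \in S(G_i, v)$ would force $v$ to be a proper ancestor of the depth-one vertex $w$, hence $v = r$, contradicting $v \neq r$. After all $N$ vertices are processed we have reached $S_0$, so concatenating with the reversed reduction of $G'$ completes the argument.

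The main thing to get right is the validity bookkeeping in the third step---verifying both that the grafting target is never a descendant of the severed vertex and that regrafting one subtree never dislodges a previously fixed vertex. Choosing $r$ as the universal target is exactly what makes these two checks trivial, since the root is never anyone's descendant and has depth zero; this is why routing through the star is cleaner than attempting to morph $G$ directly into $G'$, where a target parent could temporarily lie inside the moving subtree and block the move.
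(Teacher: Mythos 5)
Your proof is correct and takes essentially the same approach as the paper's: both route an arbitrary pair $G, G'$ through the star graph $G^\circ$ (every non-root vertex a child of $r$), both rely on the fact that grafting onto $r$ is always legal since $r \notin S(G_i,v)$, and both use the involution property of $g$ to reverse paths before concatenating. The only difference is organizational---you build the tree-to-hub path forward and invoke an explicit symmetry observation, whereas the paper constructs the hub-to-tree path by backward recursion---which is a presentational distinction, not a different argument.
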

\begin{proof}
Let $G^\circ$ denote the scene structure that has no relations between objects; that is,
\[ G^\circ := (V, \{(r, v) : v \in V \setminus \{r\}\}). \]
We first prove the result in the case where $G = G^\circ$; then we extend to the general case.

For the case where $G = G^\circ$, the intuition is to work backwards from $G'$ to $G^\circ$ by grafting subtrees onto $r$ until there are no non-singleton subtrees left.
Let $G' = (V, E')$, and let $\widetilde{E} := \{(u,v) \in E : v \neq r\}$.
We then take\footnote{
    Except in the degenerate case where $\widetilde{E}$ is empty.
    In that case we have $G' = G^\circ$, so we simply take $k := 0$, $G_0 := G^\circ$, and $u_0$ and $v_0$ to be any vertices.
} $k := |\widetilde{E}|-1$.
We define $v_i$ (from the proposition statement) and $u'_i$ (a variable we're now introducing) by arbitrarily choosing an ordered enumeration of $\widetilde{E}$ and defining the sequence of pairs $(u'_0, v_0), \ldots, (u'_k, v_k)$ to equal that enumeration.
Thus,
\[ \widetilde{E} = \{(u'_0, v_0), \ldots, (u'_k, v_k)\}. \]
Next, we define $G_i$ and $u_i$ simultaneously by backward recursion: we take $G_k := G'$ (and choose $u_k$ arbitrarily; the proposition doesn't actually say anything about $u_k$); and for $0 \leq i < k$, we define $G_i$ and $u_i$ by the equation
\begin{equation}
g(G_{i+1}, v_i, r) = (G_i, v_i, u_i).
\label{eq:g-iter-in-reverse}
\end{equation}
(To justify the left-hand side being well-defined, we must show $(G_{i+1}, v_i, r) \in U(V)$ for all $i$.  By construction, $v_i \neq r$ for all $i$, so $r \notin S(G_{i+1}, v)$; hence $(G_{i+1}, v_i, r) \in U(V)$.)
By applying $g$ to both sides of \eqref{eq:g-iter-in-reverse}, we get \ref{itm:g-iter-recursion}.
Finally, note that by induction, $G_i$ has exactly $i$ edges whose source node is not $r$.
Thus $G_0 = G^\circ$.
This completes the proof of the case where $G = G^\circ$.

We now move to the general case, where $G \in \mathbf{G}(V)$ is arbitrary.
Using the above special case, let $(G_0 = G^\circ, v_0, u_0), (G_1, v_1, u_1), \ldots, (G_k, v_k, u_k)$ be a sequence satisfying \ref{itm:g-iter-first-condition}--\ref{itm:g-iter-recursion} and \eqref{eq:g-iter-in-reverse} with $G_0 = G^\circ$.
Let $(\overline{G}_0, \overline{v}_0, \overline{u}_0), (\overline{G}_1, \overline{v}_1, \overline{u}_1), \ldots, (\overline{G}_k, \overline{v}_k, \overline{u}_{\overline{k}})$ be a sequence satisfying \ref{itm:g-iter-first-condition}--\ref{itm:g-iter-recursion} and \eqref{eq:g-iter-in-reverse} but with $\overline{G}_0 = G^\circ$ and $\overline{G}_{\overline{k}} = G$.
Let $\underline{G}_j := \overline{G}_{\overline{k}-j}$ for $0 \leq j < \overline{k}$ and $\underline{G}_{\overline{k}} := G^\circ$.
Substituting $j := \overline{k} - (i+1)$ into \eqref{eq:g-iter-in-reverse} gives
\[
    g\left(
        \underline{G}_j,\ \overline{v}_{\overline{k}-j-1},\ r
    \right)
    =
    \left(
        \underline{G}_{j+1},\ \overline{v}_{\overline{k}-j-1},\ \overline{u}_{\overline{k}-j-1}
    \right)
\]
for each $j = 0,\ldots,\overline{k}-1$.
Thus, letting $\underline{k} := \overline{k}$ and $\underline{v}_j := \overline{v}_{\overline{k}-j-1}$ and $\underline{u}_j := r$, the sequence
\[
    (\underline{G}_0, \underline{v}_0, \underline{u}_0), \ldots, (\underline{G}_{\underline{k}}, \underline{u}_{\underline{k}}, \underline{v}_{\underline{k}})
\]
satisfies \ref{itm:g-iter-first-condition}--\ref{itm:g-iter-recursion} but with $\underline{G}_0 = G$ and $\underline{G}_k = G^\circ$.
Then, the sequence
\[
    (\underline{G}_0, \underline{v}_0, \underline{u}_0), \ldots, (\underline{G}_{\underline{k}-1}, \underline{u}_{\underline{k}-1}, \underline{v}_{\underline{k}-1}),
    \ (\underline{G}_{\underline{k}} = G_0 = G^\circ, v_0, u_0),
    \ (G_1, v_1, u_1), \ldots, (G_k, v_k, u_k)
\]
satisfies \ref{itm:g-iter-first-condition}--\ref{itm:g-iter-recursion}: the only piece of this claim that wasn't already proved above is \ref{itm:g-iter-recursion} at the concatenation boundary where $G^\circ$ appears; i.e., it remains only to check that
\[
    G^\circ = \proj_G(g(\underline{G}_{\underline{k}-1}, \underline{v}_{\underline{k}-1}, \underline{u}_{\underline{k}-1})).
\]
Unpacking the definitions, this condition is
\[
    G^\circ = \proj_G(g(\overline{G}_1, \overline{v}_0, r)),
\]
and indeed the condition is satisfied, by \eqref{eq:g-iter-in-reverse}.
\end{proof}

For some fixed $V$ with $r \in V$ and $N := |V|-1$, suppose we have in hand (i) a prior probability distribution $p(G)$ on $G \in \mathbf{G}(V)$; (ii) some data $\mathcal{D}$; and (iii) a likelihood function $p(\mathcal{D} | G)$.
From these, we can construct an involutive MCMC~\citep{cusumano2020automating} kernel on the space of graphs ($\mathbf{G}(V)$) by combining the involution $g$ defined above with
a family of auxiliary probability distributions $q(v, u; G)$ on $V \times V$ such that $q(v, u; G) > 0$ if and only if $(v, u) \in T(G)$.
The kernel takes as input a graph $G$, samples $(v, u) \sim q(\cdot; G)$, computes $(G', v', u') := g(G, v, u)$, and then returns the new graph $G'$ with probability
\begin{equation}
\min\left\{
1, \frac{p(G') p(\mathcal{D} | G') q(v', u'; G') }{p(G) p(\mathcal{D} | G) q(v, u; G)}
\right\}
\end{equation}
and otherwise returns the previous graph $G$ (i.e. rejects).
Because this kernel satisfies the requirements of involutive MCMC, it is stationary with respect to the following target distribution on $\mathbf{G}(V)$:
\begin{equation}
p(G | \mathcal{D}) = \frac{p(G) p(\mathcal{D} | G)}{\sum_{G' \in \mathbf{G}(V)} p(G') p(\mathcal{D} | G')}
\end{equation}
Furthermore, if $p(G) > 0$ and $p(\mathcal{D} | G) > 0$ for all $G \in \mathbf{G}(V)$, then the Markov chain generated by repeated application of this kernel converges to the target distribution as the number of steps goes to infinity
(the chain is irreducible by Prop.~\ref{prop:g-iter-is-irreducible} and aperiodic since the proposal has a positive probability of choosing $u$ to be the parent of $v$, and in that case $G' = G$).

\subsection{Transforming between two alternative 6DoF pose parametrizations} \label{sec:two_orientation_descr}
Before defining our full involutive MCMC kernel on scene graphs,
we define a transformation between continuous parameter spaces that
will be used as a building block of the full kernel.

Objects $v$ that are children of the root vertex have their 6DoF pose relative to the world coordinate frame parametrized directly via $\theta_v \in SE(3)$.
Recall that the pose of an object $v$ that is child of another object $u'$
is parametrized via the relative pose between a face of $u'$ and a face of $v$.
We choose a parametrization for the pose of one face relative to another that makes it natural to express a prior distribution in which:
(i) the two faces are nearly in flush contact with high probability, and
(ii) we know little about the relative in-plane offset of the two faces and their relative in-plane orientation.
A natural parametrization for this prior uses:
\begin{itemize}
\item Two dimensions for the in-plane offset
($a \in \mathbb{R}$ and $b \in \mathbb{R}$)
with relatively broad priors
\item One dimension for the perpendicular offset
($z \in \mathbb{R}$)
with a concentrated prior
\item An (outward) face normal vector
($\nu \in S^2$)
with a prior that concentrates on anti-parallel face normals
\item One dimension of in-plane angular rotation
($\varphi \in S^1$)
with a uniform prior.
\end{itemize}
We now define a bijection
\begin{multline}
\xi :
\mathbb{R}^3 \times SO(3)
\supseteq \R^3 \times \{\omega \in SO(3) : \omega(0,0,1)^\top \neq (0,0,-1)\} \\
\to \mathbb{R} \times \mathbb{R} \times \mathbb{R} \times (S^2 \setminus \{(0, 0, -1)^\top\}) \times S^1
\subseteq \R \times \R \times \R \times S^2 \times S^1.
\label{eq:xi-ae-bijective}
\end{multline}
Note that $\xi$ is a.e.{} bijective on the supersets as well, in the sense that in \eqref{eq:xi-ae-bijective}, $\xi$ is a bijection between the subsets, and each of the subsets has a complement of measure zero in its superset.
In $\xi$, the first three coordinates are copied directly and the orientations are transformed as follows.

In the first parameter space, $SO(3)$, orientations are represented ``directly,'' as the linear transformation that carries the parent coordinate frame to the child coordinate frame (translated to have the same origin).  The base measure on $SO(3)$ is the Haar measure.

In the second parameter space, $S^2 \times S^1$, orientations are represented in Hopf coordinates \cite{yershova2010hopf}:
intuitively, these coordinates characterize a rotation by where it carries the north pole $(0, 0, 1)^\top$ (we call this $\eta \in S^2$) and how much planar rotation it does after carrying the north pole to $\eta$ (we call this $\varphi \in S^1$).
However, there is no globally consistent (i.e., jointly continuous in $\eta$ and $\varphi$) way to choose where the rotation corresponding to $\varphi$ ``starts'' (i.e. which orientations have $\varphi = 0$)---formally, the fiber bundle induced by the Hopf fibration is not a trivial bundle.
But, it is possible to make these choices consistently on an open subset of $S^2 \times S^1$ whose complement has measure zero, as we do in Section~\ref{sec:hopf_explicit_coords}.
In particular, provided that $\eta$ is not the south pole $(0, 0, -1)$, there is a unique unit quaternion $(w, x, y, z) \in S^3$ that satisfies $\spin(w, x, y, z) (0, 0, 1)^\top = \eta$ and has minimal geodesic distance to the identity (where $\spin: S^3 \to SO(3)$ is the usual covering map, described in Section~\ref{sec:RN_existence}).
Then, $\spin(w, x, y, z) \in SO(3)$ is the rotation we take to correspond to $\eta=\eta$, $\varphi=0$.
The resulting map $(S^2 \setminus \{ (0, 0, -1)^\top \}) \times S^1 \to SO(3)$ is smooth; in Section~\ref{sec:hopf_explicit_coords} we compute the mapping explicitly in terms of coordinates.


\subsection{Full involutive MCMC kernel on scene graphs}

Our full involutive MCMC kernel on scene graphs (including their parameters $\bm\theta$) is based on an extension of the involution $g$ defined above.
We first define the latent space of pairs $(G, \bm\theta)$, as:
\begin{equation}
X := \bigsqcup_{G \in \mathbf{G}(V)} \left(
\left(
    \bigtimes_{\substack{v \in V\setminus \{r\}\\(r, v) \in E}} SE(3)
\right)
\times
\left(
\bigtimes_{\substack{v \in V \setminus \{r\}\\(r, v) \not \in E}} (F \times F \times \mathbb{R} \times \mathbb{R} \times \mathbb{R} \times S^2 \times S^1)
\right)
\right)
\end{equation}
($\sqcup$ denotes disjoint union), and we endow this set with
a reference measure $\mu_P$ formed by sums (over the disjoint union)
of product measures that are composed from: the Lebesgue measure on $\R$,
the Haar measure on $SO(3)$, the uniform (spherical) measures on $S^2$ and $S^1$,
and the counting measure on $F \times F$.
Then, we define the space of auxiliary variables as:
\begin{equation}
Y := 
\left\{
    (v, r) : v \in V \setminus\{r\}
\right\}
\sqcup
\left\{
(v, u, f, f') : v,u \in V \setminus\{r\} \text{ and } f, f' \in F
\right\}
\end{equation}
with the reference measure $\mu_Q$ being the counting measure.
We define an auxiliary probability distribution $q(y; G, \bm\theta)$ such that
\begin{equation}
\begin{array}{lll}
q(v, r; G, \bm\theta) &> 0 &\mbox{ for all } v \in V \setminus \{r\}\\
q(v, u, f, f'; G, \bm\theta) &> 0 &\mbox { for all } (v, u) \in T(G) \mbox{ and all } f, f' \in F\\
q(y; G, \bm\theta) &= 0 &\mbox{ otherwise.}
\end{array}
\end{equation}

The extended state space of the involutive MCMC kernel is then
\begin{equation}
Z := \{(x, y) \in X \times Y : p(x) q(y; x) > 0\}.
\end{equation}
We construct an involution $h$ on the space $Z$ using the graph involution $g$ defined above as a building block.
In particular, $Z$ consists of tuples of four forms, and we define the involution $h$ piecewise depending on which of these four forms the input has: we take
\[
    Z = Z_1 \sqcup Z_2 \sqcup Z_3 \sqcup Z_4
\]
where the definitions of $Z_1, Z_2, Z_3, Z_4$, and the parametric form that $\theta_v$ takes on each component, are as follows:
\begin{align*}
    Z_1 &:= \left\{ (G, v, r, \bm{\theta}) : (r, v) \in E \right\} && \theta_v \in SE(3) \\
    Z_2 &:= \left\{ (G, v, r, \bm{\theta}) : (r, v) \notin E \right\} && \theta_v \in F \times F \times \mathbb{R} \times \mathbb{R} \times \mathbb{R} \times S^2 \times S^1 \\
    Z_3 &:= \left\{ (G, v, u, \bm{\theta}, f, f') : u \neq r, (r, v) \in E \right\} && \theta_v \in SE(3) \\
    Z_4 &:= \left\{ (G, v, u, \bm{\theta}, f, f') : u \neq r, (r, v) \notin E \right\} && \theta_v \in F \times F \times \mathbb{R} \times \mathbb{R} \times \mathbb{R} \times S^2 \times S^1.
\end{align*}
We define coordinate projections $\proj_\bullet$ on $Z_1$ and $Z_2$ (for the free parameters $G, v, \bm\theta$), and on $Z_3$ and $Z_4$ (for the free parameters $G, u, \bm\theta, f, f'$), as in Section~\ref{sec:proj-notation}.
Also, for notational convenience below, we extend $\proj_u$ to $Z_1$ and $Z_2$ by defining $\proj_u(G,v,r,\bm\theta) := r$; i.e., $\proj_u$ is constant on $Z_1 \sqcup Z_2$ with value $r$.

Then, $h$ is defined piecewise as follows:
\begin{equation}
h(z) = \begin{cases}
    h_{\mathrm{f} \to \mathrm{f}}(z) &\quad\text{ if $z \in Z_1$ \quad (floating to floating)} \\
    h_{\mathrm{c} \to \mathrm{f}}(z) &\quad\text{ if $z \in Z_2$ \quad (contact to floating)} \\
    h_{\mathrm{f} \to \mathrm{c}}(z) &\quad\text{ if $z \in Z_3$ \quad (floating to contact)} \\
    h_{\mathrm{c} \to \mathrm{c}}(z) &\quad\text{ if $z \in Z_4$ \quad (contact to contact).}
\end{cases}
\end{equation}

\begin{figure}[t]
  \centering
  \includegraphics[width=1.0\textwidth]{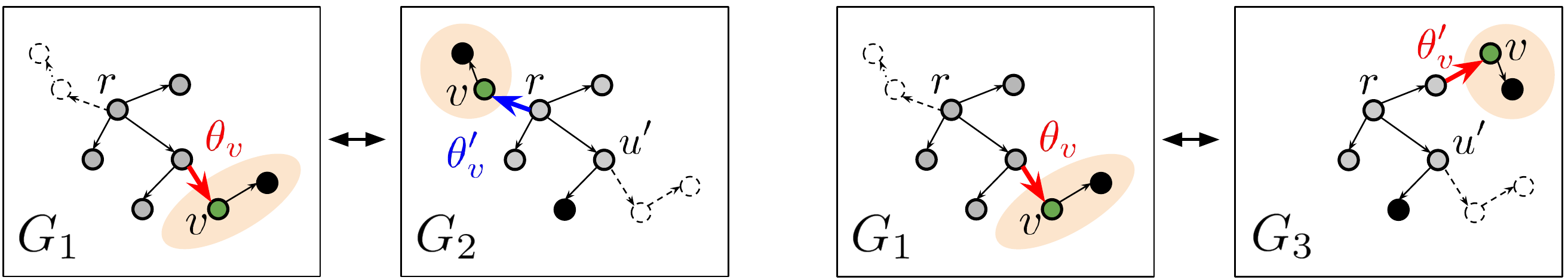}
  \caption{
A subset of the possible transition types for our involutive MCMC kernel on scene graphs.
Left: `contact to floating` (forwards) and `floating to contact' (backwards).
Right: `contact to contact` (forwards) and `contact to contact' (backwards).
The vertex $v$ is the chosen `sever' vertex, and its subtree $S(G, v)$ is shaded.
Parameters $\theta_v \in SE(3)$, which are the independent 6DoF pose of an object relative to the world coordinate frame, are shown in blue;
and parameters $\theta_v \in F \times F \times \mathbb{R} \times \mathbb{R} \times \mathbb{R} \times S^2 \times S^1$, which parametrize the pose of an object relative to another object (specifically the relative pose between two faces of the two objects) are shown in red.
}
  \label{fig:scene-graph-transition-sup}
\end{figure}

We next define the function $h_{\bullet \to \bullet}$ corresponding to each of these four components, and give the acceptance probability in each case (the acceptance probabilities will be derived later in this section).

\paragraph{Floating to floating}
This transition makes no change to the structure $G$ or the parameters $\bm\theta$:
\begin{equation}
h_{\mathrm{f} \to \mathrm{f}}(G, v, r, \bm\theta) := (G, v, r, \bm\theta) \quad \mbox{ (no change) }
\end{equation}

\paragraph{Contact to floating}
This transition severs the edge from the parent of $v$ in $G$ (another object)
and replaces it with a new edge from the root $r$ to $v$ in $G'$.
The parameters of all vertices other than $v$ are unchanged.
The parameters $\theta_v$ are set to the absolute pose (relative to $r$) of $v$ in $(G, \bm\theta)$:
\[ h_{\mathrm{c} \to \mathrm{f}}(G, v, r, \bm\theta) := (G', v, u', \bm\theta', f, f'), \]
where
\begin{align*}
(G', v, u') &= g(G, v, r) \quad\mbox{}\\
(f, f') &= \proj_{f,f'}(\theta_v) \\
\theta_w' &= \theta_w \mbox{ for } w \ne v \\
\theta_v' &= \mathbf{x}_v(G, \bm\theta) \quad \mbox{(pose of $v$ with respect to $r$ in $(G, \bm\theta)$)}
\end{align*}

\paragraph{Floating to contact}
This transition severs the edge from the parent of $v$ in $G$ (which is $r$)
and replaces it with a new edge from another (object) vertex $u$ to $v$ in $G'$.
The parameters of all vertices other than $v$ are unchanged.
The parameters $\theta_v$ are computed by (i) computing the relative pose $\Delta \mathbf{x}_{(u,f') \to (v, f)}(G, \bm\theta) \in SE(3)$ between the face $f$ of object $v$ (oriented according to its outward normal) and face $f'$ of object $u$ in $(G, \bm\theta)$, and then (ii) transforming this pose into
an element of $\mathbb{R} \times \mathbb{R} \times \mathbb{R} \times S^2 \times S^1$ via the function $\xi$ defined in Section~\ref{sec:two_orientation_descr}:
\[ h_{\mathrm{f} \to \mathrm{c}}(G, v, u, \bm\theta, f, f') := (G', v, r, \bm\theta'), \]
where
\begin{align*}
(G', v, r) &= g(G, v, u) \\
\theta_w' &= \theta_w \mbox{ for } w \ne v \\
\theta_v' &= (f, f', a, b, z, \eta, \varphi) \\
(a, b, z, \eta, \varphi) &= \xi(\Delta \mathbf{x}_{(u, f') \to (v, f)}(G, \bm\theta))
\end{align*}

\paragraph{Contact to contact}
This transition severs the edge from the parent of $v$ in $G$ (which is some object $u'$)
and replaces it with a new edge from another object $u$ to $v$ in $G'$.
The parameters of all vertices other than $v$ are unchanged.
The parameters $\theta_v$ are again computed by first computing the relative pose
computing the relative pose $\Delta \mathbf{x}_{(u,f') \to (v, f)}(G, \bm\theta) \in SE(3)$,
then applying $\xi$:
\[ h_{\mathrm{c} \to \mathrm{c}}(G, v, u, \bm\theta, f, f') := (G', v, u', \bm\theta', f_2, f_2') \]
where
\begin{align*}
(G', v, u') &= g(G, v, u) \\
\theta_w' &= \theta_w \mbox{ for } w \ne v \\
\theta_v' &= (f, f', a, b, z, \eta, \varphi) \\
(f_2, f_2') &= \proj_{f,f'}(\theta_v) \\
(a, b, z, \eta, \varphi) &= \xi(\Delta \mathbf{x}_{(u, f') \to (v, f)}(G, \bm\theta))
\end{align*}

\begin{proposition}
The function $h$ is an involution.
\end{proposition}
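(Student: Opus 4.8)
The plan is to exploit the block structure $Z = Z_1 \sqcup Z_2 \sqcup Z_3 \sqcup Z_4$ and reduce the involution property to three facts already in hand: that the graph map $g$ satisfies $g \circ g = \mathrm{id}$ (proven above), that $\xi$ is (a.e.) a bijection with $\xi \circ \xi^{-1} = \mathrm{id}$ (Section~\ref{sec:two_orientation_descr}), and a pose-preservation lemma stated below. First I would verify where $h$ sends each block. Since $g(G,v,u)$ replaces the parent of $v$ by $u$ and returns the old parent as its third coordinate, the ``floating vs.\ contact'' status of $v$ in the output is exactly the status of the grafting \emph{target}: grafting onto $r$ yields a floating child, grafting onto a non-root $u$ yields a contact child. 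Tracing the definitions then gives $h(Z_1)\subseteq Z_1$, $h(Z_2)\subseteq Z_3$, $h(Z_3)\subseteq Z_2$, and $h(Z_4)\subseteq Z_4$. Hence it suffices to show that $h_{\mathrm{f}\to\mathrm{f}}$ is an involution on $Z_1$ (immediate, as it is the identity), that $h_{\mathrm{f}\to\mathrm{c}}$ and $h_{\mathrm{c}\to\mathrm{f}}$ are mutually inverse between $Z_2$ and $Z_3$, and that $h_{\mathrm{c}\to\mathrm{c}}$ is an involution on $Z_4$.

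The central lemma is \emph{pose preservation}: for every $z = (G,\ldots,\bm\theta)\in Z$ with $z' = h(z) = (G',\ldots,\bm\theta')$, one has $\mathbf{x}_w(G',\bm\theta') = \mathbf{x}_w(G,\bm\theta)$ for all $w\in V$. I would prove this by observing that every branch of $h$ changes only the single parameter $\theta_v$ and re-parents only $v$, leaving $\theta_w$ for $w\ne v$ and every edge not incident to $v$ untouched. For any $w$ that is neither $v$ nor a descendant of $v$, the root-to-$w$ path in $G'$ coincides with that in $G$ (re-attaching $S(G,v)$ elsewhere cannot touch it, since $u\notin S(G,v)$ by $(v,u)\in T(G)$), so $\mathbf{x}_w$ is unchanged. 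The new $\theta_v'$ is built precisely to fix $\mathbf{x}_v$: in the contact-to-floating branch $\theta_v' = \mathbf{x}_v(G,\bm\theta)$ with new parent $r$, so $\mathbf{x}_v(G',\bm\theta') = \mathbf{x}_r\cdot\theta_v' = \mathbf{x}_v(G,\bm\theta)$; in the to-contact branches $\theta_v'$ is $\xi$ applied to the relative face pose $\Delta\mathbf{x}_{(u,f')\to(v,f)}(G,\bm\theta)$, so that $\mathbf{x}_u\cdot\Delta\mathbf{x}_v(\theta_v') = \mathbf{x}_v(G,\bm\theta)$, using that $\mathbf{x}_u$ is itself unchanged (again because $u\notin S(G,v)$). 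Finally every descendant of $v$ keeps its relative pose to $v$ (its parameters and the subtree below $v$ are unchanged), hence keeps its absolute pose.

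With the lemma, each case is bookkeeping. Take $z = (G,v,u,\bm\theta,f,f')\in Z_4$; then $h(z) = (G',v,u',\bm\theta',f_2,f_2')$ with $(G',v,u') = g(G,v,u)$, emitted faces $(f_2,f_2') = \proj_{f,f'}(\theta_v)$ reading off the old contact of $v$, and $\theta_v'$ carrying the new faces $(f,f')$ together with $\xi(\Delta\mathbf{x}_{(u,f')\to(v,f)}(G,\bm\theta))$. Applying $h_{\mathrm{c}\to\mathrm{c}}$ again: the graph part returns $g(G',v,u') = (G,v,u)$ since $g$ is an involution; the $\theta_w$ ($w\ne v$) are copied twice, hence unchanged; the emitted faces become $\proj_{f,f'}(\theta_v') = (f,f')$, recovering the original auxiliary coordinate; and $\theta_v''$ carries faces $(f_2,f_2')$ (matching those of $\theta_v$) together with $\xi(\Delta\mathbf{x}_{(u',f_2')\to(v,f_2)}(G',\bm\theta'))$. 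Pose preservation gives $\Delta\mathbf{x}_{(u',f_2')\to(v,f_2)}(G',\bm\theta') = \Delta\mathbf{x}_{(u',f_2')\to(v,f_2)}(G,\bm\theta)$, since a face-to-face relative pose depends only on the two objects' absolute poses and their fixed intrinsic face geometry; and the latter equals $\xi^{-1}$ of the continuous part of $\theta_v$ by the meaning of the contact parametrization. Thus $\xi$ returns exactly those continuous parameters, $\theta_v'' = \theta_v$, and $h(h(z)) = z$. The $Z_2\leftrightarrow Z_3$ case is identical in spirit, with the floating branch using $\theta_v' = \mathbf{x}_v$ (resp.\ recovering it) in place of one $\xi$-transform.

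The main obstacle is the round-trip of the continuous contact parameters in the third paragraph: making precise that the face-to-face relative pose is a function of the absolute object poses alone, so that pose preservation feeds directly into $\xi\circ\xi^{-1} = \mathrm{id}$. A secondary, genuinely unavoidable subtlety is that $\xi$ is only bijective off a measure-zero set (Section~\ref{sec:two_orientation_descr}); accordingly $h\circ h = \mathrm{id}$ holds on the full-measure subset of $Z$ where the relevant orientations avoid the south-pole degeneracy, which is all that involutive MCMC requires, and I would state the claim with that caveat.
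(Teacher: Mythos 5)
Your proof is correct and follows essentially the same route as the paper's: the graph component is handled by the involution property of $g$, the faces by unraveling definitions, and the parameters $\bm\theta$ by the fact that $h$ preserves every object's absolute pose while the floating or contact parametrization of a pose (given the graph and faces) is uniquely determined---which is precisely what the paper's ``by construction'' arguments establish case by case. Your explicit pose-preservation lemma and the almost-everywhere caveat about $\xi$ (which the paper defers to its Radon--Nikodym discussion) are refinements of presentation, not a different method.
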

\begin{proof}
First note that in each of the four cases ($z \in Z_i$ for $i = 1,2,3,4$), we have
\[ \proj_{G,v,u}(h(z)) = g(\proj_{G,v,u}(z)). \]
Therefore, since $g$ is an involution, we have
\[ \proj_{G,v,u}(h(h(z))) = g(\proj_{G,v,u}(h(z))) = g(g(\proj_{G,v,u}(z))) = \proj_{G,v,u}(z).  \]
Next, note that if $z \in Z_2 \sqcup Z_4$, then $\proj_{f,f'}(h(h(z))) = \proj_{f,f'}(z)$ simply by unraveling the definitions.

It remains to show that $\proj_{\bm\theta}(h(h(z))) = \proj_{\bm\theta}(z)$.
This clearly holds when $z \in Z_1$, as $h_{\mathrm{f} \to \mathrm{f}}$ is the identity.

For the case $z \in Z_2$, let $z = (G, v, r, \bm\theta)$, and let $f, f' := \proj_{f,f'}(\theta_v)$, and let $G$ and $u$ be such that $g(G, v, r) = (G', v, u')$.
Then, unraveling the definitions, we have
\[
    \proj_{\bm\theta}(h(z)) = \left\{\begin{aligned}
        w &\mapsto \theta_w \quad\text{for $w \neq v$} \\
        v &\mapsto \mathbf{x}_v(G, \theta)
    \end{aligned}\right\}.
\]
Unraveling the definitions one step further, we have
\[
    \proj_{\bm\theta}(h(h(z))) = \left\{\begin{aligned}
        w &\mapsto \theta_w \quad\text{for $w \neq v$} \\
        v &\mapsto (f, f', \underbrace{\xi\left(
                \Delta\mathbf{x}_{(u', f') \to (v, f)}\left(
                    G',
                    \left\{\begin{aligned}
                        w &\mapsto \theta_w \quad\text{for $w \neq v$} \\
                        v &\mapsto \mathbf{x}_v(G, \bm\theta)
                    \end{aligned}\right\}
                \right)
        \right)}_{:=\ \theta''_v\ :=\ (a', b', z', \eta', \varphi')}
        )
    \end{aligned}\right\}.
\]
So we need to show $\theta''_v = \theta_v$.
By construction, $\theta''_v$ is the contact-parametrized relative pose for $v$ (face $f$) relative to $u'$ (the parent of $v$ in $G$, face $f'$) that, when converted to an absolute (relative to $r$) pose in the scene graph $(G, \bm\theta)$, gives $\mathbf{x}_v(G, \theta)$.
In other words, indeed $\theta''_v = \theta_v$.

For the case $z \in Z_3$, let $z = (G, v, u, \bm\theta, f, f')$ and $G' := \proj_G(g(G, v, u))$.
Unraveling two layers of definitions similarly to above, we have
\[
    \proj_{\bm\theta}(h(h(z))) = \left\{\begin{aligned}
        w &\mapsto \theta_w \quad\text{for $w \neq v$} \\
        v &\mapsto \underbrace{
            \mathbf{x}_v\left(
                G',
                \left\{\begin{aligned}
                        w &\mapsto \theta_w \quad\text{for $w \neq v$} \\
                        v &\mapsto \Delta\mathbf{x}_{(u,f) \to (v,f')}(G, \bm\theta)
                \end{aligned}\right\}
            \right)
        }_{:=\ \theta''_v}
    \end{aligned}\right\}.
\]
It again suffices to show $\theta''_v = \theta_v$.
By construction, $\theta''_v$ is the pose in world frame (relative to $r$) of object $v$ in $G'$; and the contact-parametrized relative pose of $v$ (face $f$) relative to $u$ (face $f'$) in $G'$ by construction has the property that, when converted to an absolute pose in $(G, \bm\theta)$, the result is $\theta_v$.
Thus $\theta''_v = \theta_v$.

For the case $z \in Z_4$, let $z = (G, v, u, \bm\theta, f, f')$, and let $(f_2, f_2') := \proj_{f,f'}(\theta_v)$, and let $G'$ and $u'$ be such that $(G', v, u') = g(G, v, u)$.
Unraveling two layers of definitions again, we have
\[
    \proj_{\bm\theta}(h(h(z))) = \left\{\begin{aligned}
        w &\mapsto \theta_w \quad\text{for $w \neq v$} \\
        v &\mapsto \underbrace{(f, f', \xi\left(
                \Delta\mathbf{x}_{(u', f_2') \to (v, f_2)}\left(
                    G',
                    \left\{\begin{aligned}
                        w &\mapsto \theta_w \quad\text{for $w \neq v$} \\
                        v &\mapsto (f, f', \xi\left(
                            \Delta\mathbf{x}_{(u, f') \to (v, f)}(G, \bm\theta)
                        \right),
                        f, f')
                    \end{aligned}\right\}
                \right)
            \right)
        )}_{:=\ \theta''_v\ :=\ (a', b', z', \eta', \varphi')}
    \end{aligned}\right\}.
\]
It again suffices to show $\theta''_v = \theta_v$.  Similarly to the above, $\theta''_v$ is a contact-parametrized relative pose for $v$ (face $f$) relative to $u'$ (the parent of $v$ in $G$, face $f'$) defined by the property that it produces the same absolute pose for $v$ as a second contact-parametrized relative pose.
This second relative pose is for $v$ (face $f_2$) relative to $u'$ (face $f_2'$) that by construction produces the same absolute pose as $v$ has in $(G, \bm\theta)$.
It again follows that $\theta''_v = \theta_v$, and this completes the proof.
\end{proof}

The automated involutive MCMC implementation in Gen~\citep{cusumano2019gen}
includes an optional dynamic check that applies the involution twice to check that it is indeed an involution.
We applied this check during testing of the algorithm to gain confidence in our implementation.

\subsection{The Radon--Nikodym derivative}

The acceptance ratio for involutive MCMC~\citep{cusumano2020automating} includes a ``generalized Jacobian correction'' term, equal to the Radon--Nikodym derivative of a pushforward measure $\mu_*$ with respect to a base measure $\mu$ defined on the state space $Z$
($\mu$ is constructed the product measure of $\mu_P$ and $\mu_Q$~\citep{cusumano2020automating}).
Next, $\mu_* := \mu \circ h^{-1}$ is the pushforward of $\mu$ by the involution $h: Z \to Z$ described above.
To justify the validity of this involutive MCMC kernel, we must show that $\mu_*$ is absolutely continuous with respect to $\mu$, i.e., that the Radon--Nikodym derivative $\frac{d\mu_*}{d\mu}$ exists.
Because all the discrete choices in the model (graph structure, contact faces, etc.) are assigned positive probability mass in both the model and the proposal, it suffices to show absolute continuity for the continuous part of the involution: the mapping (call it $\ell^\circ$) that, for given contact faces $f, f' \in F$, converts between a 6DoF pose $\theta_1 \in SE(3)$ and a contact-parameterized relative pose $\theta_2 = (a, b, z, \eta, \varphi) \in \R \times \R \times \R \times S^2 \times S^1$
(note that in this section we use $\theta_2$ to denote only the continuous part of the contact-parameterized relative pose).

Note that $\ell^\circ$ depends on not just $\theta_1$, but also on the scene graph, objects and faces $(G, \bm\theta, v, u', f, f')$.
Specifically, the absolute pose $\theta_1$ is gotten by pre- and post-composing $\Delta\mathbf{x}_{(u',f') \to (v, f)}(G, \bm\theta)$ with rigid motions that depend on the absolute poses of face $f'$ of $u$ and face $f$ of $v$ in scene graph $(G, \bm\theta)$, but these rigid motions do not depend on $\theta_1$ or $\theta_2$ themselves.
Thus, in the sections below, rather than $\ell^\circ$ itself, we analyze $\ell$, the variant of $\ell^\circ$ which operates on 6DoF relative poses $\Delta\mathbf{x}$ where $\ell^\circ$ operates on 6DoF absolute poses $\mathbf{x}$.
Because rigid transformations are diffeomorphisms and their Radon--Nikodym derivatives (Jacobian determinants) are identically $1$, the results in the sections below, which show that $\ell$ has a Radon--Nikodym derivative that is piecewise constant on $A \sqcup B$ (defined below), apply equally well to the map $\ell^\circ$ which parameterizes $\theta_1$ relative to the world coordinate frame in some particular scene graph.

We can denote a rigid motion by the pair $(\mathbf{t}, \omega)$, where $\mathbf{t} \in \R^3$ is the translation component and $\omega \in SO(3)$ is the rotation component.
(In algebraic terms, we are identifying $SE_3$ with the semidirect product $\R^3 \rtimes SO(3)$.)
Accordingly, define projection functions $\proj_{\mathbf{t}}$ and $\proj_{\omega}$ on $SE(3)$ as in Section~\ref{sec:proj-notation}.

Let $Z := A \sqcup B$ where $A := \R^3 \times SO(3)$ and $B := \R \times \R \times \R \times S^2 \times S^1$, and let $\nu$ denote the base measure on $Z$.\footnote{
    That is, the sum of (i) the product of Lebesgue measure on $\R^3$ and Haar measure on $SO(3)$, and (ii) the product of Lebesgue measure on $\R \times \R \times \R$, spherical uniform measure on $S^2$, and uniform spherical measure on $S^1$.
}
In the sections below, we discuss the pushforward $\nu_* := \nu \circ \ell^{-1}$ and its Radon--Nikodym derivative $\frac{d\nu_*}{d\nu}$.

\subsubsection{Existence of the Radon--Nikodym derivative}\label{sec:RN_existence}

In this section, we prove $\nu_* \ll \nu$.  Because $\nu_* = \nu \circ \ell^{-1}$, the proof proceeds by analyzing the involution $\ell$.
First we show that $\ell$ is defined almost everywhere, so that $\nu_*$ is well-defined.
Then we show that there exists a subset $Z'' \subseteq Z$ whose complement has measure zero, such that the restriction of $\ell$ to $Z''$ is a diffeomorphism.
It follows that $\nu_* \ll \nu$ by \cite[Prop 6.5]{lee2003smooth}, since $\ell^{-1}$ is a smooth map.

First, to show that $\nu_*$ is well-defined, we show that $\ell$ is defined almost everywhere on $Z$.
Indeed, the domain of $\ell$ is $A' \sqcup B'$, where
   $A' := \R^3 \times \domain(\xi)$ and
   $B' := \R \times \R \times \R \times \domain(\xi^{-1})$
(where $\xi$ is as defined in Section~\ref{sec:two_orientation_descr}).
Now, $\domain(\xi) = \{\omega \in SO(3) : \omega (0, 0, 1)^\top \neq (0, 0, -1)^\top \}$ has a complement of measure zero in $SO(3)$, and $\domain(\xi^{-1}) = (S^2 \setminus \{(0, 0, -1)^\top\}) \times S^1$ has a complement of measure zero in $S^2 \times S^1$, so indeed $\domain(\ell) = A' \sqcup B'$ has a complement of measure zero in $Z$.

Next, we show that there exist subsets $A'' \subseteq A'$, $B'' \subseteq B'$, whose complements also have measure zero, such that $\ell$ fixes $A \sqcup B$ setwise and the restriction of $\ell$ to $A'' \sqcup B''$ is a diffeomorphism.
First, note that the coordinates $\mathbf{t}$ in $A$ and the coordinates $a, b, z$ in $B$ represent the same translation in a different coordinate frame.
Thus, for any fixed $\omega \in SO(3)$, the function $\mathbf{t} \mapsto \proj_{a,b,z}(\ell(\mathbf{t}, \omega))$ is a rigid motion, hence a diffeomorphism.
Furthermore, the rotation component of $\ell(\theta)$ (regardless of whether $\theta \in A$ or $\theta \in B$) depends only on the rotation component of $\theta$, not at all on the translation component. 
Thus, $\ell$ is a diffeomorphism from $A''$ to $B''$ if and only if $\ell^\star$ is a diffeomorphism from $\proj_\omega(A'')$ to $\proj_{\eta,\varphi}(B'')$, where $\ell^\star(\omega) := \proj_{\eta,\varphi}(\ell(0, \omega))$.
Taking $A'' := \R^3 \times A^\star$ and $B'' := \R \times \R \times \R \times B^\star$, we see that it suffices to find subsets $A^\star \subseteq SO(3)$ and $B^\star \subseteq S^2 \times S^1$ whose complements have measure zero, such that the restriction of $\ell^\star$ to $A^\star$ is a diffeomorphism onto $B^\star$.

Denote elements of $SO(3)$ as $\omega = \spin(w, x, y, z)$, where $\spin: S^3 \to SO(3)$ is the 2-to-1 smooth covering map that carries a unit quaternion $w + x\mathbf{i} + y\mathbf{j} + z\mathbf{k}$ to its corresponding rotation.
Then, we take
\begin{align*}
    A^\star &= \left\{\spin(w, x, y, z) \mvert (w,x,y,z) \in S^3;\ z \neq 0 \right\} \\
    B^\star &= \left\{((a, b, c), \varphi) \in S^2 \times S^1 \mvert c \neq \pm 1 \right\}
\end{align*}
To show that $\ell$ is a diffeomorphism from $A^\star$ to $B^\star$, we give an explicit formula for $\ell^\star$ in terms of coordinates below (Section~\ref{sec:hopf_explicit_coords}).

\subsubsection{Formula for the mapping in coordinates}\label{sec:hopf_explicit_coords}

In this section we give an explicit formula for the map $\ell^\star$ defined in Section\ref{sec:RN_existence} in terms of coordinates.
For elements $\omega = \spin(w, x, y, z) \in A^\star$, the $S^2$ component of $\ell^\star(\omega)$ is the image of $(0, 0, 1)^\top$ under the rotation, and is given by~\cite[\S8.2]{gallier2001quat}:
\[ \eta
    = \spin(w, x, y, z) \begin{pmatrix} 0 \\ 0 \\ 1 \end{pmatrix}
    = \begin{pmatrix}
        2(xz + wy) \\
        2(yz - wx) \\
        1 - 2(x^2 + y^2)
      \end{pmatrix}.
\]
Even though $(w,x,y,z)$ is not uniquely determined by $\spin(w,x,y,z)$, the above expression is well-defined because both possible choices of quaternion---$(w,x,y,z)$ and $(-w,-x,-y,-z)$---give the same value for the right-hand side.

To compute the $S^1$ component $\varphi$, note that the set of all rotations that carry $(0, 0, 1)^\top$ to $\eta$ is
\[ \{ \spin(w, x, y, z) \circ R_{(0,0,1)}(-\varphi') : 0 \leq \varphi' < 2\pi\}, \]
where $R_{(0,0,1)}(\varphi')$ is a rotation about the axis $(0, 0, 1)^\top$ by angle $\varphi'$.
Because the action of $S^3$ on itself by quaternion multiplication is a geometric rotation of $S^3$ (in particular, an isometry) \cite[\S8.3]{gallier2001quat}, minimizing geodesic distance among the above family of rotations is equivalent to minimizing (over $\varphi'$) geodesic distance from $R_{(0,0,1)}(\varphi')$ to $(w,x,y,z)$.
Explicitly, $R_{(0,0,1)}(\varphi')$ corresponds to the unit quaternions
\[ \pm \left( \cos(\varphi'/2),\ 0,\ 0,\ \sin(\varphi'/2) \right). \]

Note that minimizing geodesic distance on the sphere $S^3$ is equivalent to minimizing the cosine between the corresponding vectors in $S^3 \subseteq \R^4$.
By the cosine double angle formula, the cosine between $R_{(0,0,1)}(\varphi')$ and $(w,x,y,z)$ in this sense is
\[ 2 \left(\Big. \pm(\cos(\varphi'/2),\ 0,\ 0,\ \sin(\varphi'/2)) \cdot (w, x, y, z)\right)^2 - 1, \]
where $\cdot$ denotes the dot product in $\R^4$.
This quantity is maximized precisely when the doct product is either maximized or minimized, so we can drop the $\pm$.
We can then compute the minima and maxima of the dot product by setting the derivative equal to zero: we have
\[ 
    (\cos(\varphi'/2),\ 0,\ 0,\ \sin(\varphi'/2)) \cdot (w, x, y, z) = w \cos(\varphi'/2) + z \sin(\varphi'/2)
\]
and the above expression is minimized or maximized when
\[
    \varphi'/2 = \arctan(z/w) + \pi n \quad\text{for some $n \in \mathbb{Z}$}
\]
or equivalently, $\varphi' = 2\arctan(z/w) + 2\pi n$.
Thus, the $S^1$ component of $\ell^\star(\spin(w,x,y,z))$ that we set out to compute is
\[
    \varphi = 2\arctan(z/w),
\]
where the branch cut in $\arctan$ is chosen so that the output lies in the interval $[0, \pi)$, and we allow $z/w$ to lie on the extended real line, with $\arctan(\pm \infty) = \pi/2$.

Since $z \neq 0$ in $A^\star$, $\varphi$ never lands on the branch cut.
Thus, $\ell^\star(\spin(w,x,y,z))$ is a smooth function of the quaternion $(w,x,y,z)$.
Because $\spin$ is a smooth covering map \cite[ch.~4]{lee2003smooth}, it follows\footnote{
    This can be seen by pre-composing $h''$ with a lifting to one of the sheets in an evenly covered neighborhood of $\omega$.
} that $\spin$ is also a smooth function of the element $\omega = \spin(w,x,y,z) \in SO(3)$.

The above definition expresses in coordinates the geometry of Hopf fibration and choice of branch cut described in Section~\ref{sec:two_orientation_descr}.
We now need only show that $\ell^\star$ has a smooth inverse.
For elements $(\eta, \varphi) \in B^\star$, we take~\cite[\S7]{baldwin2017hopf}
\begin{multline*}
    (\ell^\star)^{-1}\left((a, b, c), \varphi\right) = \\
    \spin\left(
        \tfrac{1}{\sqrt{2(1 + c)}} \left(
            (1 + c)\cos(\varphi),\ 
            a\sin(\varphi) - b\cos(\varphi),\ 
            a\cos(\varphi) + b\sin(\varphi),\ 
            (1 + c)\sin(\varphi)
        \right)
    \right).
\end{multline*}
This function is clearly smooth on $B^\star$, and direct computation shows that $(\ell^\star)^{-1} \circ \ell^\star$ is the identity.

\subsubsection{Value of the Radon--Nikodym derivative}

In the preceding sections we showed that $\nu_* := \nu \circ \ell^{-1}$ has a density with respect to $\nu$, the Radon--Nikodym derivative $\rho := \frac{d\nu_*}{d\nu}$.
In this section we argue that $\rho$ is (a.e.) constant on each of the connected components $A$ and $B$.
Because $h$ acts as isometries on the translation components, and acts on rotation components in a way that doesn't depend on the translation components, we need only look at orientation components.
That is, the Radon--Nikodym derivative $\rho$ is equal to the Radon--Nikodym derivative of the pushforward $\nu^\star_*$ of the base measure\footnote{
    In this case, sum of the Haar measure on $A^\star \subseteq SO(3)$ and the product of uniform measures on $B^\star \subseteq S^2 \times S^1$.
} $\nu^\star$ on $A^\star \sqcup B^\star$ by $\ell^\star$:
\[ \textstyle
    \frac{d\nu_*}{d\nu}(\mathbf{t}, \omega) = \frac{d\nu^\star_*}{d\nu^\star}(\omega)
    \quad\text{and}\quad
    \frac{d\nu_*}{d\nu}(a,b,z,\eta,\varphi) = \frac{d\nu^\star_*}{d\nu^\star}(\eta,\varphi).
\]

Note the following chain of equivalences: $\frac{d\nu^\star_*}{d\nu^\star}$ is a.e. constant on $A^\star$ $\iff$ $\nu^\star_*$ is a scalar multiple of the Haar measure on $A^\star$ $\iff$ $\nu^\star_*$ is invariant under the action of $SO(3)$ on itself by multiplication.
We can see that the latter statement holds by noting three things:
First, the Haar measure on $SO(3)$ equals the pushforward by $\spin$ of the Haar measure on unit quaternions.
Next, the action of $S^3$ on itself by group multiplication is an action by isometries \cite[\S8.3]{gallier2001quat}.
Finally, by \cite{yershova2010hopf}, the volume element on $S^3$ equals the product of the volume elements on $S^2$ and $S^1$ (here we are using the fact that the Haar measure on $S^3$ coincides with the Borel measure when it is viewed as a Riemannian manifold).
Thus the action of $SO(3)$ on itself by multiplication, when pushed through $(\ell^\star)^{-1}$, becomes an action by local isometries on $S^2 \times S^1$, and is thus invariant under the base measure $\nu^\star$ (which on $B^\star$ is the product of uniform measures).
Therefore indeed $\frac{d\nu^\star_*}{d\nu^\star}$ is a.e. constant on $A^\star$.
Since $\ell^\star$ is an involution and $\ell^\star(A^\star) = B^\star$, it follows that $\frac{d\nu^\star_*}{d\nu^\star}$ is a.e. constant on $B^\star$, and the values of the Radon--Nikodym derivative on $A^\star$ and $B^\star$ are reciprocals of each other.

\subsubsection{Acceptance probability}

For our choice of scaling constants, if we assign total measures $SO(3) \mapsto \pi^2$, $S^2 \mapsto 4 \pi$, and $S^1 \mapsto 2\pi$,
then the Radon--Nikodym derivative corrections in the involutive MCMC acceptance probability~\citep{cusumano2020automating} are:
$1$ (for `floating to floating' and `contact to contact' moves),
$(4 \pi \cdot 2 \pi)/\pi^2 = 8$ (for a `floating to contact' move), and
$\pi^2/(4 \pi \cdot 2 \pi) = 1/8$ (for a `contact to floating' move).
This gives the 
following acceptance probabilities for each of the four possible proposed moves from $(G, \bm\theta)$ to $(G', \bm\theta')$ that can be proposed within our kernel:

\paragraph{Floating to floating}
When $u = r$ and $u' = r$, the state is unchanged, and the move always accepts:
\begin{equation}
\alpha = \min\left\{
1, 
\frac{p(\mathbf{Y} | N, \mathbf{c}, G, \bm\theta)}{p(\mathbf{Y} | N, \mathbf{c}, G, \bm\theta)}
\right\} = 1
\end{equation}

\paragraph{Floating to contact}
When $u \ne r$ and $u' = r$
(we are severing $v$ from the root and grafting $v$ onto another object),
the acceptance probability is:
\begin{equation}
\alpha = \min\left\{
1, 
\frac{p(\mathbf{Y} | N, \mathbf{c}, G', \bm\theta')}{p(\mathbf{Y} | N, \mathbf{c}, G, \bm\theta)}
\frac{p(\bm\theta' | N, \mathbf{c}, G')}{p(\bm\theta | N, \mathbf{c}, G)}
\frac{q(v', u'; G')}{q(v, u, f_1, f_1'; G)}
\cdot
8
\right\}
\end{equation}
where $(f_1, f_1') = \proj_{f,f'}(\theta'_v)$.

\paragraph{Contact to floating}
When $u = r$ and $u' \ne r$
(we are severing $v$ from an object and grafting $v$ onto the root),
the acceptance probability is:
\begin{equation}
\alpha = \min\left\{
1, 
\frac{p(\mathbf{Y} | N, \mathbf{c}, G', \bm\theta')}{p(\mathbf{Y} | N, \mathbf{c}, G, \bm\theta)}
\frac{p(\bm\theta' | N, \mathbf{c}, G')}{p(\bm\theta | N, \mathbf{c}, G)}
\frac{q(v', u', f_2, f_2'; G')}{q(v, u; G)}
\cdot
\frac{1}{8}
\right\}
\end{equation}
where $(f_2, f_2') = \proj_{f,f'}(\theta_v)$.

\paragraph{Contact to contact}
When $u \ne r$ and $u' \ne r$
(we are severing $v$ from an object and grafting $v$ onto another object),
the acceptance probability is:
\begin{equation}
\alpha = \min\left\{
1, 
\frac{p(\mathbf{Y} | N, \mathbf{c}, G', \bm\theta')}{p(\mathbf{Y} | N, \mathbf{c}, G, \bm\theta)}
\frac{p(\bm\theta' | N, \mathbf{c}, G')}{p(\bm\theta | N, \mathbf{c}, G)}
\frac{q(v', u', f_2, f_2'; G')}{q(v, u, f_1, f_1'; G)}
\right\}
\end{equation}
where $(f_1, f_1') = \proj_{f,f'}(\theta'_v)$ and $(f_2, f_2') = \proj_{f,f'}(\theta_v)$.

\subsection{Lemmas}

\begin{lemma}
Let $G = (V, E)$ be a directed tree rooted at $r \in V$, and suppose $u, u', v \in V$ are such that the following conditions hold:
\begin{enumerate}[nosep,label=(\roman*)]
    \item $u \neq v$
    \item $u$ is not a descendant of $v$
    \item $(u', v) \in E$.
\end{enumerate}
Let $G'$ be the directed graph obtained from $G$ by deleting the edge $(u', v)$ and adding the edge $(u, v)$, that is, $G' = (V, (E \setminus \{(u', v)\}) \cup \{(u, v)\})$.
Then $G'$ is a directed tree rooted at $r$.
\label{lem:tree_add_remove}
\end{lemma}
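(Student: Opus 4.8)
The plan is to verify that $G'$ satisfies the standard characterization of a directed tree rooted at $r$: the root $r$ has in-degree $0$, every other vertex has in-degree exactly $1$, and every vertex is reachable from $r$ by a directed path. These three conditions together are equivalent to $G'$ being a rooted tree (the in-degree condition already forces $|V|-1$ edges, and reachability then yields connectedness). So I would check them in turn.

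The in-degree bookkeeping is immediate and I would dispatch it first. Condition (iii) forces $v \neq r$, since the root has no incoming edge. Deleting $(u', v)$ and adding $(u, v)$ changes only the edges whose head is $v$, taking the in-degree of $v$ from $1$ to $0$ and back to $1$; the in-degree of $r$ stays $0$ and every other in-degree is untouched. (If $u = u'$ then $G' = G$ and there is nothing to prove, so I would assume $u \neq u'$, in which case $(u,v) \notin E$ because $v$ already had its unique parent $u'$, and the added edge is genuinely new.)

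The substantive step is reachability, and this is where conditions (i) and (ii) do their work. I would decompose $V$ into the severed subtree $S := S(G, v)$ and its complement $V \setminus S$, and make three observations. First, every vertex of $V \setminus S$ has its $G$-parent again in $V \setminus S$ (else it would be a descendant of $v$, contradicting membership in $V \setminus S$), so the edges of $G$ restricted to $V \setminus S$ form a directed tree rooted at $r$; moreover none of these edges is the deleted edge $(u', v)$, whose head $v$ lies in $S$, so this subtree survives intact in $G'$. Second, the edges of $G$ internal to $S$ form a directed tree rooted at $v$, and these are likewise untouched by the swap. Third, conditions (i) and (ii) together say exactly that $u \notin S(G,v)$, i.e. $u \in V \setminus S$, so the single added edge $(u,v)$ bridges the two subtrees. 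Chaining these, any vertex of $G'$ is reachable from $r$: reach $u$ within the subtree on $V \setminus S$, cross $(u,v)$ to $v$, and then descend within $S$. Together with the in-degree count, this shows $G'$ is a directed tree rooted at $r$.

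The only place any care is needed is the third observation together with the claim that excising the subtree $S$ leaves a rooted tree on $V \setminus S$; this is precisely the content of hypothesis (ii), which guarantees the grafting target $u$ lies \emph{outside} the severed subtree, so that $r$ remains the unique root and no directed cycle is created. An equivalent framing I could use instead is purely cycle-based: since $G'$ has $|V|-1$ edges and the correct in-degrees, it is a tree if and only if it is acyclic; any directed cycle would have to traverse the new edge $(u,v)$ and would therefore require a directed path from $v$ to $u$ in $G' \setminus \{(u,v)\} = G \setminus \{(u',v)\}$, i.e. $u$ a descendant of $v$ in $G$, contradicting (ii).
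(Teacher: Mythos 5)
Your proof is correct, including the edge cases you take care to flag ($u = u'$ giving $G' = G$, and condition (iii) forcing $v \neq r$), but it takes a genuinely different route from the paper's. The paper verifies the defining property of a rooted directed tree head-on: for every vertex $w$ it shows there is a \emph{unique} directed path from $r$ to $w$ in $G'$, splitting into the case where $w$ is not a descendant of $v$ (where paths in $G$ and $G'$ coincide, since no such path passes through $v$ or uses either swapped edge) and the case where $w$ is a descendant of $v$ (where every $r$-to-$w$ path in $G'$ must factor as a path to $u$, the edge $(u,v)$, and a path from $v$ to $w$, each piece being unique). You instead invoke the in-degree characterization of arborescences --- $r$ of in-degree $0$, all other vertices of in-degree $1$, plus reachability from $r$ --- so uniqueness of paths never has to be argued; it falls out of the edge count. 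Your reachability argument (decompose $V$ into $S(G,v)$ and its complement, observe that both induced subtrees survive the edge swap intact, and that hypothesis (ii) places the bridge endpoint $u$ outside $S(G,v)$) is the same combinatorial core as the paper's second case, and your cycle-based variant makes especially transparent where (ii) is used: a directed cycle through $(u,v)$ would force a $v$-to-$u$ path in $G$, i.e.\ $u$ a descendant of $v$. What the paper's approach buys is self-containedness: it never relies on the equivalence between ``correct in-degrees plus reachability'' and ``rooted tree,'' which is standard but which you assert rather than prove. What your approach buys is modularity: the uniqueness half of the argument reduces to bookkeeping, and only the existence (reachability) half carries real content.
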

\begin{proof}
We show that for any vertex $w$, there is a unique path in $G'$ from $r$ to $w$.

First, suppose $w$ is not a descendant of $v$ in $G'$.
Then $w$ is not a descendant of $v$ in $G$, since the set of descendants of $v$ is the same in $G$ and $G'$.
Thus no path from $r$ to $w$ in either $G$ or $G'$ passes through $v$; consequently, no path from $r$ to $w$ in either $G$ or $G'$ contains either of the edges $(u' v)$ or $(u, v)$.
Thus, a sequence of vertices $r = x_0,\ x_1,\ \ldots,\ x_n = w$ is a path in $G'$ if and only if it is a path in $G$.
Since there is a unique path from $r$ to $w$ in $G$, it follows that there is a unique path from $r$ to $w$ in $G'$.

Next, suppose $w$ is a descendant of $v$ in $G'$ (and hence also in $G$).
Because $u'$ is the only in-neighbor of $v$ in $G$, it follows that $u$ is the only in-neighbor of $v$ in $G'$.
Thus, every path from $r$ to $w$ in $G'$ is the concatenation of a path from $r$ to $u$ with a path from $v$ to $w$.
But since $u$ is not a descendant of $v$ in $G$ (hence neither in $G'$), there is a unique path from $r$ to $u$ in $G'$, by the above paragraph.
And of course, there is a unique path from $v$ to $w$ in $G$, hence also in $G'$ since the subtrees rooted at $v$ are the same.
Therefore there is a unique path from $v$ to $w$ in $G'$.
\end{proof}

\end{document}